\newtheorem{theorem}{\textbf{Theorem}}
\newtheorem{lemma}{\textbf{Lemma}}
\newtheorem{corollary}{\textbf{Corollary}}
\newtheorem{assumption}{\textbf{Assumption}}
\newtheorem{proposition}{\textbf{Proposition}}
\renewcommand{\algorithmicrequire}{\textbf{Input:}}  % Use Input in the format of Algorithm
\renewcommand{\algorithmicensure}{\textbf{Output:}} % Use Output in the format of Algorithm
\definecolor{darkgreen}{RGB}{0,200,0}
\begin{document}
%
% paper title
% Titles are generally capitalized except for words such as a, an, and, as,
% at, but, by, for, in, nor, of, on, or, the, to and up, which are usually
% not capitalized unless they are the first or last word of the title.
% Linebreaks \\ can be used within to get better formatting as desired.
% Do not put math or special symbols in the title.

\title{HASFL: Heterogeneity-aware Split Federated Learning over Edge Computing Systems}

\author{Zheng Lin, Zhe Chen~\IEEEmembership{Member,~IEEE}, Xianhao Chen,~\IEEEmembership{Member,~IEEE}, Wei Ni,~\IEEEmembership{Fellow,~IEEE}, and Yue Gao,~\IEEEmembership{Fellow,~IEEE}  
\thanks{Z. Lin and X. Chen are with the Department of Electrical and Electronic Engineering, University of Hong Kong, Pok Fu Lam, Hong Kong, China (e-mail: linzheng@eee.hku.hk; xchen@eee.hku.hk).}
\thanks{Z. Chen and Y. Gao are with the Institute of Space Internet, Fudan University, Shanghai 200438, China, and the School of Computer Science, Fudan University, Shanghai 200438, China (e-mail: zhechen@fudan.edu.cn;
 gao.yue@fudan.edu.cn).}
 \thanks{W. Ni is with Data61, CSIRO, Marsfield, NSW 2122, Australia, and the School of Computing Science and Engineering, and the University of New South Wales, Kensington, NSW 2052, Australia (e-mail:
wei.ni@ieee.org).}
\thanks{\textit{(Corresponding author: Xianhao Chen)}}
}

% note the % following the last \IEEEmembership and also \thanks - 
% these prevent an unwanted space from occurring between the last author name
% and the end of the author line. i.e., if you had this:
% 
% \author{....lastname \thanks{...} \thanks{...} }
%                     ^------------^------------^----Do not want these spaces!
%
% a space would be appended to the last name and could cause every name on that
% line to be shifted left slightly. This is one of those "LaTeX things". For
% instance, "\textbf{A} \textbf{B}" will typeset as "A B" not "AB". To get
% "AB" then you have to do: "\textbf{A}\textbf{B}"
% \thanks is no different in this regard, so shield the last } of each \thanks
% that ends a line with a % and do not let a space in before the next \thanks.
% Spaces after \IEEEmembership other than the last one are OK (and needed) as
% you are supposed to have spaces between the names. For what it is worth,
% this is a minor point as most people would not even notice if the said evil
% space somehow managed to creep in.

% The paper headers
%\markboth{Journal of \LaTeX\ Class Files,~Vol.~14, No.~8, August~2015}%
%{Shell \MakeLowercase{\textit{et al.}}: Bare Demo of IEEEtran.cls for IEEE Journals}

\markboth{Journal of \LaTeX\ Class Files,~Vol.~14, No.~8, August~2015}%
{Shell \MakeLowercase{\textit{et al.}}: Bare Advanced Demo of IEEEtran.cls for IEEE Computer Society Journals}

% make the title area

% As a general rule, do not put math, special symbols or citationss
% in the abstract or keywords.
\IEEEtitleabstractindextext{
\begin{abstract}
Split federated learning (SFL) has emerged as a promising paradigm to democratize machine learning (ML) on edge devices by enabling layer-wise model partitioning. However, existing SFL approaches suffer significantly from the straggler effect due to the heterogeneous capabilities of edge devices. To address the fundamental challenge, we propose adaptively controlling batch sizes (BSs) and model splitting (MS) for edge devices to overcome resource heterogeneity. We first derive a tight convergence bound of SFL that quantifies the impact of varied BSs and MS on learning performance. Based on the convergence bound, we propose HASFL, a heterogeneity-aware SFL framework capable of adaptively controlling BS and MS to balance communication-computing latency and training convergence in heterogeneous edge networks. 
Extensive experiments with various datasets validate the effectiveness of HASFL and demonstrate its superiority over state-of-the-art benchmarks.

\end{abstract}

% Note that keywords are not normally used for peerreview papers.
\begin{IEEEkeywords}
Federated learning, split federated learning, batch size, model splitting, mobile edge computing.
\end{IEEEkeywords}}

% make the title area
\maketitle

% To allow for easy dual compilation without having to reenter the
% abstract/keywords data, the \IEEEtitleabstractindextext text will
% not be used in maketitle, but will appear (i.e., to be "transported")
% here as \IEEEdisplaynontitleabstractindextext when compsoc mode
% is not selected <OR> if conference mode is selected - because compsoc
% conference papers position the abstract like regular (non-compsoc)
% papers do!
\IEEEdisplaynontitleabstractindextext
% \IEEEdisplaynontitleabstractindextext has no effect when using
% compsoc under a non-conference mode.

% For peer review papers, you can put extra information on the cover
% page as needed:
% \ifCLASSOPTIONpeerreview
% \begin{center} \bfseries EDICS Category: 3-BBND \end{center}
% \fi
%
% For peerreview papers, this IEEEtran command inserts a page break anddfdg
% creates the second title. It will be ignored for other modes.
\IEEEpeerreviewmaketitle

% 1.5 pages
\section{Introduction}\label{Intro}

% The exponential growth of edge devices—spanning smart cameras, wearable gadgets, and voice assistants—has triggered an unprecedented surge in data generation at the network edge. According to Statista, global data generation is expected to reach 181 zettabytes (ZB) by 2025, with a more than twofold increase to 394 ZB by 2028~\cite{Statista}. This staggering volume of generated data serves as the fuel for advancing machine learning (ML) applications, revolutionizing vital sectors such as autonomous driving, smart healthcare, and augmented reality~\cite{2022Letaief,zhu2020toward,lin2022channel,hu2024collaborative,fang2024ic3m}.

Conventional machine learning (ML) frameworks predominantly rely on centralized learning (CL), where raw data is gathered and processed at a central server for model training. However, CL is 
often impractical due to its high communication latency, increased backbone traffic, and privacy risks~\cite{liu2023optimizing,lin2025hsplitlora,deng2022actions,chen2022federated}. To address these limitations, federated learning (FL)~\cite{mcmahan2017communication,konevcny2016federated} has emerged as a promising alternative that allows participating devices to collaboratively train a shared model via exchanging model parameters (e.g., gradients) rather than raw data, thereby protecting data privacy and reducing communication costs~\cite{lin2024fedsn,hu2024accelerating}. Despite its advantage, on-device training of FL poses a significant challenge for its deployment on resource-constrained edge devices as ML models scale up~\cite{fang2024automated,lin2025leo}. Considering the fact that training costs significantly more latency and memory space than model inference, training large-sized models, such as Gemini Nano-2 with 3.25 billion parameters (3GB for 32-bit floats), is computationally prohibitive for resource-constrained edge devices~\cite{team2023gemini}.

Split learning (SL)~\cite{vepakomma2018split} has emerged as a viable solution to overcoming the weaknesses of FL, which offloads the major workload from edge devices to a more powerful central server via layer-wise model splitting, alleviating computing workload and memory requirements on edge devices~\cite{wei2025pipelining,lin2023pushing,lyu2023optimal}. 
% In SL, edge devices and a central server exchange smashed data (i.e., activations and activations' gradients) with smaller data size than raw data and entire ML model parameters, substantially reducing communication overhead. 
The original SL framework, named vanilla SL~\cite{vepakomma2018split}, employs an inter-device sequential training where the central server collaborates with one edge device after another for co-training, however, at the expense of excessive training latency. To address this, a prominent variant of SL, split federated learning (SFL)~\cite{thapa2022splitfed}, integrates the strengths of FL and SL to enable parallel training across edge devices. As shown in Fig.~\ref{HASFL}, SFL not only involves model splitting but also adheres to FL principles, requiring the fed server to periodically aggregate client sub-models from edge devices in a synchronous manner, akin to FedAvg~\cite{mcmahan2017communication}.

% SFL has garnered significant attention in both academia and industry. For instance, Huawei has identified SFL as a critical learning framework for 6G edge intelligence \cite{huawei2019}.

\begin{figure}[t!]
\centering
\includegraphics[width=8.7cm]{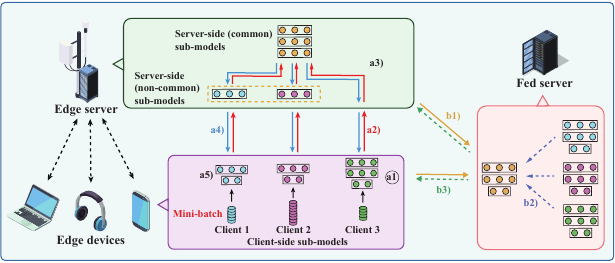}
\caption{The illustration of HASFL over edge computing systems, where a1) and a5) denote client-side forward propagation (FP) and backward pass (BP), a3) represents server-side model FP and BP, a2) and a4) are activations and activations' gradients transmissions, b1) and b3) denote sub-model uploading and downloading, and b2) represents client-side model aggregation.
}
\label{HASFL}
\end{figure}

Unfortunately, deploying SFL in heterogeneous edge systems poses significant challenges. The SFL framework enforces a synchronous aggregation protocol that necessitates every device to complete its local training before model aggregation. However, the heterogeneous computing and communication resources across edge devices lead to significantly varied training delays, causing a severe straggler effect\footnote{The straggler effect is a primary bottleneck in distributed learning, referring to the problem of edge devices with slower training speeds (i.e., stragglers) impeding the overall model training progress. Specifically, all edge devices must wait for the slowest one to complete model training before proceeding.}~\cite{chen2021distributed,xiao2023time,lee2017speeding}. Although existing studies~\cite{wu2023split,lin2024adaptsfl,lee2024game,lin2024hierarchical} attempt to mitigate this issue by determining varied cut layers for different devices, optimizing model splitting (MS) alone is insufficient as MS \textit{lacks the flexibility} of adjusting client-side workload. For instance, while splitting a convolutional neural network (CNN) at a shallow point leads to lower computing workload on clients, it also increases communication overhead due to a larger output size in early layers~\cite{lin2024adaptsfl}.

\begin{figure}[t]
\setlength\abovecaptionskip{3pt}
\centering
\subfigure[Test accuracy versus epochs.]{
    \includegraphics[width=0.449\linewidth]{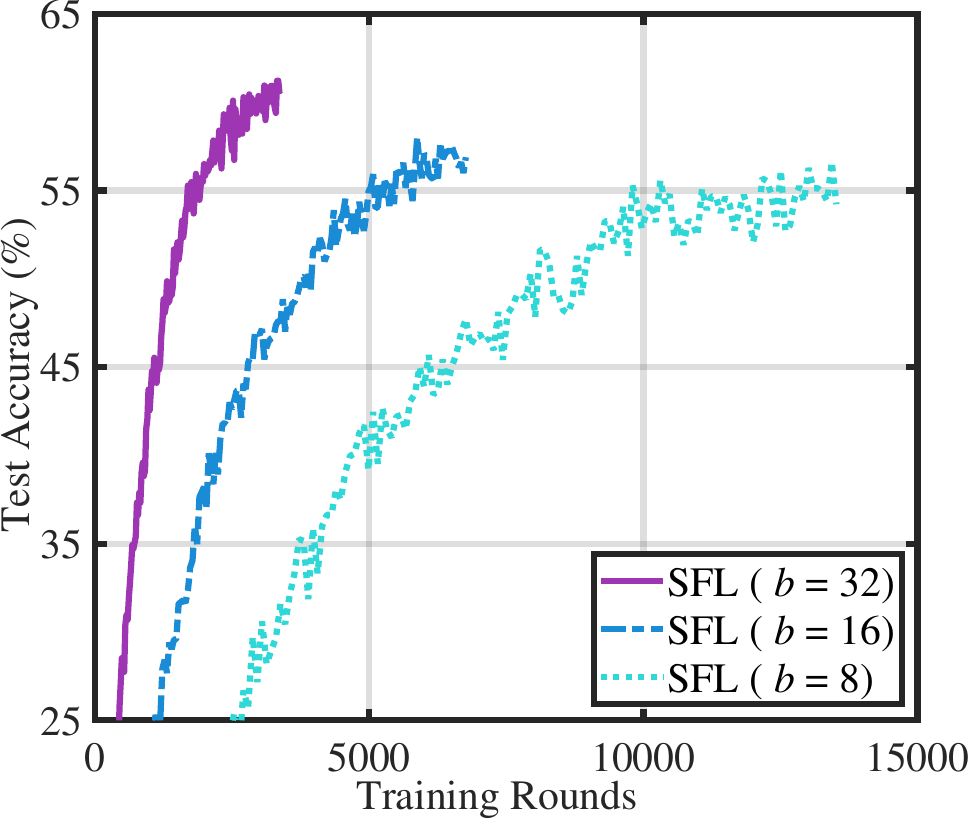}
    \label{sfig:motivation_1_different_batchsize_round}
}
% \hspace{.1cm}
\subfigure[Per-round training latency.]{
    \includegraphics[width=0.422\linewidth]{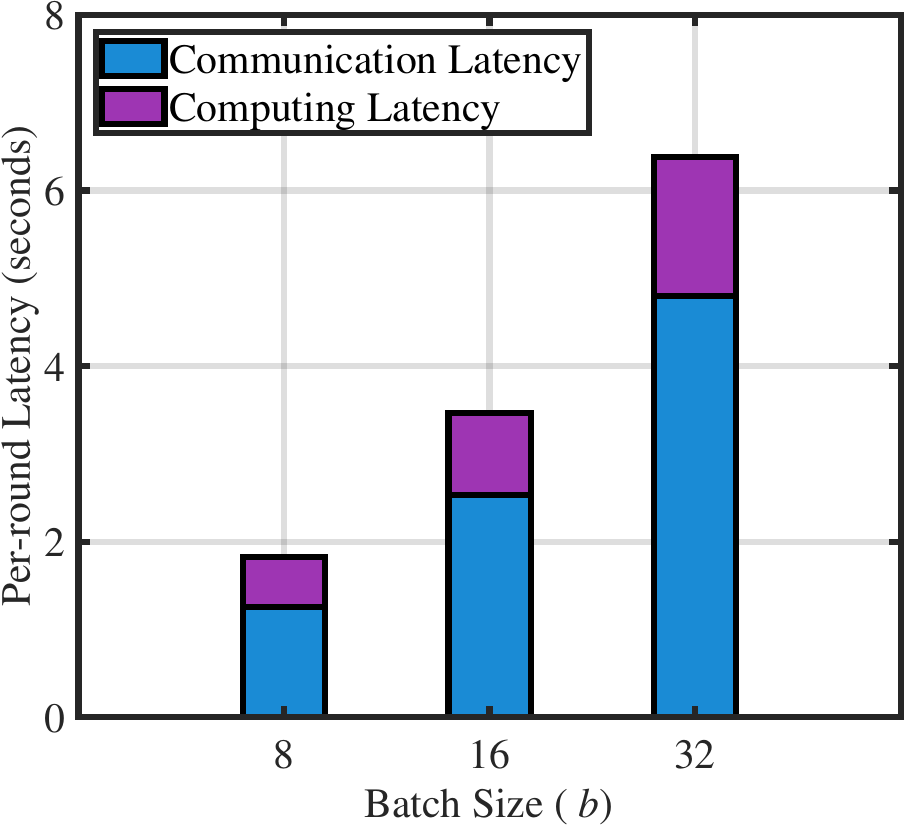}
    \label{sfig:moti_1_batchsize_per_round_latency}
}
    \caption{ The impact of BS on training performance and per-round training latency. 
    Fig.~\ref{sfig:motivation_1_different_batchsize_round} shows the performance for test accuracy versus number of epochs, and Fig.~\ref{sfig:moti_1_batchsize_per_round_latency} illustrates the effect of BS on per-round training latency. The experiment is conducted on the CIFAR-10 dataset under the non-IID setting with $L_c=8$ and $I=15$.}
    \label{fig:motivation_1}
    \vspace{-0.8em}
\end{figure}

\begin{figure}[t]
\setlength\abovecaptionskip{3pt}
\centering
\subfigure[Test accuracy versus epochs.]{
    \includegraphics[width=0.44300\linewidth]{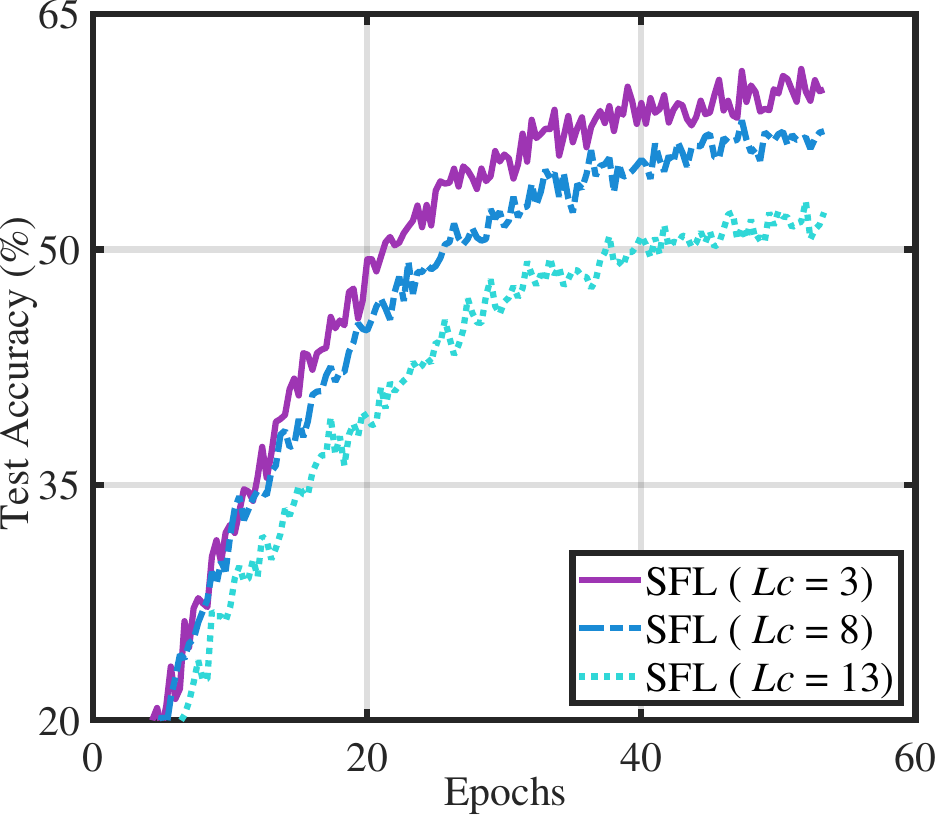}
    \label{sfig:motivation_2_different_cut}
}
\hspace{-1ex}
\subfigure[Computing and communication overhead.]{
    \includegraphics[width=0.494\linewidth]{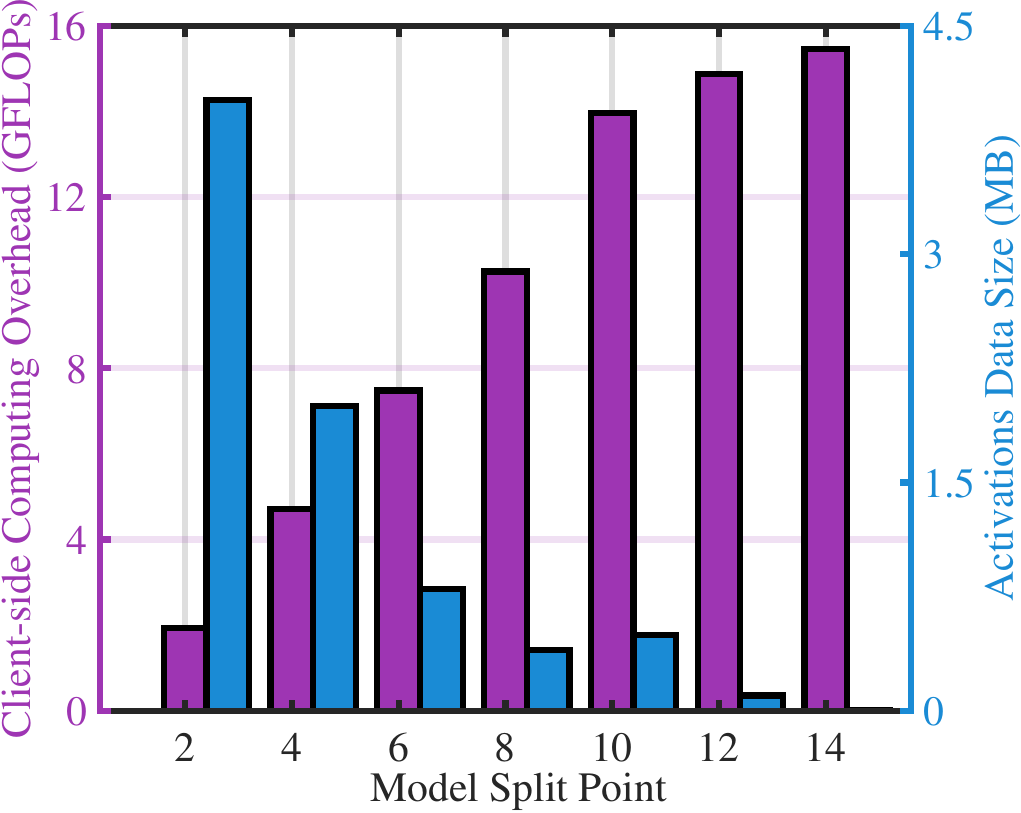}
    \label{sfig:motivation_2_cut_comput_commu}
}
    \caption{ The impact of MS on computing and communication overhead as well as training performance. 
    Fig.~\ref{sfig:motivation_2_cut_comput_commu} shows the computing and communication overhead of SFL at different model split points.  Fig.~\ref{sfig:motivation_2_different_cut} presents the performance for test accuracy versus the number of epochs. The experiment is conducted on the CIFAR-10 dataset under the non-IID setting with $b=16$ and $I=15$. }
    \label{fig:motivation_2}
\end{figure}

%we propose to jointly optimize MS and batch size (BS) to accelerate SFL. Therefore, to overcome resource heterogeneity, one should jointly optimize MS and BS to strike the optimal accuracy-latency tradeoff in SFL. 

To effectively alleviate the straggler effect, BS serves as another key control variable. Our key insight is that \textit{both} the communication and computing workload of an edge device scales approximately linearly with its BS, making BS a flexible lever for customizing client-side workload. Moreover, BS and MS significantly influence the performance of SFL not only in latency but also in training accuracy: i) \textbf{Impact of BS:} As shown in Fig.~\ref{fig:motivation_1}, increasing parameter $b$ (batch size) expedites model convergence, but this improvement comes at the cost of increased per-round communication-computing latency. In particular, smashed data (activations) communication latency is linearly proportional to BSs of edge devices -- a phenomenon that does not exist in conventional FL settings. ii) \textbf{Impact of MS:} As illustrated in Fig.~\ref{fig:motivation_2}, MS introduces a complicated trade-off among computing, communications, and training accuracy. Selecting a shallower cut layer (i.e., smaller $L_c$) enhances training accuracy by allocating a larger portion of the model as the server-side sub-model\footnote{In SFL, the aggregation frequencies of client-side and server-side sub-models may differ. Specifically, server-side sub-models co-located on edge servers can be synchronized in each training round without introducing communication overhead. In contrast, client-side sub-model aggregation incurs communication costs and is therefore performed periodically over several training rounds. A shallower cut layer indicates a larger portion of the model being aggregated at each round (i.e., more frequent model aggregation), leading to better training convergence~\cite{wang2019adaptive}.}. However, it also incurs higher communication overhead due to the larger output dimensions of shallower layers in CNNs.

% Existing SFL frameworks~\cite{lin2024hierarchical,thapa2022splitfed,wu2023split,lin2023efficient} utilize static model splitting (MS) and uniform batch size (BS), compelling resource-constrained devices to handle oversized mini-batches or process deeper models beyond their capabilities, leading to prolonged training latency. Moreover, BS and MS affect the communication overhead by determining the volume of transmitted data and the output dimension of the cut layer, respectively. Therefore, 
% To demonstrate the impact of BS, Fig.~\ref{fig:motivation_1} shows that increasing parameter $b$ (batch size) expedites model convergence but at the cost of increased per-round communication-computing latency.  

% To this end, optimizing BS and MS necessitates a comprehensive trade-off analysis to expedite SFL under heterogeneous communication and computing constraints. 

Optimizing BS and MS necessitates a fundamental trade-off analysis of SFL under resource constraints, which is not yet studied. To fill the void, in this paper, we propose HASFL, a \underline{h}eterogeneity-\underline{a}ware \underline{SFL} framework with adaptive BS and MS to jointly determine varied BS and MS under heterogeneous SFL systems. We first derive an analytical convergence upper bound that quantifies the impact of BS and MS on training convergence, and then formulate the problem to minimize total training latency by jointly optimizing BS and MS.  The challenges addressed by HASFL are twofold: i) From the convergence analysis perspective, unlike FL with varied batch sizes~\cite{liu2020adaptive,ma2021adaptive,liu2023dynamite}, the server-side sub-model training in SFL is equivalent to concatenating the entire batch from all clients, whereas the client-side sub-model training runs on its local batch. The discrepancy also makes BS and MS jointly affect training convergence, thus resulting in intricate convergence behavior. We highlight that this paper provides the first convergence bound of SFL with \textit{varied} BSs and cut layers for clients. ii) From a system optimization perspective, both BS and MS balance the tradeoff between training convergence and per-round latency, resulting in a complicated convergence-latency tradeoff.

The main contributions of this paper are summarized as follows.

%To guide the design, we derive a convergence bound of HASFL as a basis for optimizing the above decision variables. Note that the convergence analysis of HASFL differs from FL because client-side and server-side sub-models should be aggregated at different frequencies.
\begin{itemize}
\item We propose HASFL, a heterogeneity-aware SFL framework, which controls BS and MS to accelerate SFL in heterogeneous resource-constrained edge computing systems.
\item We establish the \textit{first} convergence bound that rigorously quantifies the impact of varied BSs and MS on SFL, and formulate a new problem to jointly optimize BS and MS.  
\item We decompose the new problem into two tractable sub-problems of optimizing BS and MS, and develop an efficient optimization method to solve the sub-problems alternately. 
\item  We conduct extensive simulations across various datasets to validate our analysis and demonstrate the superiority of the proposed solution over the state-of-the-art in training accuracy and convergence speed.
\end{itemize}

The remainder of this paper is organized as follows. Section~\ref{Rel_Work} elaborates on related work and Section~\ref{sec_sfl} introduces the system model and HASFL framework. Section~\ref{convergence_HASFL} provides the convergence analysis of HASFL. We formulate the optimization problem in Section~\ref{prob_formu} and offer the corresponding solution approach in Section~\ref{solu_appro}. Section~\ref{simu_results} provides the simulation results. Finally, concluding remarks are presented in Section~\ref{conclu}.

\section{Related Work}\label{Rel_Work}
% time minimization of FL
% model splitting/SL or split inference
% why lack SFL research
% 1. lack convergence analysis 2. cut layer affect convergence

Some studies have been conducted to enhance the training performance of FL via BS control~\cite{shi2022talk,ma2021adaptive,liu2020adaptive,liu2023dynamite,2018dont,yu2019computation}. Shi~\textit{et al.}~\cite{shi2022talk} proposed a dynamic BS assisted FL framework that dynamically controls BS to minimize the total latency of FL over mobile devices. Ma~\textit{et al.}~\cite{ma2021adaptive} developed an efficient FL algorithm that adaptively adjusts BS with scaled learning rate for heterogeneous devices to reduce their waiting time and improve the model convergence rate. Liu~\textit{et al.}~\cite{liu2020adaptive} jointly optimized the BS, compression ratio, and
spectrum allocation to maximize the convergence rate under the given training latency constraint. Liu~\textit{et al.}~\cite{liu2023dynamite} first established the convergence bound for training error considering heterogeneous datasets across devices and then derived closed-form solutions for co-optimized BS and aggregation frequency configuration. Smith \textit{et al.}~\cite{2018dont} and Yu \textit{et al.}~\cite{yu2019computation} empirically designed dynamic BS schemes to establish a more communication-efficient FL framework. Nevertheless, these existing BS control schemes cannot be directly applied to SFL, as aggregation discrepancies between client-side and server-side sub-models lead to distinct convergence behaviors from conventional FL. This phenomenon, as alluded to earlier, causes the intricate relationship between BS and MS in SFL.

MS heavily impacts the training latency of SL, as it determines the computing load distribution between devices and server, the size of smashed data, and model convergence. Tremendous research efforts have been made to determine the optimal model split point for SL~\cite{wu2023split,lee2024game,lin2023efficient}. Wu \textit{et al.}~\cite{wu2023split} developed a cluster-based parallel SL framework and a joint MS, device clustering, and subchannel allocation optimization algorithm to minimize the training latency under heterogeneous device capabilities and dynamic network conditions. Lee \textit{et al.}~\cite{lee2024game} proposed a hierarchical Stackelberg game framework for SFL that jointly optimizes MS and incentive mechanisms to to balance the training load between the server and clients, ensuring the necessary level of privacy for the clients. Lin \textit{et al.}~\cite{lin2023efficient} designed an efficient parallel SL framework that aggregates last-layer gradients to reduce the dimensionality of activations’ gradients and devised a joint MS, subchannel allocation, and power control to reduce the overall training latency. Unfortunately, these works fail to consider the impact of MS on training convergence.

The prior work~\cite{han2024convergence} conducted a convergence analysis for SFL, but does not consider the impact of MS, which plays a critical role in the system performance of SFL. To the best of our knowledge, our recent works~\cite{lin2024adaptsfl,lin2024hierarchical} were the first to provide a theoretical convergence analysis for SFL by considering the impact of MS. Since these studies assumed that all edge devices employ the same BS, neither convergence analysis nor system optimization of SFL can be directly applied to HASFL, both of which are non-trivial.

% Similarly, Yu \textit{et al.}~\cite{yu2019computation} first derives the convergence rate of FL, and then based on this, develops an exponentially increasing batch size strategy to reduce the global update frequency, thereby improving communication efficiency.
\section{The Heterogeneity-aware SFL Framework}\label{sec_sfl}
This section presents the system model in Section~\ref{Sys_Model} and details the proposed heterogeneity-aware SFL framework, HASFL, in Section~\ref{subsec_adapt}.

\subsection{System Model}\label{Sys_Model}

As illustrated in Fig.~\ref{HASFL}, we consider a typical edge computing scenario of HASFL, which comprises three components:

\begin{itemize}
\item \textbf{Edge devices:} We consider that each client possesses an edge device capable of executing local computations, i.e., the client-side forward propagation (FP) and backward pass (BP). The set of participating edge devices is represented as $\mathcal{N} = \left\{ {1,2,...,N} \right\}$, where $N$ is the total number of edge devices. For the $i$-th edge device , its local dataset is  ${\mathcal{D}_i} = \left\{ {{{\bf{x}}_{i,k}},{y_{i,k}}} \right\}_{k = 1}^{{|\mathcal{D}_i|}}$, where ${{\bf{x}}_{i,k}}$  and ${{{y}}_{i,k}} $ are the $k$-th input data sample and its corresponding label, respectively. The client-side sub-model of the $i$-th edge device  is represented as ${{\bf{w}}_{c,i}}$.

\item \textbf{Edge server:} The edge server is a computing entity with powerful computing capability, responsible for executing the server-side model training. The server-side sub-model of the $i$-th edge device is denoted by ${{\bf{w}}_{s,i}} = \left[{{\bf{h}}_{s}};{{\bf{h}}_{m,i}}\right]$, where ${\bf{h}}_{s}$ and ${\bf{h}}_{m,i}$ are the server-side common and server-side non-common sub-models, respectively. The common sub-model is the shared component across all clients and is synchronized in every training round. The non-common sub-model arises from the discrepancies in the number of layers maintained on the server across clients, and it is periodically aggregated with client-side sub-model every several training rounds. Moreover, the edge server is also tasked with network information collection, such as device computing capabilities and channel conditions, to implement optimization decisions of HASFL.

\item \textbf{Fed server:} The fed server takes charge of the synchronization of client-side sub-models and server-side non-common sub-models, periodically aggregating the client-side sub-models together with server-side non-common sub-models across all participating edge devices. For privacy concerns, fed and edge servers usually belong to separate parties, as possessing both the client-side sub-models and smashed data could potentially enable a malicious server to reconstruct the original user data~\cite{pasquini2021unleashing}. 
\end{itemize}

The global model is represented as ${\bf{w}} = \left[ {{{\bf{w}}_{s,i}};{{\bf{w}}_{c,i}}} \right] $. The goal of SL is to obtain the optimal global model ${{\bf{w}}^{\bf{*}}}$ by
minimizing the following finite-sum non-convex global loss function:
\begin{align}\label{minimiaze_loss_function}
\mathop {\min }\limits_{\bf{w}} f\left( {\bf{w}} \right) \buildrel \Delta \over = \mathop {\min }\limits_{\bf{w}} {\frac{{{1}}}{N}} \sum\limits_{i = 1}^N {f_i}({\bf{w}}),
\end{align}
where ${f_i}\left( {\bf{w}} \right) \buildrel \Delta \over = {\mathbb{E}_{{\xi _i} \sim {\mathcal{D}_i}}}[{F_i}\left( {{\bf{w}};{\xi _i}} \right)]$ denotes the local loss function of the $i$-th edge device  and $\xi _i$ is training randomness\footnote{Training randomness refers to the stochasticity introduced during model training, primarily due to mini-batch data sampling and random data shuffling within the local dataset~\cite{Karimired2018,yu2019linear,karimireddy2020scaffold}.} from the local dataset $\mathcal{D}_i$. For conciseness, we consider that all edge devices have local datasets with uniform size. Consistent with the standard setting~\cite{Karimired2018,yu2019linear,karimireddy2020scaffold}, it is assumed that the stochastic gradient is an unbiased estimate of the true gradient, i.e., $\mathbb{E}_{\xi_{i}^{t}\sim \mathcal{D}_{i}}[\nabla F_{i}(\mathbf{w}^{t-1}_{i};\xi_{i}^{t}) \vert \boldsymbol{\xi}^{[t-1]}] = \nabla f_{i}(\mathbf{w}^{t-1}_{i})$, where ${\nabla _{\bf{w}}}F({\bf{w}}; \xi)$ is the gradient of the function $F({\bf{w}}; \xi)$ with respect to model parameter ${\bf{w}}$ and $\boldsymbol{\xi}^{[t-1]} \overset{\Delta}{=} [\xi_{i}^{\tau}]_{i\in\{1,2,\ldots,N\}, \tau\in\{1,\ldots,t-1\}}$ represents all training randomness up to the $(t-1)$-th training round.

To solve problem~\eqref{minimiaze_loss_function}, conventional SFL frameworks employ a uniform BS and fixed MS across edge devices throughout model training. However, this scheme leads to severe straggler effects in heterogeneous edge computing systems, significantly slowing down model convergence. Motivated by this, we propose a HASFL featuring heterogeneity-aware BS and MS, as will be detailed in the following section.

\subsection{HASFL Procedure}\label{subsec_adapt}
This section presents the workflow of the proposed HASFL framework. The salient feature of HASFL lies in heterogeneity-aware BS and MS. By jointly optimizing BS and MS, HASFL can significantly reduce the overall training latency while retaining the desired learning performance.

Before model training starts, the edge server initializes the ML model. Then, the optimal BS is determined (see Section~\ref{solu_appro}), and the global model is partitioned into client-side and server-side sub-models via optimal MS (see Section~\ref{solu_appro}). Afterwards, HASFL aggregates client-side sub-models and updates BS and MS based on device computing and communication resources every $I$ training rounds. This process loops until the model converges. The training procedure of HASFL consists of two primary stages: split training and client-side model aggregation. The split training is executed in every training round, while the client-side model aggregation is triggered every $I$ training rounds. As illustrated in Fig.~\ref{HASFL}, for any training round $t$, i.e., $t \in \mathcal{R} = \left\{ {1,2,...,R} \right\}$, the training workflow of HASFL is presented as follows.

\textit{a. The split training stage:} This stage involves client-side and server-side model updates and smashed data exchange in each training round, comprising the following five steps.

\textit{a1) Client-side model forward propagation:} This step involves the parallel execution of client-side FP of edge devices. For the arbitrary $t$-th training round, each edge device $i$ randomly samples a mini-batch ${\mathcal{B}^t_i} \subseteq {\mathcal{D}_i}$ containing $b_i$ data samples from its local dataset for model training. For the $t$-th training round, the input data and corresponding labels of the mini-batch in training round $t$ are represented as ${{\bf{x}}^t_i}$ and ${{\bf{y}}^t_i}$, respectively. The client-side sub-model of the $i$-th edge device trained up to the $(t-1)$-th round, is denoted by ${\bf{w}}^{t-1}_{c,i}$. After feeding a mini-batch into the client-side sub-model, activations are generated at the cut layer. The activations of the $i$-th edge device  are expressed as
\begin{align}\label{stage_1_1}
{{\bf{a}}^t_i} = \varphi \left( {{\bf{x}}^t_i};{{\bf{w}}^{t-1}_{c,i}} \right), 
\end{align}
where $\varphi\left( {{\bf{x}};{\bf{w}}} \right)$ is the mapping function between input data ${\bf{x}}$ and its predicted value given model parameter ${\bf{w}}$.

\textit{a2) Activations transmissions:} 
After the client-side FP is completed, each edge device transmits its generated activations and corresponding labels to the edge server (usually over wireless channels).

\textit{a3) Server-side model forward propagation and backward pass:} The edge server gathers activations from participating edge devices and then feeds these activations into the server-side sub-models to perform server-side FP. For the $i$-th edge device, the predicted value of the server-side sub-model is represented as
\begin{align}\label{stage_1_3}
{\bf{\hat y}}^t_i = \varphi\left( {{\bf{a}}^t_i;{{\bf{w}}^{t-1}_{s,i}}} \right), 
\end{align}
where ${{\bf{w}}^{t-1}_{s,i}} = \left[{{\bf{ h}}^{t-1}_{s}};{{\bf{ h}}^{t-1}_{m,i}}\right]$; ${\bf{ h}}^{t-1}_{s}$ and ${\bf{h}}^{t-1}_{m,i}$ are the server-side common and server-side non-common sub-models, respectively. The predicted value and labels are utilized to compute the loss function and derive the server-side sub-model's gradients.

Due to heterogeneous cut layers across edge devices, the server-side sub-model consists of two parts: the common sub-model, shared by all edge devices, and the non-common sub-model, formed by the extra layers from edge devices with deeper cut layers. These structural disparities necessitate distinct update mechanisms for each component.

For the server-side common sub-model, the edge server updates it\footnote{{The edge server can update the model for multiple edge devices in either a serial or parallel manner, which does not impact training performance as aggregation occurs every round. Here, we formulate the parallel manner.}} every training round by averaging the common sub-model updates from all edge devices, i.e.,
\begin{align}\label{stage_5_2}
{\bf{h}}_s^t = \frac{1}{N}\sum\limits_{i = 1}^N {{\bf{h}}_{s,i}^t},
\end{align}
where ${\bf{h}}_{s,i}^t \leftarrow {\bf{h}}_{s,i}^{t - 1} - \gamma {\nabla _{{{\bf{h}}_s}}}{F_i}({\bf{h}}_{s,i}^{t - 1};\xi _i^t)$ is the server-side common sub-model of the $i$-th edge device, ${\nabla _{{{\bf{h}}_s}}}{F_i}({\bf{h}}_{s}^{t - 1};\xi _i^t)$ is the stochastic gradients of server-side common sub-models of the $i$-th edge device, and $\gamma$ represents the learning rate. Since the aggregation step in Eqn.~\eqref{stage_5_2} does not incur any communication overhead, it can be executed at every training round to expedite training convergence~\cite{lin2024adaptsfl,lin2024hierarchical}. Consequently, the update process for the server-side common sub-model is equivalent to centralized training with stochastic gradient descent.

In contrast, the server-side non-common sub-models, specific to the edge devices with deeper cut layers, are updated individually on their respective devices without cross-device aggregation. The update procedure for the server-side non-common sub-model of the $i$-th edge device is given by
\begin{align}\label{non_commen_update}
{\bf{h}}_{m,i}^t \leftarrow {\bf{h}}_{m,i}^{t - 1} - \gamma {\nabla _{{{\bf{h}}_m}}}{F_i}({\bf{h}}_{m,i}^{t - 1};\xi _i^t), 
\end{align}
where ${\nabla _{{{\bf{h}}_m}}}{F_i}({\bf{h}}_{m,i}^{t - 1};\xi _i^t)$ gives the stochastic gradients of server-side non-common sub-models of the $i$-th edge device.

\textit{a4) Activations' gradients transmissions:} After the completion of server-side BP, the edge server sends the activations' gradients to the respective edge devices.

\textit{a5) Client-side model backward pass:} Each edge device updates its client-side sub-model based on the received activations' gradients. For the $i$-th edge device, the client-side sub-model is updated through
\begin{align}\label{client_side_update}
{{\bf{w}}^t_{c,i}} \leftarrow {{\bf{w}}^{t-1}_{c,i}} - {\gamma } {\nabla _{{{\bf{w}}_c}}}{F_i}({\bf{w}}_{c,i}^{t - 1};\xi _i^t),
\end{align}
where ${\nabla _{{{\bf{w}}_c}}}{F_i}({\bf{w}}_{c,i}^{t - 1};\xi _i^t)$ represent the stochastic gradients of client-side sub-models of the $i$-th edge device.

\textit{b. The client-side model aggregation stage:} The client-side model aggregation stage aims to aggregate forged client-specific sub-models (including client-side sub-models and server-side non-common sub-models specific to each client) on the fed server. This stage is executed every $I$ training rounds and comprises the following three steps.

\textit{b1) Sub-model uploading:} In this step, edge devices and the edge server simultaneously upload the client-side sub-models and corresponding server-side non-common sub-models to the fed server, serving as the basis for the subsequent forged client-specific sub-model aggregation.

\textit{b2) Client-side model aggregation:} The fed server pairs and assembles the received client-side sub-models and server-side non-common sub-models to forge the client-specific models. Then, these forged client-specific models are aggregated across all participating devices, as given by 
\begin{align}\label{h_c_define}
{\bf{h}}_c^t = \frac{{{1}}}{N} \sum\limits_{i = 1}^N {{\bf{h}}_{c,i}^t} ,
\end{align}
where ${{\bf{h}}^t_{c,i}} = \left[{{\bf{h}}^t_{m,i}};{{\bf{ w}}^t_{c,i}}\right]$ is the forged client-specific model of the $i$-th edge device.

\textit{b3) Sub-model downloading:} After the client-side model aggregation is completed, the fed server transmits updated client-side sub-models and server-side non-common sub-models back to corresponding edge devices and the edge server, respectively.

The workflow of the HASFL training framework is presented in~\textbf{Algorithm~\ref{HASFL_procedure}}, where $E$ denotes the number of stochastic gradients involved throughout model training.

\begin{algorithm}[t]
	%\textsl{}\setstretch{1.8}
     \small
	\renewcommand{\algorithmicrequire}{\textbf{Input:}}
	\renewcommand{\algorithmicensure}{\textbf{Output:}}
	\caption{The HASFL Training Framework}\label{HASFL_procedure}
	\begin{algorithmic}[1]
 \REQUIRE   $I$, $\gamma$, $E$, ${\cal N}$ and $\cal{D}$.
		\ENSURE ${{\bf{w}}^{\bf{*}}}$. 
		\STATE Initialization: ${{\bf{w}}^{{0}}}$, ${{b}^{{0}}}$, ${{\bf{w}}_i^{{0}}} \leftarrow {{\bf{w}}^{{0}}}$, $b_i^0 \leftarrow {{b}^{{0}}}$, $\tau \leftarrow 0$, and $\rho  \leftarrow 0$.
          \WHILE{$\sum\limits_{\eta  = 0}^\tau  {\sum\limits_{i = 1}^N {b_i^\eta } }  \le E$}

          \STATE
          \STATE /** {Runs} {on} {edge} {devices} **/
          \FORALL {edge device ${i \in \,{\cal N}}$ in parallel}
            \STATE  ${{\bf{a}}^\tau_i} \leftarrow \varphi \left( {{\bf{x}}^\tau_i};{{\bf{w}}^{\tau-1}_{c,i}} \right)$
            \STATE Send $\left( {{{\bf{a}}^\tau_i},{\bf{y}}_i^\tau} \right)$ to the edge server
          \ENDFOR

	   \STATE
          \STATE /** {Runs} {on} {edge} {server} **/
          \STATE${\bf{\hat y}}^\tau_i = \varphi\left( {{\bf{a}}^\tau_i;{{\bf{w}}^{\tau-1}_{s,i}}} \right)$
          \STATE Calculate loss function value $f\left( {{\bf{w}}^{\tau - 1}} \right)$
          \STATE ${\bf{h}}_s^\tau \leftarrow {\bf{h}}_s^{\tau - 1} - \gamma \frac{1}{N}\sum\limits_{i = 1}^N {{\nabla _{{{\bf{h}}_s}}}{F_i}({\bf{h}}_s^{\tau - 1};\xi _i^\tau)} ,$ 
          \STATE ${\bf{h}}_{m,i}^\tau \leftarrow {\bf{h}}_{m,i}^{\tau - 1} - \gamma {\nabla _{{{\bf{h}}_m}}}{F_i}({\bf{h}}_{m,i}^{\tau - 1};\xi _i^\tau)$
          \STATE Send activations' gradients  to corresponding edge devices

	   \STATE
          \STATE /** {Runs} {on} {edge} {devices} **/
         \FORALL {edge device ${i \in \,{\cal N}}$ in parallel}
           \STATE ${{\bf{w}}^t_{c,i}} \leftarrow {{\bf{w}}^{\tau-1}_{c,i}} - {\gamma } {\nabla _{{{\bf{w}}_c}}}{F_i}({\bf{w}}_{c,i}^{\tau - 1};\xi _i^\tau)$
        \ENDFOR
        \STATE
        \STATE /** {Runs} {on} the {fed} {server} **/
        \IF{$\tau$ mod $I=0$}
          \STATE Determine ${{\bf b}^\tau } = [b_1^\tau ,b_2^\tau ,...,b_N^\tau ]$ and ${\boldsymbol{\mu}}^\tau$ based on~\textbf{Algorithm~\ref{BCD-based}}
          \STATE Forge client-side sub-models ${{\bf{h}}^\tau_{c,i}} = \left[{{\bf{h}}^t_{m,i}};{{\bf{ w}}^\tau_{c,i}}\right]$
          \STATE ${\bf{h}}_c^\tau = \frac{{{1}}}{N} \sum\limits_{i = 1}^N {{\bf{h}}_{c,i}^\tau}$
          \STATE ${\bf{h}}_{c,i}^\tau \leftarrow {\bf{h}}_c^\tau$
          \ELSIF{$\tau$ mod $I\ne0$}
          \STATE Determine ${{\bf b}^\tau } = [b_1^\tau ,b_2^\tau ,...,b_N^\tau ]$ based on~\textbf{Theorem~\ref{theorem2}} 
        \ENDIF
        \STATE
        \STATE Update $\tau \leftarrow \tau+1$
        
          \ENDWHILE          
	\end{algorithmic}  
\end{algorithm}

\section{Convergence Analysis of HASFL}\label{convergence_HASFL}

This section presents the convergence analysis of SFL, characterizing the effect of BS and MS on training convergence. This analysis serves as the theoretical foundation for designing an efficient iterative optimization algorithm in Section~\ref{solu_appro}.

We define the global model of the $i$-th edge device at the $t$-th training round as ${\bf{w}}_i^t = \left[ {{\bf{h}}_{s,i}^t;{\bf{h}}_{c,i}^t} \right]$, which comprises server-side common sub-model ${\bf{h}}_{s,i}^t$ and forged client-specific sub-model ${\bf{h}}_{c,i}^t$. The server-side common sub-model is updated and synchronized across all devices every training round, whereas the forged client-specific sub-models are aggregated every $I$ training rounds.

Due to non-uniform model partitioning, different edge devices retain varying numbers of layers on the client side. Since the client-side and server-side sub-models follow different aggregation schedules, with only the client-side sub-models being aggregated periodically every $I$ rounds, we define the model split point $L_c$ as the maximum depth (i.e., number of layers) of client-specific sub-models among all participating edge devices. The gradients of forged client-specific and server-side common sub-models are given by
\begin{equation}\label{g_ci}
{\bf{g}}_{c,i}^t = \left[ {{\nabla _{{{\bf{h}}_m}}}{F_i}({\bf{h}}_{m,i}^{t - 1};\xi _i^t);{\nabla _{{{\bf{w}}_c}}}{F_i}({\bf{w}}_{c,i}^{t - 1};\xi _i^t)} \right]
\end{equation}
and 
\begin{equation}
{\bf{g}}_{s,i}^t = \sum\limits_{i = 1}^N {{\nabla _{{{\bf{h}}_s}}}{F_i}({\bf{h}}_{s}^{t - 1};\xi _i^t)},
\end{equation}
where ${\bf{h}}_s^t = \frac{{{1}}}{N} \sum\limits_{i = 1}^N {{\bf{h}}_{s,i}^t}$. Therefore, the gradient of the global model of the $i$-th edge device is ${\bf{g}}_i^t = \left[ {{\bf{g}}_{s,i}^t; {\bf{g}}_{c,i}^t} \right]$.

In line with the seminal studies in distributed stochastic optimization~\cite{zhang2012communication,lian2017can,mania2017perturbed,lin2018don} that analyze convergence bound on the aggregated version of individual solutions, we conduct the convergence analysis of ${{\bf{w}}^t} = \frac{{{1}}}{N} \sum\limits_{i = 1}^N {\bf{w}}_i^t$. To analyze the convergence upper bound of HASFL, we consider the following two standard assumptions on loss functions~\cite{zhang2012communication,lian2017can,mania2017perturbed,lin2018don}:

\begin{assumption}[\textit{Smoothness of the local loss functions}]\label{asp:1}
\textit{Each local loss function ${f_i}\left( {\bf{w}} \right)$ is differentiable and $\beta $-smooth., i.e., for any $\mathbf{w}$ and ${{\bf{w'}}}$, we have}
    \begin{equation}
\left\| {\nabla_{\bf{w}} {f_i}\left( {\bf{w}} \right) - \nabla_{\bf{w}} {f_i}\left( {\bf{w'}} \right)} \right\| \le \beta \left\| {{\bf{w}} - {\bf{w'}}} \right\|,\;\forall i.
    \end{equation}
\end{assumption}

\begin{assumption}[\textit{Bounded variance and second-order moments of stochastic gradients}]\label{asp:2}
\textit{The stochastic gradients computed via mini-batch sampling have bounded variance and second-order moments for each layer: }
    \begin{equation}
\mathbb{E}_{\xi_{i}\sim \mathcal{D}_{i}} \Vert \nabla_{\bf{w}} F_{i}(\mathbf{w}; \xi_{i}) - \nabla_{\bf{w}} f_{i}(\mathbf{w})\Vert^{2} \leq  \sum\limits_{j = 1}^l  {{\frac{{\sigma _j^2}}{b}}}, \forall \mathbf{w}, \;\forall i, 
    \end{equation}
        \begin{equation}
\mathbb{E}_{\xi_{i}\sim \mathcal{D}_{i}} \Vert \nabla_{\bf{w}} F_{i}(\mathbf{w}; \xi_{i}) \Vert^{2} \leq \sum\limits_{j = 1}^l  {G _j^2}, \forall \mathbf{w}, \;\forall i ,
    \end{equation}
\textit{where $b$ is the mini-batch size, $l$ denotes the number of layers for model  $\mathbf{w}$, ${{\sigma _j^2}}$ is the constant, $\frac{{\sigma _j^2}}{b}$ and ${G_j^2}$ represent the bounded variance and second-order moments for the $j$-th layer of model $\bf w$, respectively. }
\end{assumption}

\begin{lemma} \label{lm:diff-avg-per-node}
Under {\bf{Assumption \ref{asp:1}}, \bf{Algorithm \ref{HASFL_procedure}}} ensures
\begin{align}
\mathbb{E} [\Vert {\mathbf{h}}_c^{t} - \mathbf{h}^{t}_{c,i}\Vert^{2}] \leq {\mathbbm{1}}_{\{I > 1\}} 4\gamma^{2} I^{2} \sum\limits_{j = 1}^{L_c}  {G _j^2}, \forall i, \forall t,
\end{align}
\textit{where $I$ denotes the client-side model aggregation interval, and ${\mathbf{h}}_c^{t}$ is defined in Eqn. \eqref{h_c_define} as the virtual aggregated version of client-side model at the $t$-th training round.
}
\end{lemma}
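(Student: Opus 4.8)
\emph{Proof proposal.} The plan is to control the drift of each device's forged client-specific sub-model $\mathbf{h}_{c,i}^t$ away from the virtual average $\mathbf{h}_c^t$ over one synchronization interval. First I would let $t_0$ be the most recent round at which the fed server aggregated the client-side sub-models (a nonnegative multiple of $I$), so that the models are synchronized there, $\mathbf{h}_{c,i}^{t_0}=\mathbf{h}_c^{t_0}$ for all $i$ --- this holds by the aggregation step \eqref{h_c_define} when $t_0\ge I$, and by the common initialization in Algorithm~\ref{HASFL_procedure} when $t_0=0$. Between $t_0$ and $t$ every device only runs local SGD on its own client-specific sub-model, so unrolling the update in \eqref{g_ci} gives $\mathbf{h}_{c,i}^t=\mathbf{h}_c^{t_0}-\gamma\sum_{s=t_0+1}^{t}\mathbf{g}_{c,i}^s$, and averaging over $i$ gives $\mathbf{h}_c^t=\mathbf{h}_c^{t_0}-\gamma\frac1N\sum_{j=1}^N\sum_{s=t_0+1}^{t}\mathbf{g}_{c,j}^s$. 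Subtracting, the anchor $\mathbf{h}_c^{t_0}$ cancels, leaving $\mathbf{h}_c^t-\mathbf{h}_{c,i}^t=\gamma\sum_{s=t_0+1}^{t}\big(\mathbf{g}_{c,i}^s-\frac1N\sum_{j=1}^N\mathbf{g}_{c,j}^s\big)$, a sum of at most $t-t_0\le I$ per-step consensus errors.

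Next I would take squared norms and total expectations. Because the sum has at most $I$ terms, Cauchy--Schwarz (the bound $\|\sum_{s}v_s\|^2\le (t-t_0)\sum_s\|v_s\|^2\le I\sum_s\|v_s\|^2$) gives $\mathbb{E}\|\mathbf{h}_c^t-\mathbf{h}_{c,i}^t\|^2\le\gamma^2 I\sum_{s=t_0+1}^{t}\mathbb{E}\big\|\mathbf{g}_{c,i}^s-\frac1N\sum_j\mathbf{g}_{c,j}^s\big\|^2$. For a single term, writing $\mathbf{g}_{c,i}^s-\frac1N\sum_j\mathbf{g}_{c,j}^s=\frac1N\sum_j(\mathbf{g}_{c,i}^s-\mathbf{g}_{c,j}^s)$, convexity of $\|\cdot\|^2$ followed by $\|a-b\|^2\le 2\|a\|^2+2\|b\|^2$ bounds its expectation by $2\mathbb{E}\|\mathbf{g}_{c,i}^s\|^2+\frac2N\sum_j\mathbb{E}\|\mathbf{g}_{c,j}^s\|^2$. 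Since $\mathbf{g}_{c,i}^s$ is the stochastic gradient of the (at most $L_c$-layer) client-specific sub-model, the bounded second-order moment condition in Assumption~\ref{asp:2}, applied pointwise and then averaged by the tower rule, gives $\mathbb{E}\|\mathbf{g}_{c,i}^s\|^2\le\sum_{j=1}^{L_c}G_j^2$; hence each term is at most $4\sum_{j=1}^{L_c}G_j^2$. Summing the at most $I$ terms yields $\mathbb{E}\|\mathbf{h}_c^t-\mathbf{h}_{c,i}^t\|^2\le 4\gamma^2 I^2\sum_{j=1}^{L_c}G_j^2$, as claimed.

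To close the indicator $\mathbbm{1}_{\{I>1\}}$, I would observe that when $I=1$ the fed server aggregates in every round, so $\mathbf{h}_{c,i}^t=\mathbf{h}_c^t$ identically and the left-hand side vanishes. I expect the main obstacle to be not any single inequality but the bookkeeping of the virtual/aggregated sequence: one must fix the convention that $\mathbf{h}_c^t$ is the average of the post-local-update, pre-aggregation per-device models, verify that the anchoring identity $\mathbf{h}_{c,i}^{t_0}=\mathbf{h}_c^{t_0}$ genuinely holds across all $i$ at the synchronization round, and check that $t-t_0$ is always at most $I$; a secondary care point is justifying that the per-layer moment bound of Assumption~\ref{asp:2} transfers to the concatenated non-common/client gradient $\mathbf{g}_{c,i}^s$ with exactly the $L_c$ layers that define the client-specific model.
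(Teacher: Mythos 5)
Your proposal is correct and follows essentially the same route as the paper's proof: unroll both $\mathbf{h}_{c,i}^t$ and $\mathbf{h}_c^t$ from the last aggregation round $t_0$, cancel the common anchor, bound the resulting sum of at most $I$ gradient terms via $\Vert\sum_s v_s\Vert^2\le n\sum_s\Vert v_s\Vert^2$ together with Jensen over devices, and invoke the second-moment bound of Assumption~\ref{asp:2} on the $L_c$-layer client-specific gradients. The only difference is the order in which the norm-of-sum inequalities are applied (you do the time sum first, the paper splits the average and individual trajectories first), which yields the identical constant $4\gamma^2 I^2\sum_{j=1}^{L_c}G_j^2$.
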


\begin{proof}
Consider an arbitrary training round $t\geq 1$. Let $t_{0} \leq  t$ be the most recent round at which client-side model aggregation took place, i.e., the largest $t_0$ satisfying $t_0 \bmod I = 0$ (Such $t_{0}$ must exist and $t - t_{0} \leq I$.). Recalling Eqn.~\eqref{non_commen_update}, Eqn.~\eqref{client_side_update} and Eqn.~\eqref{g_ci}, the forged
client-specific sub-model of the $i$-th edge device can be expressed as 
\begin{align}\label{lemma_eq_1}
\mathbf{h}_{c,i}^{t}=  \mathbf{h}_{c}^{t_0}  -\gamma \!\!\sum_{\tau = t_{0}+1}^{t}\mathbf{g}_{c,i}^{\tau}.
\end{align}
{By \eqref{h_c_define},} we have
\begin{align}\label{lemma_eq_2}
{\mathbf{h}}_c^{t}  = {\mathbf{h}}_c^{t_0} -\gamma\!\!\sum_{\tau = t_{0}+1}^{t} \frac{1}{N}\sum_{i=1}^{N}\mathbf{g}_{c,i}^{\tau}.
\end{align}

Subtracting Eqn.~\eqref{lemma_eq_2} from Eqn.~\eqref{lemma_eq_1} and taking the squared norm yields:
{\begin{align*}
&\mathbb{E} [\Vert \mathbf{h}^{t}_{c} - {\mathbf{h}}^{t}_{c,i} \Vert^{2}] \\=&  {\mathbbm{1}}_{\{I > 1\}} \mathbb{E} [\Vert \gamma\!\! \sum_{\tau = t_{0}+1}^{t} \frac{1}{N}\sum_{i=1}^{N}\mathbf{g}_{c,i}^{\tau} -\gamma\!\! \sum_{\tau = t_{0}+1}^{t}\!\!\!\!\mathbf{g}_{c,i}^{\tau} \Vert^{2}]\\
=& {\mathbbm{1}}_{\{I > 1\}} \gamma^{2} \mathbb{E}  [\Vert \sum_{\tau = t_{0}+1}^{t} \frac{1}{N}\sum_{i=1}^{N}\mathbf{g}_{c,i}^{\tau} -\!\!\sum_{\tau = t_{0}+1}^{t}\!\!\!\!\mathbf{g}_{c,i}^{\tau}\Vert^{2}]\\
\overset{(a)}{\leq}& {\mathbbm{1}}_{\{I > 1\}} 2\gamma^{2} \mathbb{E} [\Vert \sum_{\tau = t_{0}+1}^{t} \!\frac{1}{N}\sum_{i=1}^{N}\mathbf{g}_{c,i}^{\tau}\Vert^{2} + \Vert \!\!\!\sum_{\tau = t_{0}+1}^{t}\!\!\!\!\mathbf{g}_{c,i}^{\tau}\Vert^{2} ]\\
\overset{(b)}{\leq}& {\mathbbm{1}}_{\{I > 1\}} 2\gamma^{2} (t-t_{0}) \mathbb{E} [ \sum_{\tau = t_{0}+1}^{t}\!\Vert  \frac{1}{N}\sum_{i=1}^{N}\mathbf{g}_{c,i}^{\tau}\Vert^{2} + \!\!\!\!\sum_{\tau = t_{0}+1}^{t}\!\!\Vert \mathbf{g}_{c,i}^{\tau}\Vert^{2}]\\
\overset{(c)}{\leq}& {\mathbbm{1}}_{\{I > 1\}} 2\gamma^{2} (t-t_{0}) \mathbb{E} [\sum_{\tau = t_{0}+1}^{t} (\frac{1}{N}\sum_{i=1}^{N}\Vert \mathbf{g}_{c,i}^{\tau}\Vert^{2}) +\!\!\sum_{\tau = t_{0}+1}^{t}\!\!\!\Vert \mathbf{g}_{c,i}^{\tau}\Vert^{2}]\\
 \overset{(d)}{\leq} & {\mathbbm{1}}_{\{I > 1\}} 4\gamma^{2} I^{2} \sum\limits_{j = 1}^{L_c}  {G _j^2},
\end{align*}}%
where (a)-(c) follow by the inequality $\Vert \sum_{i=1}^{n} \mathbf{z}_{i}\Vert^{2} \leq n \sum_{i=1}^{n} \Vert \mathbf{z}_{i}\Vert^{2}$ for any vectors $\mathbf{z}_{i}$ and any positive integer $n$ (using $n=2$ in (a), $n=t-t_0$ in (b), and $n=N$ in (c)); and (d) follows from {\bf Assumption \ref{asp:2}}.
\end{proof}

\begin{theorem}\label{theorem1}
Under {\bf{Assumption \ref{asp:1}}} and {\bf{Assumption \ref{asp:2}}}, if $0 < \gamma \leq \frac{1}{\beta}$ in {\bf{Algorithm \ref{HASFL_procedure}}}, then for any total number of training rounds $R\geq 1$, we have 
{ \begin{equation}\label{convergence_bound}
\begin{aligned}
&\frac{1}{R}\sum\limits_{t = 1}^R \mathbb{E}  [{\Vert\nabla _{\bf{w}}}f({{\bf{w}}^{t - 1}})\Vert{^2}] \\ & \le \frac{2\vartheta}{{\gamma R}} \!+\! \frac{{\beta \gamma \sum\limits_{i = 1}^N {\sum\limits_{j = 1}^L {\frac{{\sigma _j^2}}{{{b_i}}}} }  } } {N^2}
 + {\mathbbm{1}}_{\{I > 1\}} 4{\beta ^2}{\gamma ^2}{I^2}\sum\limits_{j = 1}^{{L_c}}\! {G_j^2} ,
\end{aligned}
% \frac{1}{T} \sum_{t=1}^{T} \mathbb{E} [\Vert \nabla_{\bf{w}} f({\mathbf{w}}^{t-1})\Vert^{2}] \leq \frac{2}{\gamma T} \left(f({\mathbf{w}}^{0}) -f^{\ast}\right) +\frac{\beta\gamma \sum\limits_{j = 1}^{L_c}  {G _j^2} }{N}  \nonumber + 4\beta^2\gamma^{2} I^{2} L_c G^{2}
\end{equation}}%
where $\vartheta  = f({{\bf{w}}^0}) - {f^ * }$, $b_i$ denotes the batch size of the $i$-th edge device, $L$ and $f^{\ast}$ represent the total number of global model layers and global minimum value of problem~\eqref{minimiaze_loss_function}, respectively. 
\end{theorem}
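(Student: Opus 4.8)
The plan is to run the standard descent-lemma-and-telescoping argument of federated/local SGD, taking care of the two coordinate blocks (server-side common versus forged client-specific) that evolve under different aggregation schedules. First I would observe that the averaged iterate $\mathbf{w}^{t} = \frac{1}{N}\sum_{i=1}^{N}\mathbf{w}_i^{t} = [\mathbf{h}_s^{t};\mathbf{h}_c^{t}]$ satisfies the single recursion $\mathbf{w}^{t} = \mathbf{w}^{t-1} - \gamma \mathbf{G}^{t}$ with $\mathbf{G}^{t} = \frac{1}{N}\sum_{i=1}^{N}\bar{\mathbf{g}}_i^{t}$ and $\bar{\mathbf{g}}_i^{t} = [\nabla_{\mathbf{h}_s}F_i(\mathbf{h}_s^{t-1};\xi_i^{t});\mathbf{g}_{c,i}^{t}]$; the point is that since $\mathbf{h}_s$ is synchronized every round (so $\mathbf{h}_{s,i}^{t-1}=\mathbf{h}_s^{t-1}$) and device $i$'s forward/backward pass uses a single mini-batch $\xi_i^{t}$, this $\bar{\mathbf{g}}_i^{t}$ is exactly $\nabla_{\mathbf{w}}F_i(\mathbf{w}_i^{t-1};\xi_i^{t})$, a bona fide stochastic gradient of $f_i$ at $\mathbf{w}_i^{t-1}$, so Assumption~\ref{asp:2} applies to it as a whole. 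Substituting the recursion into the $\beta$-smoothness inequality and taking expectation conditioned on $\boldsymbol{\xi}^{[t-1]}$ gives $\mathbb{E}[f(\mathbf{w}^{t})\,|\,\boldsymbol{\xi}^{[t-1]}] \le f(\mathbf{w}^{t-1}) - \gamma\langle \nabla_{\mathbf{w}}f(\mathbf{w}^{t-1}),\bar{\boldsymbol{\nabla}}^{t}\rangle + \frac{\beta\gamma^{2}}{2}\mathbb{E}[\|\mathbf{G}^{t}\|^{2}\,|\,\boldsymbol{\xi}^{[t-1]}]$, where $\bar{\boldsymbol{\nabla}}^{t} := \mathbb{E}[\mathbf{G}^{t}\,|\,\boldsymbol{\xi}^{[t-1]}] = \frac{1}{N}\sum_{i=1}^{N}\nabla_{\mathbf{w}}f_i(\mathbf{w}_i^{t-1})$.

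Next I would dispatch the two right-hand-side terms. For the inner product I apply the identity $-\langle \mathbf{a},\mathbf{b}\rangle = \frac{1}{2}\|\mathbf{a}-\mathbf{b}\|^{2} - \frac{1}{2}\|\mathbf{a}\|^{2} - \frac{1}{2}\|\mathbf{b}\|^{2}$ with $\mathbf{a} = \nabla_{\mathbf{w}}f(\mathbf{w}^{t-1})$ and $\mathbf{b} = \bar{\boldsymbol{\nabla}}^{t}$. The key observation is that $\mathbf{w}^{t-1}$ and $\mathbf{w}_i^{t-1}$ agree on the server-side common block $\mathbf{h}_s^{t-1}$ and differ only on the forged client-specific block ($\mathbf{h}_c^{t-1}$ versus $\mathbf{h}_{c,i}^{t-1}$); hence by $\beta$-smoothness of each $f_i$ together with Jensen's inequality, $\|\nabla_{\mathbf{w}}f(\mathbf{w}^{t-1}) - \bar{\boldsymbol{\nabla}}^{t}\|^{2} \le \frac{\beta^{2}}{N}\sum_{i=1}^{N}\|\mathbf{h}_c^{t-1}-\mathbf{h}_{c,i}^{t-1}\|^{2}$, and Lemma~\ref{lm:diff-avg-per-node} bounds this in expectation by $\mathbbm{1}_{\{I>1\}}4\beta^{2}\gamma^{2}I^{2}\sum_{j=1}^{L_c}G_j^{2}$ — this produces the third term of the claimed bound. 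For the second moment I write $\mathbb{E}[\|\mathbf{G}^{t}\|^{2}\,|\,\boldsymbol{\xi}^{[t-1]}] = \|\bar{\boldsymbol{\nabla}}^{t}\|^{2} + \mathbb{E}[\|\mathbf{G}^{t}-\bar{\boldsymbol{\nabla}}^{t}\|^{2}\,|\,\boldsymbol{\xi}^{[t-1]}]$; the noise cross-terms vanish by the conditional independence of the $\xi_i^{t}$ across $i$, leaving $\frac{1}{N^{2}}\sum_{i=1}^{N}\mathbb{E}[\|\nabla_{\mathbf{w}}F_i(\mathbf{w}_i^{t-1};\xi_i^{t}) - \nabla_{\mathbf{w}}f_i(\mathbf{w}_i^{t-1})\|^{2}\,|\,\boldsymbol{\xi}^{[t-1]}] \le \frac{1}{N^{2}}\sum_{i=1}^{N}\sum_{j=1}^{L}\frac{\sigma_j^{2}}{b_i}$ by Assumption~\ref{asp:2} — the second term of the bound.

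Assembling these, the $\|\bar{\boldsymbol{\nabla}}^{t}\|^{2}$ contributions combine into $-\frac{\gamma}{2}(1-\beta\gamma)\|\bar{\boldsymbol{\nabla}}^{t}\|^{2} \le 0$ because $\gamma\le 1/\beta$, so they may be discarded; taking total expectations leaves $\mathbb{E}[f(\mathbf{w}^{t})] \le \mathbb{E}[f(\mathbf{w}^{t-1})] - \frac{\gamma}{2}\mathbb{E}[\|\nabla_{\mathbf{w}}f(\mathbf{w}^{t-1})\|^{2}] + 2\,\mathbbm{1}_{\{I>1\}}\beta^{2}\gamma^{3}I^{2}\sum_{j=1}^{L_c}G_j^{2} + \frac{\beta\gamma^{2}}{2N^{2}}\sum_{i=1}^{N}\sum_{j=1}^{L}\frac{\sigma_j^{2}}{b_i}$. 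Summing over $t=1,\dots,R$, telescoping the $f$-difference via $f(\mathbf{w}^{0})-\mathbb{E}[f(\mathbf{w}^{R})]\le\vartheta$, and multiplying through by $\frac{2}{\gamma R}$ gives exactly \eqref{convergence_bound}. I expect the main obstacle to be bookkeeping rather than any single hard estimate: correctly identifying $\bar{\mathbf{g}}_i^{t}$ as a stochastic gradient of $f_i$ evaluated at $\mathbf{w}_i^{t-1}$ even though $\mathbf{h}_s$ and $\mathbf{h}_{c,i}$ live on different aggregation clocks, and cleanly separating the server-side common coordinates (consistent across devices, hence contributing zero drift) from the forged client-specific coordinates (whose drift is precisely what Lemma~\ref{lm:diff-avg-per-node} controls, with the sum running only up to $L_c$). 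Everything else is the routine smoothness/variance/telescoping calculation.
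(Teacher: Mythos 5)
Your proposal is correct and follows essentially the same route as the paper's proof: the descent lemma on the averaged iterate, the inner-product identity producing a client-drift term controlled by Lemma~\ref{lm:diff-avg-per-node} (with the server-side common block contributing zero drift because it is synchronized every round), a bias--variance split of the stochastic update giving the $\sum_i\sum_j \sigma_j^2/b_i$ term, absorption of the $\Vert\frac{1}{N}\sum_i\nabla f_i(\mathbf{w}_i^{t-1})\Vert^2$ terms via $\gamma\le 1/\beta$, and telescoping. The only cosmetic difference is that you treat the full per-device stochastic gradient as a single vector where the paper decomposes it into the $\mathbf{h}_c$ and $\mathbf{h}_s$ blocks before bounding each; the resulting constants are identical.
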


\begin{proof}
We begin by analyzing the expected decrease in the global loss function across one training round. By invoking the smoothness of the loss function $f\left(  \cdot  \right)$, we have
{\begin{align}\label{eq:equal_1_total}
\mathbb{E}[ {f({{\bf{w}}^t})} ] \le& {\rm{ }}\mathbb{E}[ {f({{\bf{w}}^{t - 1}})} ] + \mathbb{E}[ {\langle {\nabla _{\bf{w}}}f({{\bf{w}}^{t - 1}}),{{\bf{w}}^t} - {{\bf{w}}^{t - 1}}\rangle } ]{\rm{  }} \nonumber \\&~+ \frac{\beta }{2}\mathbb{E}[ {{{\| {{{\bf{w}}^t} - {{\bf{w}}^{t - 1}}} \|}^2}} ].
\end{align}
}

Note that
{\begin{align}\label{eq:w_decouple}
 &\mathbb{E}[ {{{\| {{{\bf{w}}^t} - {{\bf{w}}^{t - 1}}} \|}^2}} ]\nonumber\\
 =&\mathbb{E}[ {{{\| {[ {{\bf{h}}_c^t;{\bf{h}}_s^t} ] - [ {{\bf{h}}_c^{t - 1};{\bf{h}}_s^{t - 1}} ]} \|}^2}} ]\nonumber\\ =&\mathbb{E}[ {{{\| {[ {{\bf{h}}_c^t - {\bf{h}}_c^{t - 1};{\bf{h}}_s^t - {\bf{h}}_s^{t - 1}} ]} \|}^2}} ]\nonumber\\=&\mathbb{E}[ {{{\| { {{\bf{h}}_c^t - {\bf{h}}_c^{t - 1}} } \|}^2}} ] + \mathbb{E}[ {{{\| { {{\bf{h}}_s^t - {\bf{h}}_s^{t - 1}} } \|}^2}} ],
\end{align}}%
where $\mathbb{E}[ {{{\| { {{\bf{h}}_c^t - {\bf{h}}_c^{t - 1}} } \|}^2}} ]$ can be bounded, as given by  
{\begin{align}\label{eq:wc_squre}
 &\mathbb{E}[ {{{\| { {{\bf{h}}_c^t - {\bf{h}}_c^{t - 1}} } \|}^2}} ] \overset{(a)}{=}{\gamma ^2}\mathbb{E}[||\frac{1}{N}\sum\limits_{i = 1}^N {{\bf{g}}_{c,i}^t} |{|^2}] \nonumber\\ 
 \overset{(b)}{=}& {\gamma ^2}\mathbb{E}[||\frac{1}{N}\sum\limits_{i = 1}^N {\left( {{\bf{g}}_{c,i}^t - {\nabla _{{{\bf{h}}_c}}}{f_i}\left( {{\bf{h}}_{c,i}^{t - 1}} \right)} \right)} |{|^2}]\nonumber\\ 
 +& {\gamma ^2}\mathbb{E}[||\frac{1}{N}\sum\limits_{i = 1}^N {{\nabla _{{{\bf{h}}_c}}}{f_i}\left( {{\bf{h}}_{c,i}^{t - 1}} \right)} |{|^2}] \nonumber\\
\overset{(c)}{=}& \frac{{{\gamma ^2}}}{{{N^2}}}\sum\limits_{i = 1}^N  \mathbb{E}[||{\bf{g}}_{c,i}^t - {\nabla _{{{\bf{h}}_c}}}{f_i}\left( {{\bf{h}}_{c,i}^{t - 1}} \right)|{|^2}]\nonumber\\
+& {\gamma ^2}\mathbb{E}[||\frac{1}{N}\sum\limits_{i = 1}^N {{\nabla _{{{\bf{h}}_c}}}{f_i}\left( {{\bf{h}}_{c,i}^{t - 1}} \right)} |{|^2}] \nonumber\\
\overset{(d)}{\leq }& \frac{{{\gamma ^2}}}{{{N^2}}}\sum\limits_{i = 1}^N {\sum\limits_{j = 1}^{L_c} {\frac{{\sigma _j^2}}{{{b_i}}}} }  +{\gamma ^2}\mathbb{E}[||\frac{1}{N}\sum\limits_{i = 1}^N {{\nabla _{{{\bf{h}}_c}}}{f_i}\left( {{\bf{h}}_{c,i}^{t - 1}} \right)} |{|^2}],
\end{align}}%
where (a) follows from {Eqn.~\eqref{h_c_define}} and Eqn.~\eqref{lemma_eq_1}; (b) follows that $\mathbb{E}[\mathbf{g}_{c,i}^{t}] = \nabla_{{\bf{h}}_c} f_{i}({\mathbf{h}}_{c,i}^{t-1})$ and  $\mathbb{E}[\Vert \mathbf{z} \Vert^{2}] = \mathbb{E} [ \Vert \mathbf{\mathbf{z}} - \mathbb{E}[\mathbf{z}]\Vert^{2}] + \Vert\mathbb{E}[\mathbf{z}] \Vert^{2}$ that holds for any random vector $\mathbf{z}$; (c) is because $\mathbf{g}_{c,i}^{t} - \nabla_{{\bf{h}}_c} f_{i}(\mathbf{h}_{c,i}^{t-1})$ has zero mean and is independent between edge devices; and (d) follows from {\bf Assumption \ref{asp:2}}.

Similarly, $\mathbb{E}[ {{{\| { {{\bf{h}}_s^t - {\bf{h}}_s^{t - 1}} ]} \|}^2}} $ has an upper bound:
{\begin{align}\label{eq:ws_squre}
&\mathbb{E}[ {{{\| { {{\bf{h}}_s^t - {\bf{h}}_s^{t - 1}} } \|}^2}}] \nonumber\\\le& \frac{{{\gamma ^2}}}{{{N^2}}}\sum\limits_{i = 1}^N \!\!{\sum\limits_{j = {L_c} + 1}^L \!\!{\frac{{\sigma _j^2}}{{{b_i}}}} } \! +\! {\gamma ^2}\mathbb{E}[||\frac{1}{N}\sum\limits_{i = 1}^N {{\nabla _{{{\bf{h}}_s}}}{f_i}\left( {{\bf{h}}_{s,i}^{t - 1}} \right)} |{|^2}].
\end{align}}

Substituting Eqn.~\eqref{eq:wc_squre} and Eqn.~\eqref{eq:ws_squre} into Eqn.~\eqref{eq:w_decouple} yields
{\begin{align}\label{eq:w-difference-squre}
&\mathbb{E}[\|{{\bf{w}}^t} - {{\bf{w}}^{t - 1}}\|{^2}] \nonumber \\\le& \frac{{{\gamma ^2}}}{{{N^2}}}\sum\limits_{i = 1}^N {\sum\limits_{j = 1}^L {\frac{{\sigma _j^2}}{{{b_i}}}} }  + {\gamma ^2}\mathbb{E}[||\frac{1}{N}\sum\limits_{i = 1}^N {{\nabla _{{{\bf{h}}_c}}}{f_i}\left( {{\bf{h}}_{c,i}^{t - 1}} \right)} |{|^2}] \nonumber \\
+& {\gamma ^2}\mathbb{E}[||\frac{1}{N}\sum\limits_{i = 1}^N {{\nabla _{{{\bf{h}}_s}}}{f_i}\left( {{\bf{h}}_{s,i}^{t - 1}} \right)} |{|^2}]\nonumber\\
=& \frac{{{\gamma ^2}}}{{{N^2}}}\sum\limits_{i = 1}^N {\sum\limits_{j = 1}^L {\frac{{\sigma _j^2}}{{{b_i}}}} }  + {\gamma ^2}\mathbb{E}[||\frac{1}{N}\sum\limits_{i = 1}^N {{\nabla _{\bf{w}}}{f_i}\left( {{\bf{w}}_i^{t - 1}} \right)} |{|^2}].
\end{align}}

We further note that 
{\begin{align} \label{eq:inner_product_w}
&\mathbb{E}[\langle \nabla_{\bf{w}} f({\mathbf{w}}^{t-1}), {\mathbf{w}}^{t} - {\mathbf{w}}^{t-1}\rangle] \nonumber\\
\overset{(a)}{=}& -\gamma \mathbb{E} [\langle \nabla_{\bf{w}} f({\mathbf{w}}^{t-1}), \frac{1}{N} \sum_{i=1}^{N} \mathbf{g}_{i}^{t}\rangle] \nonumber \\
\overset{(b)}{=}& -\gamma \mathbb{E}[\langle {\nabla_{\bf{w}}}f({{\bf{w}}^{t - 1}}),\frac{1}{N}\sum\limits_{i = 1}^N \nabla_{\bf{w}}  {f_i}({\bf{w}}_i^{t - 1})\rangle ]\nonumber \\
\overset{(c)}=&  - \frac{\gamma }{2}\mathbb{E}[||{\nabla _{\bf{w}}}f({{\bf{w}}^{t - 1}})|{|^2} + ||\frac{1}{N}\sum\limits_{i = 1}^N {{\nabla _{\bf{w}}}} {f_i}({\bf{w}}_{i}^{t - 1})|{|^2} \nonumber \\
&- ||{\nabla _{{{\bf{w}}}}}f({{\bf{w}}^{t - 1}}) - \frac{1}{N}\sum\limits_{i = 1}^N {{\nabla _{{{\bf{w}}}}}} {f_i}({\bf{w}}_{i}^{t - 1})|{|^2}],
\end{align}}%
where (a) follows from ${{\bf{w}}^t} = \frac{{{1}}}{N} \sum\limits_{i = 1}^N {\bf{w}}_i^t$; (c) follows from the  identity $\langle \mathbf{z}_{1}, \mathbf{z}_{2}\rangle = \frac{1}{2} \big( \Vert \mathbf{z}_{1}\Vert^{2} + \Vert \mathbf{z}_{2}\Vert^{2} - \Vert \mathbf{z}_{1} - \mathbf{z}_{2}\Vert^{2} \big)$ for any two vectors $\mathbf{z}_{1}, \mathbf{z}_{2}$ of the same length; (b) follows from 
{
	\begin{align*}
	&\mathbb{E}[\langle \nabla_{\bf{w}} f({\mathbf{w}}^{t-1}), \frac{1}{N} \sum_{i=1}^{N} \mathbf{g}_{i}^{t}\rangle] \\
	=& \mathbb{E}[\mathbb{E}[\langle \nabla_{\bf{w}} f({\mathbf{w}}^{t-1}), \frac{1}{N} \sum_{i=1}^{N} \mathbf{g}_{i}^{t}\rangle | \boldsymbol{\xi}^{[t-1]}]] \\
	=& \mathbb{E}[\langle \nabla_{\bf{w}} f({\mathbf{w}}^{t-1}), \frac{1}{N} \sum_{i=1}^{N} \mathbb{E}[\mathbf{g}_{i}^{t}| \boldsymbol{\xi}^{[t-1]}]\rangle ]\\
	 =& \mathbb{E}[\langle \nabla_{\bf{w}} f({\mathbf{w}}^{t-1}), \frac{1}{N} \sum_{i=1}^{N} \nabla_{\bf{w}} f_{i}(\mathbf{w}_{i}^{t-1})\rangle ],
	\end{align*}}%
%$\mathbb{E}[\langle \nabla f(\overline{\mathbf{x}}^{t-1}), \frac{1}{N} \sum_{i=1}^{N} \mathbf{G}_{i}^{t}\rangle] 
%= \mathbb{E}[\mathbb{E}[\langle \nabla f(\overline{\mathbf{x}}^{t-1}), \frac{1}{N} \sum_{i=1}^{N} \mathbf{G}_{i}^{t}\rangle | \boldsymbol{\zeta}^{[t-1]}]] = \mathbb{E}[\langle \nabla f(\overline{\mathbf{x}}^{t-1}), \frac{1}{N} \sum_{i=1}^{N} \mathbb{E}[\mathbf{G}_{i}^{t}| \boldsymbol{\zeta}^{[t-1]}]\rangle ] = \mathbb{E}[\langle \nabla f(\overline{\mathbf{x}}^{t-1}), \frac{1}{N} \sum_{i=1}^{N} \nabla f_{i}(\mathbf{x}_{i}^{t-1})\rangle ]$
where the first equality follows by the law of expectations, the second equality holds because ${\mathbf{w}}^{t-1}$ is determined by $\boldsymbol{\xi}^{[t-1]}= [\boldsymbol{\xi}^{1}, \ldots, \boldsymbol{\xi}^{t-1}]$, and the third equality follows from $\mathbb{E}[\mathbf{g}_{i}^{t} | \boldsymbol{\xi}^{[t-1]}] = \mathbb{E}[\nabla F_{i}(\mathbf{w}_{i}^{t-1};\xi^{t}_{i}) | \boldsymbol{\xi}^{[t-1]}] = \nabla f_{i}(\mathbf{w}_{i}^{t-1})$.

Substituting Eqn.~\eqref{eq:w-difference-squre} and Eqn.~\eqref{eq:inner_product_w} into Eqn.~\eqref{eq:equal_1_total}, we have  %{\small $\mathbb{E}[f(\overline{\mathbf{x}}^{t})] \leq \mathbb{E}[f(\overline{\mathbf{x}}^{t-1})] - \frac{\gamma - \gamma^{2}L}{2} \mathbb{E} [\Vert \frac{1}{N} \sum_{i=1}^{N} \nabla f_{i} (\mathbf{x}_{i}^{t-1})\Vert^{2}] - \frac{\gamma}{2} \mathbb{E}[\Vert \nabla f(\overline{\mathbf{x}}^{t-1})\Vert^{2}]  + \frac{\gamma}{2}  \mathbb{E}[
%\Vert \nabla f(\overline{\mathbf{x}}^{t-1}) - \frac{1}{N} \sum_{i=1}^{N} \nabla f_{i} (\mathbf{x}_{i}^{t-1}) \Vert^{2} ] + \frac{L}{2N} \gamma^{2} \sigma^{2} \overset{(a)}{\leq} \mathbb{E}[f(\overline{\mathbf{x}}^{t-1})]  - \frac{\gamma - \gamma^{2}L}{2} \mathbb{E}[\Vert \frac{1}{N} \sum_{i=1}^{N} \nabla f_{i} (\mathbf{x}_{i}^{t-1})\Vert^{2}] - \frac{\gamma}{2} \mathbb{E} [\Vert \nabla f(\overline{\mathbf{x}}^{t-1})\Vert^{2}] + 2\gamma^{3} I^{2} G^{2} L^2 + \frac{L}{2N} \gamma^{2} \sigma^{2}$}
{\begin{align} \label{eq:11}
&\mathbb{E}[f({\mathbf{w}}^{t})] \nonumber\\
\leq &\mathbb{E}[f({\mathbf{w}}^{t-1})] - \frac{\gamma - \gamma^{2}\beta}{2} \mathbb{E} [\Vert \frac{1}{N} \sum_{i=1}^{N} \nabla_{\bf{w}} f_{i} (\mathbf{w}_{i}^{t-1})\Vert^{2}] \nonumber \\
 &- \frac{\gamma}{2} \mathbb{E}[\Vert \nabla_{\bf{w}} f({\mathbf{w}}^{t-1})\Vert^{2}+ \frac{{\beta {\gamma ^2}}}{{2{N^2}}}\sum\limits_{i = 1}^N {\sum\limits_{j = 1}^L {\frac{{\sigma _j^2}}{{{b_i}}}} }   \nonumber\\ &+ \frac{\gamma}{2}\mathbb{E}[||{\nabla _{{{\bf{w}}}}}f({{\bf{w}}^{t - 1}}) -\frac{1}{N}\sum\limits_{i = 1}^N {{\nabla _{{{\bf{w}}}}}} {f_i}({\bf{w}}_i^{t - 1})|{|^2}]\nonumber \\
\overset{(a)}{\leq}& \mathbb{E}[f({\mathbf{w}}^{t-1})] - \frac{\gamma}{2} \mathbb{E}[\Vert \nabla_{\bf{w}} f({\mathbf{w}}^{t-1})\Vert^{2}+ \frac{{\beta {\gamma ^2}}}{{2{N^2}}}\sum\limits_{i = 1}^N {\sum\limits_{j = 1}^L {\frac{{\sigma _j^2}}{{{b_i}}}} }  \nonumber\\ &+ \frac{\gamma}{2}\mathbb{E}[||{\nabla _{{{\bf{h}}_c}}}f({{\bf{h}}_c^{t - 1}}) -\frac{1}{N}\sum\limits_{i = 1}^N {{\nabla _{{{\bf{h}}_c}}}} {f_i}({\bf{h}}_{c,i}^{t - 1})|{|^2}]\nonumber \\
&+\frac{\gamma}{2}\mathbb{E}[||{\nabla _{{{\bf{h}}_s}}}f({{\bf{h}}_s^{t - 1}}) - \frac{1}{N}\sum\limits_{i = 1}^N {{\nabla _{{{\bf{h}}_s}}}} {f_i}({\bf{h}}_{s,i}^{t - 1})|{|^2}] \nonumber\\
\overset{(b)}{\leq} &\mathbb{E}[f({\mathbf{w}}^{t-1})] - \frac{\gamma}{2} \mathbb{E}[\Vert \nabla_{\bf{w}} f({\mathbf{w}}^{t-1})\Vert^{2}]+ \frac{{\beta {\gamma ^2}}}{{2{N^2}}}\sum\limits_{i = 1}^N {\sum\limits_{j = 1}^L {\frac{{\sigma _j^2}}{{{b_i}}}} }  \nonumber\\ &+ {\mathbbm{1}}_{\{I > 1\}} 2\beta^2\gamma^{3} I^{2} \sum\limits_{j = 1}^{L_c} {G_j^2}, 
\end{align}}%
where (a) follows from $0 < \gamma \leq \frac{1}{\beta}$ and (b) holds because of the following inequality~\eqref{difference_wc} and~\eqref{difference_ws}
{\begin{align}\label{difference_wc}
&\mathbb{E}[ \Vert \nabla_{{{\bf{h}}_c}} f({\mathbf{h}}_c^{t-1}) - \frac{1}{N} \sum_{i=1}^{N} \nabla_{{{\bf{h}}_c}} f_{i} (\mathbf{h}_{c,i}^{t-1})\Vert^{2}] \nonumber \\
 =& \mathbb{E} [ \Vert \frac{1}{N} \sum_{i=1}^{N}\nabla_{{{\bf{h}}_c}} f_{i}({\mathbf{h}}_c^{t-1}) - \frac{1}{N} \sum_{i=1}^{N} \nabla_{{{\bf{h}}_c}} f_{i} (\mathbf{h}_{c,i}^{t-1})\Vert^{2}] \nonumber \\
=& \frac{1}{N^{2}} \mathbb{E} [\Vert \sum_{i=1}^{N} \big( \nabla_{{{\bf{h}}_c}} f_{i}({\mathbf{h}}_c^{t-1}) - \nabla f_{{\mathbf{h}}_c} (\mathbf{h}_{c,i}^{t-1}) \big)\Vert^{2}] \nonumber \\
\leq& \frac{1}{N} \mathbb{E} [ \sum_{i=1}^{N} \Vert \nabla_{{{\bf{h}}_c}} f_{i}({\mathbf{h}}_c^{t-1}) - \nabla_{{{\bf{h}}_c}} f_{i} (\mathbf{h}_{c,i}^{t-1})\Vert ^{2}] \nonumber \\
\leq& \beta^2\frac{1}{N} \sum_{i=1}^{N}\mathbb{E}[ \Vert {\mathbf{h}}_c^{t-1} - \mathbf{h}_{c,i}^{t-1}\Vert^{2}] \nonumber \\
\leq&  {\mathbbm{1}}_{\{I > 1\}} 4\beta^2\gamma^{2} I^{2} \sum\limits_{j = 1}^{L_c} {G_j^2},
\end{align}}%
where the first inequality follows from $\Vert \sum_{i=1}^{N} \mathbf{z}_{i}\Vert^{2} \leq N \sum_{i=1}^{N} \Vert \mathbf{z}_{i}\Vert^{2}$ for any vectors $\mathbf{z}_{i}$; the second inequality follows from the smoothness of $f_{i}$ under {\bf Assumption \ref{asp:1}}; and the third inequality follows from {\bf Lemma \ref{lm:diff-avg-per-node}}. Moreover, we have
{ \begin{align}\label{difference_ws}
&\mathbb{E}[ \Vert \nabla_{{{\bf{h}}_s}} f({\mathbf{h}}_s^{t-1}) - \frac{1}{N} \sum_{i=1}^{N} \nabla_{{{\bf{h}}_s}} f_{i} (\mathbf{h}_{s,i}^{t-1})\Vert^{2}] \nonumber \\
\leq& \beta^2\frac{1}{N} \sum_{i=1}^{N}\mathbb{E}[ \Vert {\mathbf{h}}_s^{t-1} - \mathbf{h}_{s,i}^{t-1}\Vert^{2}] \overset{(a)}{=}0, 
\end{align}
where (a) holds because the server-side common sub-models are aggregated in each training round (i.e., $I = 1$). Therefore, at any training round $t$, the server-side common sub-model of each edge device is the aggregated version.

Dividing both sides of Eqn.~\eqref{eq:11} by $\frac{\gamma}{2}$ and rearranging terms yields
{ \footnotesize \begin{align}
&\mathbb{E}\left [\Vert \nabla_{\bf{w}} f({\mathbf{w}}^{t-1})\Vert^{2}\right] \nonumber \\
\leq& \frac{2}{\gamma}\! \left(\mathbb{E}\left[f({\mathbf{w}}^{t-1})\right] \! - \!\mathbb{E}\left[f({\mathbf{w}}^{t})\right]\right) \!+\!\frac{\beta\gamma \!\sum\limits_{i = 1}^N {\sum\limits_{j = 1}^L {\frac{{\sigma _j^2}}{{{b_i}}}} }  }{N^2}  \nonumber \!+\! {\mathbbm{1}}_{\{I > 1\}} 4\beta^2\gamma^{2} I^{2}\! \sum\limits_{j = 1}^{L_c} \! {G_j^2}. \label{eq:pf-thm-rate-eq8}
\end{align}
}%
Summing over $t\in\{1,\ldots, R\}$ and dividing both sides by $R$ yields
{\footnotesize 
\begin{align*}
&\frac{1}{R} \sum_{t=1}^{R} \mathbb{E}\left [\Vert \nabla_{\bf{w}} f({\mathbf{w}}^{t-1})\Vert^{2}\right] \\
\leq &\frac{2}{\gamma R} \!\left(f({\mathbf{w}}^{0}\!) - \mathbb{E}\left[f({\mathbf{w}}^{R})\right]\right) \!+\!\frac{\beta\gamma\!\! \sum\limits_{i = 1}^N \!{\sum\limits_{j = 1}^L \!\!{\frac{{\sigma _j^2}}{{{b_i}}}} }  }{N^2}  \nonumber \!+\! {\mathbbm{1}}_{\{I > 1\}} 4\beta^2\gamma^{2} I^{2} \!\sum\limits_{j = 1}^{L_c} \!{G_j^2} \\
\overset{(a)}{\leq} &  \!\frac{2}{\gamma R}\! \left(f({\mathbf{w}}^{0}) \!-\!f^{\ast}\right) \!+\!\frac{\beta\gamma \sum\limits_{i = 1}^N {\sum\limits_{j = 1}^L {\frac{{\sigma _j^2}}{{{b_i}}}} }  }{N^2}  \nonumber \!+\! {\mathbbm{1}}_{\{I > 1\}} 4\beta^2\gamma^{2} I^{2} \!\sum\limits_{j = 1}^{L_c} {G_j^2},
\end{align*}}
where (a) is because $f^{\ast}$ is the global minimum value of problem {\eqref{minimiaze_loss_function}}.}
\end{proof}

% For Theorem 1, we can derive the following corollary.
Substituting Eqn.~\eqref{convergence_bound} into Eqn.~\eqref{accuracy_cons_corollary} yields {\bf{Corollary 1}}, revealing a lower bound on the number of training rounds for achieving target convergence accuracy.
\begin{corollary}\label{theorem1}
The number $R$ of {training rounds} for achieving target convergence accuracy $\varepsilon$, i.e., satisfying
{ \begin{equation}\label{accuracy_cons_corollary}
\frac{1}{R} \sum_{t=1}^{R} \mathbb{E} [\Vert \nabla_{\bf{w}} f({\mathbf{w}}^{t-1})\Vert^{2}] \le \varepsilon,
\end{equation}}
is lower bounded by
{ \begin{equation}\label{lowest_com_num}
R  \ge \frac{2 \vartheta}{{\gamma \bigg( {\varepsilon  - \frac{{\beta \gamma \sum\limits_{i = 1}^N {\sum\limits_{j = 1}^L {\frac{{\sigma _j^2}}{{{b_i}}}} }  } }{N^2} - {\mathbbm{1}}_{\{I > 1\}} 4{\beta ^2}{\gamma ^2}{I^2}\sum\limits_{j = 1}^{{L_c}} {G_j^2}} \bigg)}}.
\end{equation}}
\end{corollary}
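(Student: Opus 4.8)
The plan is to obtain the lower bound on $R$ directly from the convergence guarantee established in Theorem~1 by a one-line sufficiency argument. The key observation is that the right-hand side of Eqn.~\eqref{convergence_bound} splits into a term that decays in $R$, namely $\frac{2\vartheta}{\gamma R}$, plus two $R$-independent floor terms: the stochastic-gradient-noise term $\frac{\beta\gamma \sum_{i=1}^N \sum_{j=1}^L \sigma_j^2/b_i}{N^2}$ and the periodic-aggregation drift term $\mathbbm{1}_{\{I>1\}} 4\beta^2\gamma^2 I^2 \sum_{j=1}^{L_c} G_j^2$. Since Theorem~1 upper bounds $\frac{1}{R}\sum_{t=1}^{R}\mathbb{E}[\|\nabla_{\bf w} f({\bf w}^{t-1})\|^2]$ by this sum, a sufficient condition for the target-accuracy requirement in Eqn.~\eqref{accuracy_cons_corollary} to hold is that the entire right-hand side of Eqn.~\eqref{convergence_bound} be at most $\varepsilon$.

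First I would write this sufficient condition explicitly and isolate the $1/R$ term, obtaining $\frac{2\vartheta}{\gamma R} \le \varepsilon - \frac{\beta\gamma \sum_{i=1}^N \sum_{j=1}^L \sigma_j^2/b_i}{N^2} - \mathbbm{1}_{\{I>1\}} 4\beta^2\gamma^2 I^2 \sum_{j=1}^{L_c} G_j^2$. Provided the right-hand side of this inequality is strictly positive, I would then use the monotonicity of the map $R \mapsto 2\vartheta/(\gamma R)$ to divide through and rearrange, which yields exactly the claimed bound $R \ge \frac{2\vartheta}{\gamma(\varepsilon - \cdots)}$ in Eqn.~\eqref{lowest_com_num}. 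No additional estimates are required; the whole argument is an algebraic consequence of Eqn.~\eqref{convergence_bound}.

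The only point that warrants an explicit remark rather than any real difficulty is the positivity of the denominator: the bound is meaningful precisely when $\varepsilon > \frac{\beta\gamma \sum_{i=1}^N \sum_{j=1}^L \sigma_j^2/b_i}{N^2} + \mathbbm{1}_{\{I>1\}} 4\beta^2\gamma^2 I^2 \sum_{j=1}^{L_c} G_j^2$, i.e., when the accuracy floor imposed jointly by mini-batch gradient noise and periodic client-side aggregation lies strictly below the desired accuracy $\varepsilon$; otherwise no finite $R$ suffices and the statement is vacuous. I would state this condition alongside the corollary (and note that it can always be met by taking $\gamma$ sufficiently small, or the $b_i$ sufficiently large, so as to push both floor terms below $\varepsilon$), which also connects the bound to the subsequent batch-size and model-splitting optimization in Section~\ref{solu_appro}.
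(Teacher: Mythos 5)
Your proposal is correct and matches the paper's approach: the paper likewise obtains Eqn.~\eqref{lowest_com_num} by substituting the bound of Theorem~1 into the accuracy requirement~\eqref{accuracy_cons_corollary} and rearranging the $1/R$ term. Your explicit remark on the positivity of the denominator is a sensible addition that the paper leaves implicit.
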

{\textbf{Insight 1:}  Eqn.~\eqref{lowest_com_num} reveals that the number $R$ of training rounds for achieving target convergence accuracy $\varepsilon$ decreases with the increase of $b_i$. This is because a larger $b_i$ reduces the variance of stochastic gradients, thereby enhancing the training convergence. For a fixed number of training rounds $R$, increasing $b_i$ leads to higher converged accuracy, i.e., smaller $\varepsilon$. It is interesting to note that the BSs of clients can compensate for each other: A stronger client can take a larger BS, whereas a weaker client takes a smaller BS, thereby mitigating the straggler effect without affecting the convergence upper bound.}

{\textbf{Insight 2:}  Eqn.~\eqref{lowest_com_num} also shows that $L_c$ plays a critical role in model convergence when the client-side aggregation interval $I>1$. A smaller $L_c$ (i.e., placing more layers on the edge server) enables more frequent aggregations of a larger portion of the model for expediting convergence, whereas a larger $L_c$ retains more layers on the edge devices, slowing down model convergence. Conversely, when $I=1$, $L_c$ has no impact on convergence since all layers are aggregated synchronously.

% {\textbf{Insight 3 (Coupling between BS and MS):} Eqn.~\eqref{lowest_com_num} indicates a coupling between $b_i$ and $L_c$ in model convergence. Given a target convergence accuracy $\varepsilon$, increasing $L_c$ degrades convergence due to less frequent synchronization, but this can be compensated by increasing $b_i$ to reduce gradient variance. Conversely, for a fixed $R$, a smaller $b_i$ necessitates a shallower cut layer to ensure sufficient model synchronization.
% }

These observations align with the experimental results described in Fig.~\ref{fig:motivation_1} and Fig.~\ref{fig:motivation_2}. While a larger BS can expedite training convergence by reducing gradient variance, it also incurs higher per-round computation and communication costs, which may prolong training latency on resource-constrained edge devices. On the other hand, MS determines not only the computing workload between edge devices and the server, but also the volume of smashed data exchanged. Therefore, optimizing BS and MS, i.e., assigning different BS and MS to diverse edge devices, is essential for accelerating SFL at heterogeneous edge devices.

% decreases as $I$ decreases and the cut layer becomes shallow, indicating a faster model convergence rate. Similarly, for a given training round $R$, more frequent client-side MS and a shallower cut layer result in higher converged accuracy (i.e., smaller $\varepsilon$). These observations align with the experimental results illustrated in Fig.~\ref{fig:motivation_1} and Fig.~\ref{fig:motivation_2}. However, the improved training performance comes at the cost of increasing communication overhead, leading to longer latency per training round. Therefore, it is essential to optimize MS and client-side MS to accelerate SFL under communication and computing constraints.

\section{Problem Formulation}\label{prob_formu}

The convergence analysis quantifies the impact of BS and MS on model convergence. Building upon the convergence upper bound in Section~\ref{convergence_HASFL}, we formulate a joint BS and MS optimization problem to minimize the training latency of achieving convergence for HASFL over edge networks with heterogeneous participating edge devices. Then, we propose a heterogeneity-aware BS and MS strategy in Section~\ref{solu_appro}. For clarity, the decision variables and definitions are listed below. 
\begin{itemize}
\item ${\boldsymbol b}$: $b_i \in {\mathbb{N}^ + }$ represents the BS decision, which indicates the number of data samples utilized by the $i$-th edge device for one training round. ${{\bf b} } = [b_1 ,b_2 ,...,b_N]$ denotes the collection of BS decisions.
\item ${\boldsymbol\mu}$: $\mu_{i,j} \in \left\{ {0,1} \right\}$ is the MS decision, where $\mu_{i,j}=1$ indicates that the $j$-th neural network layer is selected as the cut layer for the $i$-th edge device, and $\mu_{i,j}=0$, otherwise.
${\boldsymbol\mu}  = \left[ {{\mu _{1,1}},{\mu _{1,2}},...,{\mu _{N,L}}} \right]$ represents the collection of MS decisions.
\end{itemize}

\subsection{Training Latency Analysis}

This section presents a detailed analysis for the training latency of HASFL. Without loss of generality, we focus on one training round for analysis. For notational brevity, the training round number index $t$ is omitted. We start by analyzing the latency of the split training stage for one training round as follows.

\textit{a1) Client-side model forward propagation latency:}
In this step, each edge device performs the client-side FP with a mini-batch sampled from its local dataset. The computing workload (in float-point operations (FLOPs)~\cite{lin2023efficient,yi2024qsfl,lin2024adaptsfl}) of the client-side FP for the $i$-th edge device per data sample is represented as $\Phi_{c,i}^{F}\! \left( \boldsymbol{\mu}  \right) = \sum\limits_{j = 1}^{L} {{\mu _{i,j}}} {\rho _j}$, where ${\rho _j}$ denotes the FP computing workload of propagating the first $j$ layer of neural network per data sample. The $i$-th edge device utilizes a mini-batch containing $b_i$ data samples to execute the client-side FP. The client-side FP latency of the $i$-th edge device is given by~\cite{wu2023split,lin2023efficient}
\begin{align}\label{client_FP_latency}
T_i^{F} = \frac{{b_i\,\Phi_{c,i}^{F}\! \left( \boldsymbol{\mu}  \right)}}{{{f_i}}} ,\;\forall i \in \mathcal{N},
\end{align}
where ${{f _i}}$ denotes the computing capability (in float-point operations per second (FLOPS)) of the $i$-th edge device.

\textit{ a2) Activation uploading latency:}
After the client-side FP is completed, each edge device transmits the activations to the edge server. Let $\Gamma_{a,i} \left( \boldsymbol{\mu}  \right) = \sum\limits_{j = 1}^{L} {{\mu _{i,j}}} {\psi _j}$ represent the data size (in bits) of activations for the $i$-th edge device, where ${\psi _j}$ denotes the data size of activations at the cut layer $j$. For the $i$-th edge device, the activation uploading latency is given by
\begin{align}\label{smashed_trans_latency}
T_{a,i}^{U} = \frac{{b_i\Gamma_{a,i} \left( \boldsymbol{\mu}  \right)}}{{r_i^{U}}},\;\forall i \in \mathcal{N},
\end{align}
where ${r_i^{U}}$ represents the uplink data rate from the $i$-th edge device to the edge server.

\textit{a3) Server-side model forward propagation and backward pass latency:}
In this step, the edge server performs server-side FP and BP with activations received from all participating edge devices. Let $\Phi _s^F\left( {\boldsymbol{b}, \boldsymbol{\mu }} \right) = \sum\limits_{i = 1}^N {\sum\limits_{j = 1}^L {b_i {\mu _{i,j}}} \left( {{\rho _L} - {\rho _j}} \right)} $ and  $\Phi _s^{B}\left( \boldsymbol{b}, \boldsymbol{\mu}  \right) = \sum\limits_{i = 1}^N {\sum\limits_{j = 1}^{L} {b_i {\mu _{i,j}}} \left( {{\varpi_L} - {\varpi _j}} \right)}$ denote the computing workload of the server-side FP and BP, respectively, where ${\varpi _j}$ denotes the BP computing workload of propagating the first $j$ layer of neural network per data sample. Thus, the server-side model FP and BP latency can be determined as
\begin{align}\label{server_FP_latency}
T_s^{F} = \frac{{\Phi _s^{F}\left( \boldsymbol{b}, \boldsymbol{\mu}  \right)}}{{{f_s}}}
\end{align}
and 
\begin{align}\label{server_BP_latency}
T_s^{B} = \frac{{\Phi _s^{B}\left( \boldsymbol{b}, \boldsymbol{\mu}  \right)}}{{{f_s}}},
\end{align}
where ${{f _s}}$ is the computing capability of the edge server.

\textit{a4) Downloading latency of activations’ gradients:}
After the completion of server-side BP, the edge server sends the activations' gradients back to the respective edge devices. Let $\Gamma_{g,i} \left( \boldsymbol{\mu}  \right) = \sum\limits_{j = 1}^{L} {{\mu _{i,j}}} {\chi _j}$ represent the data size of activations' gradients for the $i$-th edge device per data sample, where ${\chi _j}$ is the data size of activations' gradients at the cut layer $j$. Downloading latency of activations’ gradients of the $i$-th edge device can be calculated as
\begin{align}\label{downlink_latency}
T_{g,i}^D = \frac{{ {b_i } {\Gamma _{g,i}}\left( {\boldsymbol{\mu} } \right)}}{{r_i^D}},\;\forall i \in {\cal N},
\end{align}
where ${r_i^{D}}$ is the downlink data rate from edge server to the $i$-th edge device.

\textit{a5) Client-side model backward pass latency:}
In this step, each edge device executes client-side BP based on the received activations' gradients. Let $\Phi _{c,i}^{B}\left( \boldsymbol{\mu}  \right) = \sum\limits_{j = 1}^{L} {{\mu _{i,j}}} {\varpi _j}$ represent the computing workload of the client-side BP for the $i$-th edge device  per data sample. For the $i$-th edge device, the client-side BP latency can be obtained as
\begin{align}\label{client_BP_latency}
T_i^{B} = \frac{{b_i\, \Phi _{c,i}^{B}\left( \boldsymbol{\mu}  \right)}}{{{f_i}}},\;\forall i \in \mathcal{N}.
\end{align}

Next, we analyze the latency of the client-side model aggregation stage every $I$ training rounds as follows.

\textit{b1) Sub-model uploading latency:} In this step, each edge device sends its client-side sub-model to the fed server, while the edge server uploads the server-side non-common sub-models to the fed server. Let $\Lambda_{c,i} \left( \boldsymbol{\mu}  \right) = \sum\limits_{j = 1}^{L} {{\mu _{i,j}}} {\delta _j}$ denote the data size of client-side sub-model for the $i$-th edge device, where ${\delta _j}$ is the data size of client-side sub-model with the cut layer $j$. The total data size of the exchanged server-side non-common sub-models is denoted by ${\Lambda _{s}}\left( {\boldsymbol{\mu }} \right) = N \mathop {\max }\limits_i \left\{ { \sum\limits_{j = 1}^L {{\mu _{i,j}}} {\delta _j}} \right\} - \sum\limits_{i = 1}^N \sum\limits_{j = 1}^L {{\mu _{i,j}}} {\delta _j}$.  The uploading latency of client-side sub-model for the $i$-th edge device and server-side non-common sub-models are expressed as 
\begin{align}\label{smashed_trans_latency}
T_{c,i}^U = \frac{{{\Lambda _{c,i}}\left( {\boldsymbol{\mu }} \right)}}{{r_{i,f}^U}}, \;\forall i \in \mathcal{N}
\end{align}
and 
\begin{align}\label{smashed_trans_latency}
T_s^U = \frac{{{\Lambda _{s}}\left( {\boldsymbol{\mu }} \right)}}{{r_{s,f}}},
\end{align}
where $r_{i,f}^U$ and $r_{s,f}$ denote the uplink data rate for transferring the sub-model from the $i$-th edge device and edge server to the fed server, respectively.

\textit{b2) Client-side model aggregation:} The fed server assembles the received client-side sub-models and server-side non-common sub-models into client-specific models and then aggregates them. For simplicity, the
sub-model aggregation latency for this part is ignored, as it is
negligible compared to the latency from other stages~\cite{shi2020joint,xia2021federated}.

\textit{b3) Sub-model downloading latency:} After completing the client-side model aggregation, the fed server sends the updated client-side sub-models to the respective edge devices, along with the server-side non-common sub-models to the edge server. The downloading latency of the client-side sub-model for the $i$-th edge device and server-side non-common sub-models are given by
\begin{align}\label{smashed_trans_latency}
T_{c,i}^D = \frac{{{\Lambda _{c,i}}\left( {\boldsymbol{\mu }} \right)}}{{r_{i,f}^D}}, \;\forall i \in \mathcal{N}
\end{align}
and 
\begin{align}\label{smashed_trans_latency}
T_s^D = \frac{{{\Lambda _{s}}\left( {\boldsymbol{\mu }} \right)}}{{r_{f,s}}},
\end{align}
where $r_{i,f}^D$ and $r_{f,s}$ represent the downlink data rates for transmitting sub-models from the fed server to the $i$-th edge device and edge server, respectively.

\subsection{Problem Formulation}

\begin{figure}[t]
\vspace{-.5ex}
\setlength\abovecaptionskip{3pt}
\centering
\subfigure[Split training.]{
    \includegraphics[height=2.7cm,width=4.3cm]{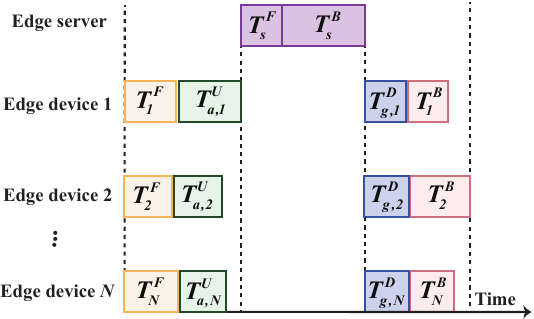}
    \label{sfig:split_training}
}
% \hspace{.1cm}
\subfigure[Client-side model aggregatio.]{
    \includegraphics[height=2.7cm,width=3cm]{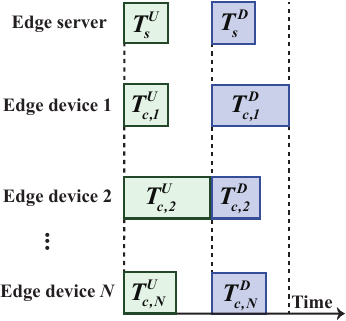}
    \label{sfig:client_side_aggrega}
}
    \caption{An illustration of split training and client-side model aggregation stages.}
    \label{fig:training_2_stage}
    \vspace{-2ex}
\end{figure}

As illustrated in Fig.~\ref{fig:training_2_stage}, the per-round latency of split training can be expressed as
{{ \begin{align}\label{split_training_latency}
T_S\!\left( {\boldsymbol{b}, \boldsymbol{\mu} } \right) &\!=\! \!{\mathop {\max }\limits_i\! \left\{ {T_i^F \!+\! T_{a,i}^U} \right\} \!\!+\! T_s^F \!+\! T_s^B \!+\! \mathop {\max }\limits_i \left\{ {T_{g,i}^D + T_i^B} \right\}},
\end{align}}}
and client-side model aggregation latency is calculated as
% since split learning is conducted in each training round and client-side model aggregation occurs every $I$ training rounds, the total latency for $I$ training rounds
{{ \begin{align}\label{model_aggregation_latenvy}
T_A\!\left( {\boldsymbol{b}, \boldsymbol{\mu} } \right) \!= \mathop {\max }\limits_i \left\{ {T_{c,i}^U,T_s^U} \right\} + \mathop {\max }\limits_i \left\{ {T_{c,i}^D,T_s^D} \right\}.
\end{align}}}

Since split training is conducted in each training round and client-side model aggregation occurs every $I$ training rounds, the total latency for $R$ training rounds is given by
{{ \begin{align}\label{total_latency}
T\!\left( {\boldsymbol{b}, \boldsymbol{\mu} } \right) =RT_S\!\left( {\boldsymbol{b}, \boldsymbol{\mu} } \right) + {\left\lfloor {\frac{R}{{{I}}}} \right\rfloor T_{A}\!\left( {\boldsymbol{b}, \boldsymbol{\mu} } \right)}. 
\end{align}}}

As alluded in Section~\ref{Intro}, BS balances the tradeoffs between per-round latency and training convergence, while MS significantly impacts communication-computing overhead and model convergence. Consequently, a joint optimization of BS and MS is crucial for expediting the training process. To this end, we formulate the following problem to minimize the training latency for model convergence:
\begin{align}\label{time_minimize_problem}
\mathcal{P}:&\mathop {{\rm{min}}}\limits_{{\boldsymbol{b}},{\boldsymbol{\mu}}}  T\!\left( {\boldsymbol{b}, \boldsymbol{\mu} } \right)  \\
&\mathrm{s.t.} ~\mathrm{C1:}~\frac{1}{R}\sum\limits_{t = 1}^R \mathbb{E} [{\Vert\nabla _{\bf{w}}}f({{\bf{w}}^{t - 1}})\Vert{^2}] \le \varepsilon, \nonumber \\
&~\mathrm{C2:}~{\mu _{i,j}} \in \left\{ {0,1} \right\}, \quad\forall i \in \mathcal{N}, \; j = 1,2,...,L, \nonumber\\
&~\mathrm{C3:}~\sum\limits_{j = 1}^{L} {{\mu _{i,j}}}  = 1, \quad \forall i \in \mathcal{N}, \nonumber \\
&~\mathrm{C4:}~\sum\limits_{j = 1}^L {{{\mu _{i,j}}} } \big( {{b_i {\widetilde \psi} _j} +{b_i {\widetilde \chi} _j}+{{\widetilde \vartheta} _j} + {\delta _j}} \big) < {\upsilon_{c, i}}, \quad \forall i \in \mathcal{N}, \nonumber\\
&~\mathrm{C5:}~ b_i \geq 1, b_i \in \mathbb{Z}, \quad \forall i \in \mathcal{N}, \nonumber 
\end{align}
where ${{\widetilde \psi} _j} = \sum\limits_{k = 1}^j {{\psi _k}} $ and ${{\widetilde \chi} _l} = \sum\limits_{k = 1}^j {{{\chi}_k}}$ are the cumulative sum of data size (in bits) of activations and activations' gradients for the first $j$ layers of the neural network per data sample; ${{\widetilde \vartheta} _l}$ represents the accumulated data size of the optimizer state for the first $j$ layers of the neural network, depending on the choice of the optimizer (e.g. Momentum, SGD, and Adam); $\upsilon_{c, i}$ denotes the memory limitation of the $i$-th edge device. Constrain $\mathrm C1$ guarantees model convergence loss; $\mathrm C2$ and $\mathrm C3$ ensure
the uniqueness of the cut layer for each edge device, indicating that the global model is partitioned into client-side and server-side sub-models;  $\mathrm C4$ represents the memory limitation of edge devices~\cite{yeung2021horus}; $\mathrm C5$ ensures that the BS decision variable is a positive integer.

Problem~\eqref{time_minimize_problem} is a combinatorial problem with a non-convex mixed-integer non-linear objective function, which is generally NP-hard. Consequently, obtaining an optimal solution via polynomial-time algorithms is impractical.

\section{Solution Approach}\label{solu_appro}

In this section, we design an efficient iterative algorithm by decoupling problem~\eqref{time_minimize_problem} into two tractable BS and MS sub-problems and then solving them alternately.

We first derive the explicit expression of $R$ by utilizing {\bf{Corollary 1}}. Given that $R$ is proportional to the objective function, the objective function is minimized if and only if~\eqref{lowest_com_num} holds as an equality. In general, $R$ is substantially larger than $I$, and therefore we can approximate $\left\lfloor \frac{R}{I} \right\rfloor \approx \frac{R}{I}$ in Eqn.~\eqref{total_latency}. By substituting~\eqref{lowest_com_num} into~\eqref{total_latency}, problem~\eqref{time_minimize_problem} can be converted into
\begin{align}\label{problem_1}
\mathcal{P'}:&\mathop {{\rm{min}}}\limits_{{\boldsymbol{b}}, {\boldsymbol{\mu}}} \Theta ( \boldsymbol{b}, {\boldsymbol{\mu }} )   \\
&\mathrm{s.t.} ~\mathrm{C2}-\mathrm{C4}, \nonumber
\end{align}
where 
\begin{align} \label{optim_objective_1}
\small
\Theta ( {\boldsymbol{b}, \boldsymbol{\mu }} )= \frac{{2\vartheta \left( {{T_S}\left( {\boldsymbol{b}, \boldsymbol{\mu }} \right) + \frac{{{T_A}\left( {\boldsymbol{b}, \boldsymbol{\mu }} \right)}}{I}} \right)}} {{\gamma \bigg( \!{\varepsilon  - \frac{{\beta \gamma \!\sum\limits_{i = 1}^N \!{\sum\limits_{j = 1}^L {\frac{{\sigma _j^2}}{{{b_i}}}} }  } }{N^2}\! - \!{\mathbbm{1}}_{\{I > 1\}} 4{\beta ^2}{\gamma ^2}{I^2}\!\sum\limits_{j = 1}^{{L_c}}\! {G_j^2}}\! \bigg)}}. 
\end{align}

Problem~\eqref{problem_1} is a mixed-integer non-linear programming problem, and the non-convexity and non-smoothness of the objective function renders it still intractable. To tackle this issue, we define a set of constants ${\bf{\widetilde G}} = \left[ {{{\widetilde G}^2_1},{{\widetilde G}^2_2},...,{{\widetilde G}^2_L}} \right]$, where ${\widetilde G}^2_j$ denotes the cumulative sum of the bounded second-order moments for the first $j$ layers of neural network, defined as ${{\widetilde G}^2_j} = \sum\limits_{k = 1}^j {G_k^2}$. Thus, the term $\sum\limits_{j = 1}^{{L_c}} {G_j^2}$ in the objective function~\eqref{optim_objective_1} can be reformulated as $\mathop {\max }\limits_i \left\{ {\sum\limits_{j = 1}^L {{\mu _{i,j}}{\widetilde G}_j^2} } \right\} $. To linearize the objective function, we introduce a set of auxiliary variables ${\bf{T}} = \left[ {{T_1},{T_2},...,{T_6}} \right]$, i.e., $\mathop {\max }\limits_i \big \{ {\sum\limits_{j = 1}^L {{\mu _{i,j}}\widetilde G_j^2} } \big\} \le {T_1}$, $\mathop {\max }\limits_i \big\{ {\sum\limits_{j = 1}^L {{\mu _{i,j}}} {\delta _j}} \big\} \le {T_2}$, $\mathop {\max }\limits_i \left\{ {T_i^F + T_{a,i}^U} \right\} \le {T_3}$, $\mathop {\max }\limits_i \left\{ {T_{g,i}^D + T_i^B} \right\} \le {T_4}$, $\mathop {\max }\limits_i \left\{ {T_{c,i}^U,T_s^U} \right\} \le {T_5}$, and $\mathop {\max }\limits_i \left\{ {T_{c,i}^D,T_s^D} \right\} \le {T_6}$. As a consequence, problem~\eqref{problem_1} can be transformed into
{ \begin{align}\label{problem_2}
\mathcal{P''}:&\mathop {{\rm{min}}}\limits_{{\boldsymbol{b}}, {\boldsymbol{\mu}}, \bf{T}, } \Theta' ( {\boldsymbol{b}}, {\boldsymbol{\mu }}, \bf{T} )   \\
&\mathrm{s.t.} ~\mathrm{C2}-\mathrm{C5}, \nonumber\\
&~\mathrm{R1:}~\sum\limits_{j = 1}^L {{\mu _{i,j}}\widetilde G_j^2}  \le {T_1}, \quad\forall i \in \mathcal{N}, \nonumber\\
&~\mathrm{R2:}~\sum\limits_{j = 1}^L {{\mu _{i,j}}} {\delta _j} \le {T_2}, \quad\forall i \in \mathcal{N}, \nonumber\\
&~\mathrm{R3:}~\frac{{b_i \sum\limits_{j = 1}^L {{\mu _{i,j}}} {\rho _j}}}{{{f_i}}} + \frac{{b_i\sum\limits_{j = 1}^L {{\mu _{i,j}}} {\psi _j}}}{{r_i^U}} \le {T_3}, \quad\forall i \in \mathcal{N}, \nonumber\\
&~\mathrm{R4:}~\frac{{b_i\sum\limits_{j = 1}^L {{\mu _{i,j}}} {\chi _j}}}{{r_i^D}} + \frac{{b_i \sum\limits_{j = 1}^L {{\mu _{i,j}}} {\varpi _j}}}{{{f_i}}} \le {T_4}, \quad\forall i \in \mathcal{N},  \nonumber\\
&~\mathrm{R5:}~\frac{{\sum\limits_{j = 1}^L {{\mu _{i,j}}} {\delta _j}}}{{r_{i,f}^U}} \le {T_5}, \quad\forall i \in \mathcal{N},  \nonumber\\
&~\mathrm{R6:}~\frac{{N{T_2} - \sum\limits_{i = 1}^N {\sum\limits_{j = 1}^L {{\mu _{i,j}}} } {\delta _j}}}{{r_{s,f}}} \le {T_5}, \nonumber\\
&~\mathrm{R7:}~\frac{{\sum\limits_{j = 1}^L {{\mu _{i,j}}} {\delta _j}}}{{r_{i,f}^D}} \le {T_6} \quad\forall i \in \mathcal{N},  \nonumber\\
&~\mathrm{R8:}~\frac{{N{T_2} - \sum\limits_{i = 1}^N {\sum\limits_{j = 1}^L {{\mu _{i,j}}} } {\delta _j}}}{{r_{f,s}}} \le {T_6},  \nonumber
\end{align}}
where
{ \begin{align}
\Theta' ( {\boldsymbol{b}}, {\boldsymbol{\mu }}, {\bf{T}} ) =\frac{{2\vartheta \left({ {{T_3} + T_s^F + T_s^B + {T_4}}  + {\frac{{{T_5} + {T_6}}}{I}}}\right) }}{{\gamma \bigg( {\varepsilon  - \frac{{\beta \gamma \!\sum\limits_{i = 1}^N \!{\sum\limits_{j = 1}^L {\frac{{\sigma _j^2}}{{{b_i}}}} }  } }{N^2} - {\mathbbm{1}}_{\{I > 1\}} 4{\beta ^2}{\gamma ^2}{I^2}{T_1}} \bigg)}}. 
\end{align}}

As seen from problem~\eqref{problem_2}, the introduced auxiliary variables are tightly coupled with the original decision variables, posing significant barriers for solving this problem. To combat this, we decompose  problem~\eqref{problem_2} into two tractable sub-problems according to decision variables and devise efficient algorithms to solve them iteratively.

{\textit{BS sub-problem.}} We fix the variables $\boldsymbol{\mu}$ and $\bf{T}$ to investigate the sub-problem involving BS, which is expressed as
\begin{align}\label{sub-problem_1}
\mathcal{P}_1:&\mathop {{\rm{min}}}\limits_{{\boldsymbol{b}}} \Theta' ( {\boldsymbol{b}} )  \\
&\mathrm{s.t.} ~\mathrm{C4}-\mathrm{C5},~\mathrm{R3}-\mathrm{R4}. \nonumber
\end{align}
Then, we can derive the following proposition:
\begin{proposition}\label{theorem2}
The optimal BS decision is given by
{\begin{align}\label{accuracy_cons}
{{\boldsymbol{b}}^*} = \left[ {b_1^*,b_2^*,...,b_N^*} \right],
\end{align}}
where
{\begin{align}\label{b_three_solution}
b_i^* = \left\{ {\begin{array}{*{20}{c}}
1&{{{\hat b}_i} \le 1}\\
{\arg {{\min }_{{b_i} \in \left\{ {\left\lfloor {{{\hat b}_i}} \right\rfloor ,\left\lceil {{{\hat b}_i}} \right\rceil } \right\}}}\Theta '\left( {\boldsymbol{b}} \right)}&{1 < {{\hat b}_i} < {\kappa _i}}\\
{\left\lfloor {{\kappa _i}} \right\rfloor}&{{{\hat b}_i} \ge {\kappa _i}},
\end{array}} \right.
\end{align}}
where \resizebox{0.90\hsize}{!}{$
\kappa_i \! =\! \min \!\left\{\!\!
\frac{ \upsilon_{c,i} - \sum\limits_{j=1}^{L} \mu_{i,j} ( \vartheta_j + \delta_j ) }
     { \sum\limits_{j=1}^{L} \mu_{i,j} ( \psi_j + \chi_j ) },\;\!\!
\frac{ T_3 }
     { \sum\limits_{j=1}^{L} \mu_{i,j} \left( \frac{ \rho_j }{ f_i } + \frac{ \psi_j }{ r_i^U } \right) },\;\!
\frac{ T_4 }
     { \sum\limits_{j=1}^{L} \mu_{i,j} \left( \frac{ \chi_j }{ r_i^D } + \frac{ \varpi_j }{ f_i } \right) }
\!\!\right\}
$}, ${{\hat{\boldsymbol{b}}}} = \{ {{\hat b}_i}\mid i \in {\cal N}\}$ can be easily obtained by solving $\frac{{\partial \Theta '({\boldsymbol{b}})}}{{\partial {\boldsymbol{b}}}} = {\bf{0}}$ with Newton-Jacobi  method~\cite{geng2019pipeline}; $\left\lfloor {\cdot} \right\rfloor$ and $\left\lceil {\cdot} \right\rceil$ represent floor and ceiling operations, respectively.
\end{proposition}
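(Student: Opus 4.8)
The plan is to analyze the BS sub-problem $\mathcal{P}_1$ by first showing that, with $\boldsymbol{\mu}$ and $\mathbf{T}$ fixed, the objective $\Theta'(\boldsymbol{b})$ is separable-monotone in each $b_i$ over the relaxed domain, and then handle the integrality and the box constraints. First I would observe that once $\boldsymbol{\mu}$, $T_1$, $T_3$, $T_5$, $T_6$ (and hence $T_s^F, T_s^B$, which depend on $\boldsymbol{b}$ only through the already-absorbed auxiliary terms — here I would note that in this sub-problem the server-side FP/BP terms are carried via the fixed auxiliary variables) are held fixed, the numerator of $\Theta'$ is a constant and the only dependence on $\boldsymbol{b}$ is through the term $\frac{\beta\gamma}{N^2}\sum_{i=1}^N \sum_{j=1}^L \frac{\sigma_j^2}{b_i}$ appearing (with a minus sign) in the denominator. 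Since this term is strictly decreasing and strictly convex in each $b_i$ on $(0,\infty)$, the denominator is strictly increasing and concave in each $b_i$, so $\Theta'$ (a positive constant over an increasing positive function) is strictly decreasing and quasi-convex in each $b_i$ individually. This already tells us that, absent constraints, we would want $b_i$ as large as possible; the constraints are therefore what pin down the solution.

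Next I would identify the feasible interval for each $b_i$. Constraint $\mathrm{C4}$ (memory), and constraints $\mathrm{R3}$–$\mathrm{R4}$ (the per-device latency caps $T_3$ and $T_4$), are each, after substituting the fixed $\boldsymbol{\mu}$, linear upper bounds on $b_i$: $b_i \le \kappa_i$ where $\kappa_i$ is precisely the minimum of the three ratios displayed in the statement. Combined with $\mathrm{C5}$ ($b_i \ge 1$), the feasible set for each coordinate is the integer points in $[1,\kappa_i]$, and the coordinates decouple entirely. Then I would invoke the Newton–Jacobi characterization: the unconstrained stationary point $\hat{\boldsymbol{b}}$ solving $\partial\Theta'/\partial\boldsymbol{b}=\mathbf{0}$ gives the real-valued optimizer of the relaxed problem in each coordinate (here one must be a little careful: because $\Theta'$ is strictly monotone decreasing in each $b_i$, strictly speaking $\partial\Theta'/\partial b_i \ne 0$, so I would instead present $\hat b_i$ as the relaxed-optimal value clipped by the active constraint, i.e. the relaxed optimum is $\min\{\hat b_i,\kappa_i\}$ where $\hat b_i$ comes from whatever first-order condition the authors intend — or, more cleanly, simply note that by monotonicity the relaxed optimum is $\kappa_i$ itself). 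The three-case formula in \eqref{b_three_solution} then follows: if the relaxed optimum $\hat b_i \le 1$, the box forces $b_i^*=1$; if it exceeds $\kappa_i$, feasibility forces $b_i^*=\lfloor\kappa_i\rfloor$; otherwise $b_i^*$ is obtained by rounding $\hat b_i$ down or up and picking whichever of $\lfloor\hat b_i\rfloor,\lceil\hat b_i\rceil$ yields the smaller $\Theta'$, which is valid because a quasi-convex function of one integer variable attains its minimum at one of the two integers bracketing its real minimizer.

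The last step is to lift the per-coordinate optimality to joint optimality: since the objective is monotone-separable and the constraint set is a product of intervals $\prod_i \big([1,\kappa_i]\cap\mathbb{Z}\big)$, choosing each $b_i^*$ optimally in its own interval yields the globally optimal $\boldsymbol{b}^*=[b_1^*,\dots,b_N^*]$. The main obstacle I anticipate is the technical mismatch between the "solve $\partial\Theta'/\partial\boldsymbol{b}=\mathbf{0}$" language and the actual strict monotonicity of $\Theta'$ in $\boldsymbol{b}$: I would need to state precisely what first-order system $\hat{\boldsymbol{b}}$ solves (most plausibly a KKT-type stationarity condition that balances the numerator's $\boldsymbol{b}$-dependence through $T_s^F+T_s^B$ against the denominator's variance term, if those server terms are in fact retained as functions of $\boldsymbol{b}$ in this sub-problem), and to justify that this gives a unique, well-defined $\hat b_i$ per coordinate and that $\Theta'$ is quasi-convex around it so that the floor/ceil comparison is exhaustive. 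A secondary, minor obstacle is verifying that $\kappa_i \ge 1$ whenever the problem is feasible, so that the three cases in \eqref{b_three_solution} partition the possibilities without gaps.
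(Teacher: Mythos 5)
There is a genuine gap in your proposal, and it sits at the very first step: you assume that once $\boldsymbol{\mu}$ and $\mathbf{T}$ are fixed, the numerator of $\Theta'$ is a constant, so that the only $\boldsymbol{b}$-dependence is the variance term $\sum_i\sum_j \sigma_j^2/b_i$ in the denominator, making $\Theta'$ strictly decreasing in each $b_i$. That premise is false. The server-side latencies $T_s^F$ and $T_s^B$ are \emph{not} replaced by auxiliary variables in $\mathcal{P}''$; only the four max-terms are. In the objective $\Theta'(\boldsymbol{b},\boldsymbol{\mu},\mathbf{T})$ the numerator contains $T_s^F+T_s^B=\sum_{i=1}^N b_i C_i$ with $C_i=\frac{1}{f_s}\sum_j \mu_{i,j}(\rho_L-\rho_j+\varpi_L-\varpi_j)$, which grows linearly in each $b_i$. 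The paper's proof writes $\Theta'(\boldsymbol{b})=\frac{2\vartheta(\sum_i b_iC_i+D)}{\gamma(A-\sum_i B/b_i)}$ and shows that the numerator of $\partial\Theta'/\partial b_{i'}$, namely $\Xi(\boldsymbol{b})=C_{i'}(A-\sum_i B/b_i)-(\sum_i b_iC_i+D)B/b_{i'}^2$, is increasing in $b_{i'}$ and changes sign from negative to positive; hence $\Theta'$ first decreases and then increases in $b_{i'}$, with a genuine interior minimizer $\hat b_{i'}$ solving the first-order condition. This per-round-cost versus rounds-to-convergence trade-off is precisely what makes $\hat{\boldsymbol{b}}$ well defined; your version erases it, which is why you found yourself unable to explain what $\partial\Theta'/\partial\boldsymbol{b}=\mathbf{0}$ could mean and fell back on "the relaxed optimum is $\kappa_i$ itself" --- a conclusion that contradicts the three-case formula you are trying to prove (it would force $b_i^*=\lfloor\kappa_i\rfloor$ always).

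Two smaller corrections. First, the coordinates do not "decouple entirely": $\Xi(\boldsymbol{b})$ for coordinate $i'$ depends on all the other $b_k$ through both $\sum_i B/b_i$ and $\sum_i b_iC_i$, which is exactly why the paper solves the coupled system of $N$ stationarity equations with the Newton--Jacobi method rather than $N$ independent scalar equations. Second, your identification of the feasible interval $[1,\kappa_i]$ from $\mathrm{C4}$, $\mathrm{R3}$, $\mathrm{R4}$, $\mathrm{C5}$, and your rounding argument (a unimodal function of one integer variable is minimized at one of the two integers bracketing its real minimizer) are both correct and match the paper's final step --- but they only become usable once the unimodality in each $b_i$ is established via the sign analysis of $\Xi$, which is the part your proposal is missing.
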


\begin{proof}
We first conduct a functional analysis for the objective function in problem~\eqref{sub-problem_1}. Let $\Theta' \left( {\boldsymbol{b}} \right) = \frac{{2\vartheta \left( {\sum\limits_{i = 1}^N {{b_i}{C_i}}  + D} \right)}}{{\gamma \left( {A - \sum\limits_{i = 1}^N  \frac{B}{{{b_i}}}} \right)}}$, where $A = \varepsilon  - {\mathbbm{1}_{\{ I > 1\} }}4{\beta ^2}{\gamma ^2}{I^2}{T_1}$, $B = \frac{{\beta \gamma }}{{{N^2}}}\sum\limits_{j = 1}^L {\sigma _j^2}$, ${C_i} = \frac{{\sum\limits_{j = 1}^L {{\mu _{i,j}}} \left( {{\rho _L} - {\rho _j} + {\varpi _L} - {\varpi _j}} \right)}}{{{f_s}}}$, and $D = {T_3} + {T_4} + \frac{{{T_5} + {T_6}}}{I}$. Without loss of generality, we take the first-order derivative for the BS of the arbitrary $i'$-th edge device $b_{i'}$, yielding 
\begin{equation}\label{accuracy_cons}
\frac{{\partial \Theta' \left( {\boldsymbol{b}} \right)}}{{\partial b_{i'}}} = \frac{{2\vartheta }}{\gamma }\frac{\Xi \big( {\boldsymbol{b}} \big)}{{{\left( {A - \sum\limits_{i = 1}^N  \frac{B}{{{b_i}}}} \right)}^2}},
\end{equation}
where
\begin{equation}\label{E_I}
\Xi \big( {\boldsymbol{b}} \big) = {{C_{i'}}\bigg( {A - \sum\limits_{i = 1}^N  \frac{B}{{{b_i}}}} \bigg) - \bigg( {\sum\limits_{i = 1}^N {{b_i}{C_i}}  + D} \bigg)\frac{B}{{b_{i'}^2}}}.
\end{equation}

Since $\frac{{\partial \Xi \left( {\boldsymbol{b}} \right)}}{{\partial b_{i'}}} = \left( {\sum\limits_{i = 1}^N {{b_i}{C_i}}  + D} \right)\frac{{2B}}{{b_{i'}^3}}>0$, $\Xi \left( {\boldsymbol{b}} \right)$ is an increasing function of $b_{i'}$. Considering that $\mathop {\lim }\limits_{b_{i'} \to  + \infty } \Xi \left( \boldsymbol{b} \right) = \frac{{2\vartheta }}{\gamma }\frac{{{C_{i'}}}}{{\left( {A - \!\!\sum\limits_{k = 1(k \ne {i'})}^N {\frac{B}{{{b_k}}}} } \right)}}  > 0$ and $\mathop {\lim }\limits_{b_{i'} \to  0^+ } \Xi \left( \boldsymbol{b} \right) = - \frac{{2\vartheta }}{{\gamma B}}\left( {\sum\limits_{k = 1(k \ne {i'})}^N {{b_k}{C_k}}  + D} \right) < 0$, there must exist ${\hat b}_{i'} \in \left( {0, + \infty } \right)$ satisfying $\Xi \left( {\boldsymbol{b}} \right)|_{b_{i'}={\hat b}_{i'}} = 0$, indicating that the objective function first decreases and then increases with respect to $b_{i'}$ and . Therefore, the minimum is taken at $b_{i'}={\hat b}_{i'}$. Considering $N$ edge devices with diverse BSs, we generalize~\eqref{E_I} into a system of $N$ equations, i.e., $\frac{{\partial \Theta '({\boldsymbol{b}})}}{{\partial {\boldsymbol{b}}}} = [\frac{{\partial \Theta '({\boldsymbol{b}})}}{{\partial {b_{1}}}},\frac{{\partial \Theta '({\boldsymbol{b}})}}{{\partial {b_{2}}}},...,\frac{{\partial \Theta '({\boldsymbol{b}})}}{{\partial {b_N}}}] = {\bf{0}}$. To solve this system of equations, we can utilize the Newton-Jacobi method~\cite{geng2019pipeline}, yielding the solution ${{\hat{\boldsymbol{b}}}} = [ {{{\hat b}_{1}},{{\hat b}_{2}},...,{\hat b}_N}]$. 
According to constraint $\mathrm{C5}$, which requires the BS to be a positive integer, the optimal batch size $b^*_{i'}$ for the $i'$-th edge device must be one of the two integers closest to the continuous solution $\hat{b}_{i'}$. Combining this insight with constraints $\mathrm{C4}$, $\mathrm{R3}$ and $\mathrm{R4}$, the optimal solution is given by

{\begin{align}\label{accuracy_cons}
{{\bf{b}}^*} = \left[ {b_1^*,b_2^*,...,b_N^*} \right],
\end{align}}
where $b_i^*$ is determined as
{\begin{align}\label{accuracy_cons}
b_i^* = \left\{ {\begin{array}{*{20}{c}}
1&{{{\hat b}_i} \le 1}\\
{\arg {{\min }_{{b_i} \in \left\{ {\left\lfloor {{{\hat b}_i}} \right\rfloor ,\left\lceil {{{\hat b}_i}} \right\rceil } \right\}}}\Theta '\left( {\boldsymbol{b}} \right)}&{1 < {{\hat b}_i} < {\kappa _i}}\\
{\left\lfloor {{\kappa _i}} \right\rfloor}&{{{\hat b}_i} \ge {\kappa _i}},
\end{array}} \right.
\end{align}}
Here, \resizebox{0.90\hsize}{!}{$
\kappa_i \! =\! \min \!\left\{\!\!
\frac{ \upsilon_{c,i} - \sum\limits_{j=1}^{L} \mu_{i,j} ( \vartheta_j + \delta_j ) }
     { \sum\limits_{j=1}^{L} \mu_{i,j} ( \psi_j + \chi_j ) },\;\!\!
\frac{ T_3 }
     { \sum\limits_{j=1}^{L} \mu_{i,j} \left( \frac{ \rho_j }{ f_i } + \frac{ \psi_j }{ r_i^U } \right) },\;\!
\frac{ T_4 }
     { \sum\limits_{j=1}^{L} \mu_{i,j} \left( \frac{ \chi_j }{ r_i^D } + \frac{ \varpi_j }{ f_i } \right) }
\!\!\right\}
$}.
\end{proof}

{\textbf{Remark:}} 
Due to constraints $\mathrm{C4}$, $\mathrm{R3}$ and $\mathrm{R4}$, the BS for each edge device has three potential solutions, i.e., $1$, $\lfloor \hat{b}_i \rfloor$ or $\lceil \hat{b}_i \rceil$, and $\lfloor \kappa_i \rfloor$. Based on {\bf Proposition 1}, the optimal solution to problem~\eqref{sub-problem_1} can be obtained via exhaustive search over all $3^N$ possible combinations -- a reduction from the original $B^N$ combinations, where $B$ is the maximum BS. The objective function is calculated for each combination, and the one with the minimum value is identified as the global optimum. While exhaustive search is feasible for small-scale systems, it becomes computationally prohibitive as $N$ increases. To overcome this scalability challenge, an efficient sub-optimal solution can be derived using \textbf{Proposition 1}. Specifically, by solving the first-order condition $\frac{\partial \Theta'({\boldsymbol{b}})}{\partial {\boldsymbol{b}}} = {\bf 0}$, $\hat{{\boldsymbol{b}}} = \{ \hat{b}_i \mid i \in N \}$ is obtained. Each $\hat{b}_i$ is then discretized and adjusted using Eqn.~\eqref{b_three_solution} to yield the final integer BS $b_i^*$. This approach only requires solving a nonlinear system of $N$ variables followed by a one-time correction step, significantly reducing computational complexity.

Therefore, we obtain an efficient solution to problem~\eqref{sub-problem_1}.

\begin{algorithm}[t]
	%\textsl{}\setstretch{1.8}
	\renewcommand{\algorithmicrequire}{\textbf{Input:}}
	\renewcommand{\algorithmicensure}{\textbf{Output:}}
	\caption{BCD-based Algorithm.}\label{BCD-based}
	\begin{algorithmic}[1]
        \REQUIRE  ${\boldsymbol{b}}^{(0)}$, ${\boldsymbol{\mu }}^{(0)}$, ${{\bf{T}}}^{(0)}$ and ${\varepsilon}$.
		\ENSURE  ${\boldsymbol{b}^{\bf{*}}}$ and ${{\boldsymbol{\mu }}^{\bf{*}}}$  
          \STATE Initialization:  $\tau  \leftarrow 0$.
          \REPEAT 
          \STATE$\tau  \leftarrow \tau+1$
           \STATE Update ${\boldsymbol{b}}^{(\tau)}$ by solving problem~\eqref{sub-problem_1}
           \STATE Update ${\boldsymbol{\mu }}^{(\tau)}$ and  ${{\bf{T}}}^{(\tau)}$ by solving problem~\eqref{sub-problem_2}
         \UNTIL{\small{$|\Theta' ( {\boldsymbol{b}}^{(\tau)}, {{\boldsymbol{\mu }}^{(\tau)}}, {{\bf{T}}^{(\tau)}} ) - \Theta' ( {\boldsymbol{b}}^{(\tau-1)}, {\boldsymbol{\mu }}^{(\tau-1)}, {{\bf{T}}^{(\tau-1)}}| \!\le \!{\varepsilon}$}}    
         
	\end{algorithmic}  
\end{algorithm}

% \begin{algorithm}[t]
% 	%\textsl{}\setstretch{1.8}
% 	\renewcommand{\algorithmicrequire}{\textbf{Input:}}
% 	\renewcommand{\algorithmicensure}{\textbf{Output:}}
% 	\caption{Dinkelbach Method for Solving Problem~\eqref{sub-problem_2}}\label{dinkelback_method}
% 	\begin{algorithmic}[1]
%         \REQUIRE  $\lambda^{(0)}$ satisfying $\Upsilon ( {\lambda^{(0)}, {\boldsymbol{\mu }}^{(0)},{\bf{T}}^{(0)}} ) \!\ge\! 0$, ${\boldsymbol{\mu }}^{(0)}$, ${\bf{T}}^{(0)}$ and ${\varepsilon _d}$.
% 		\ENSURE ${{\boldsymbol{\mu }}^{\bf{*}}}$ and $\lambda^*$. 
%           \STATE Initialization:  $n  \leftarrow 1$.
%           \REPEAT     
%           \STATE Solve problem~\eqref{sub-problem_2_linear} with $\lambda=\lambda^{(n)}$ to obtain ${\boldsymbol{\mu }}^{(n)}$ and ${\bf{T}}^{(n)}$ 
%           \STATE Calculate $\Upsilon ( {\lambda^{(0)},{\boldsymbol{\mu }}^{(n)},{\bf{T}}^{(n)}} ) $      
%           \STATE ${\lambda ^{(n+1)}} \leftarrow \frac{{Q\left( {{\boldsymbol{\mu }}^{(n)},{{\bf{T}}^{(n)}}} \right)}}{{P\left( {\boldsymbol{\mu }}^{(n)},{{\bf{T}}^{(n)}} \right)}}$,
%           \STATE $n  \leftarrow n+1$
%          \UNTIL{$|\Upsilon ( {\lambda^{(n)},{\boldsymbol{\mu }}^{(n)},{\bf T}^{(n)}} )|\le {\varepsilon _d}$}    
% 	\end{algorithmic}  
% \end{algorithm}

{\textit{MS sub-problem.}} By fixing the decision variable $\boldsymbol{b}$, we transform problem~\eqref{problem_2} into a standard mixed-integer linear fractional  programming with respect to $\boldsymbol{\mu}$ and ${\bf{T}}$, which is expressed as
\begin{align}\label{sub-problem_2}
\mathcal{P}_2:&\mathop {{\rm{min}}}\limits_{{\boldsymbol{\mu}, {\bf{T}}}} \Theta' ( {\boldsymbol{\mu }}, {\bf{T}} )   \\
&\mathrm{s.t.} ~\mathrm{C2}-\mathrm{C4},~\mathrm{R1}-\mathrm{R8}. \nonumber
\end{align}

We leverage the Dinkelbach algorithm~\cite{dinkelbach1967nonlinear} to solve problem~\eqref{sub-problem_2} by transforming it into mixed-integer linear programming, which guarantees the optimal solution~\cite{yue2013reformulation,rodenas1999extensions}.

As mentioned earlier, we decompose the original problem~\eqref{time_minimize_problem} into two tractable sub-problems $\mathcal{P}1$ and $\mathcal{P}2$ according to decision variables and develop efficient algorithms to solve
each sub-problem. We design an iterative block-coordinate descent (BCD)-based algorithm~\cite{tseng2001convergence} to solve problem~\eqref{time_minimize_problem}, as described in~\textbf{Algorithm~\ref{BCD-based}}. The key parameters required for executing the algorithm (e.g., $\beta$, ${G _j^2}$ and ${\sigma _j^2}$) are estimated following the approach in~\cite{wang2019adaptive}.

\section{Performance Evaluation}\label{simu_results}
This section provides numerical results to evaluate the performance of HASFL from three aspects: i) comparisons with four benchmarks to demonstrate the superiority of HASFL; ii) investigating the robustness
of HASFL to varying network computing and communication
resources, and the number of edge devices; iii) conducting the ablation study to validate the necessity of each meticulously designed component in HASFL, including BS and MS.

\begin{table}[t]\label{table_2}
  \centering
  \caption{Simulation Parameters.}
  \renewcommand{\arraystretch}{1.3}{
  \setlength{\tabcolsep}{1.3mm}{
\begin{tabular}{|c|c|c|c|}
\hline
\textbf{Parameter}          & \textbf{Value} & \textbf{Parameter} & \textbf{Value}  \\ \hline
$f_s$             & 20 TFLOPS             & $f_i$                 & [1, 2] TFLOPS                          \\ \hline
$N$             & 20              & ${r_i^{U}}$/$r_{i,f}^U$           & [75, 80] Mbps                   \\ \hline
${r_i^{D}}$/$r_{i,f}^D$               & [360, 380] Mbps              & ${r_s^{U}}$/${r_s^{D}}$                  & [360, 380] Mbps                      \\ \hline
$\gamma$             & $5\times {10^{-4}}$           &      $I$       &    15                  \\ \hline

\end{tabular}}}
\end{table}

\subsection{Experimental Setup}\label{simu_setup}

{\bf{Implementation and hyper-parameters.}}  HASFL is implemented using Python 3.7 and PyTorch 1.9.1. All training procedures are conducted on a ThinkPad P17 Gen1 workstation, which is configured with an NVIDIA Quadro RTX 3000 GPU, an Intel i9-10885H processor, and a 4TB solid-state drive. We deploy $N$ edge devices, and $N$ is set to 20 by default unless specified otherwise. The computing capability of each edge device is uniformly distributed within [1, 2] TFLOPS, and the edge server is provisioned with a computing capability of 20 TFLOPS. The uplink data rates from the $i$-th edge device to the edge server and fed server follow uniform distribution within [75, 80] Mbps, and the corresponding downlink data rates are uniformly distributed within [360, 380] Mbps. For convenience, the inter-server data rate, namely, ${r_s^{U}}$ and ${r_s^{D}}$, also identically follow uniform distribution within [360, 380] Mbps. The client-side sub-model aggregation interval and learning rate are set to 15 and $5\times{10^{-4}}$, respectively. For readers' convenience, the detailed experiment parameters are summarized in Table I.

\begin{figure}[t]
% \vspace{-.5ex}
\setlength\abovecaptionskip{3pt}
\centering
\subfigure[CIFAR-10 on VGG-16 under IID setting.]{
\includegraphics[width=0.45\linewidth]{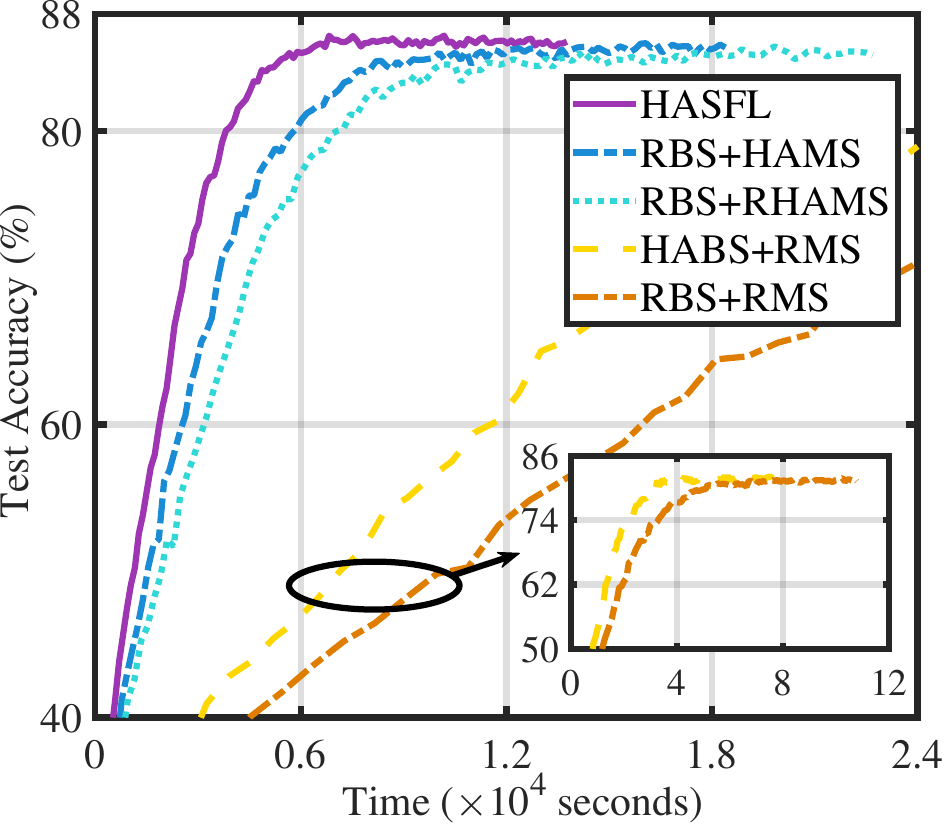}
    \label{sfig:cifar_iid_test_accuracy}
}
\subfigure[CIFAR-10 on VGG-16 under non-IID setting.]{
    \includegraphics[width=0.448\linewidth]{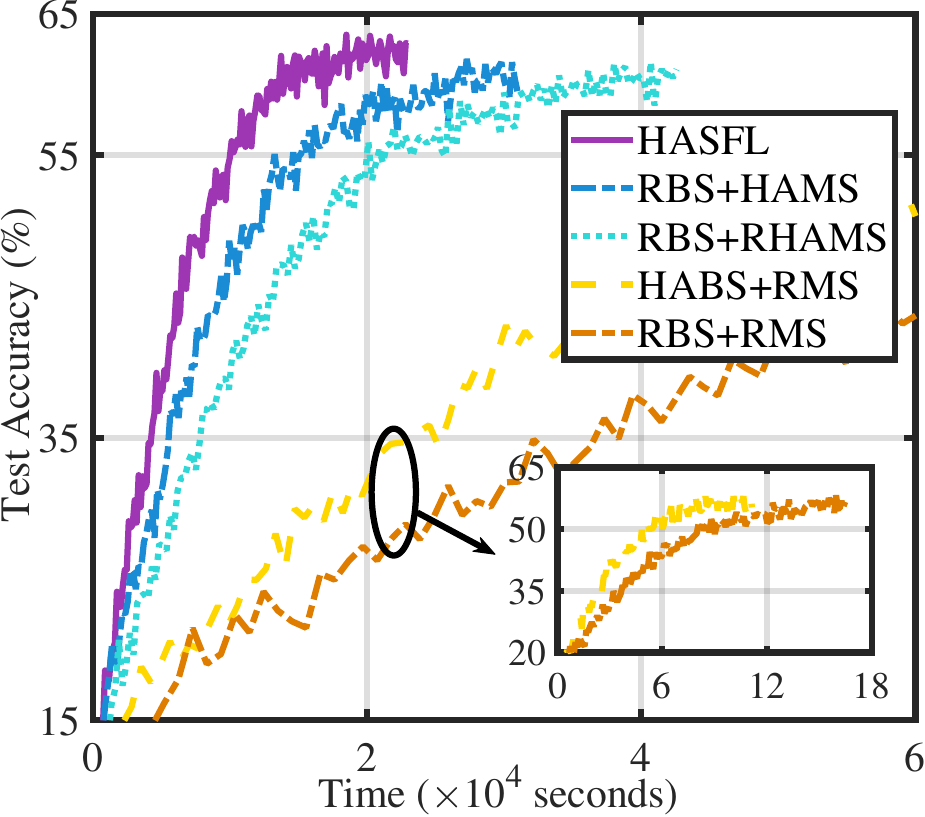}
    \label{sfig:cifar_non_iid_test_accuracy}
}
\subfigure[ CIFAR-100 on ResNet-18 under IID setting.]{
\includegraphics[width=0.448\linewidth]{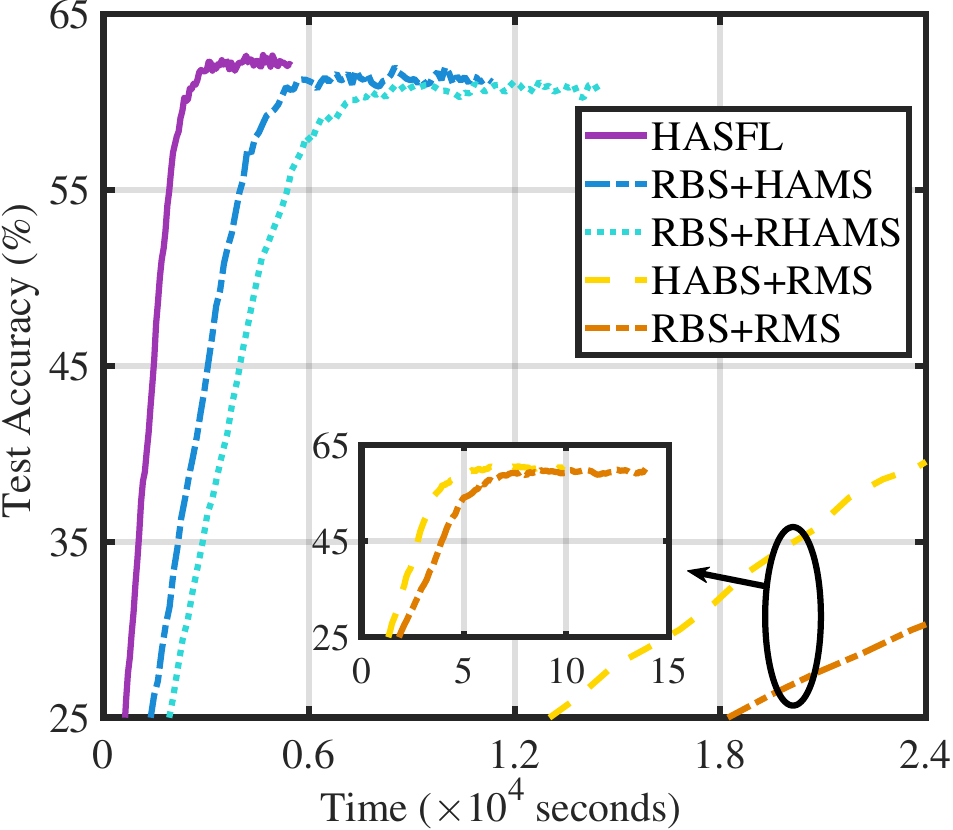}
    \label{sfig:mnist_iid_test_accuracy}}
\subfigure[CIFAR-100 on ResNet-18 under non-IID setting.]{
    \includegraphics[width=0.45\linewidth]{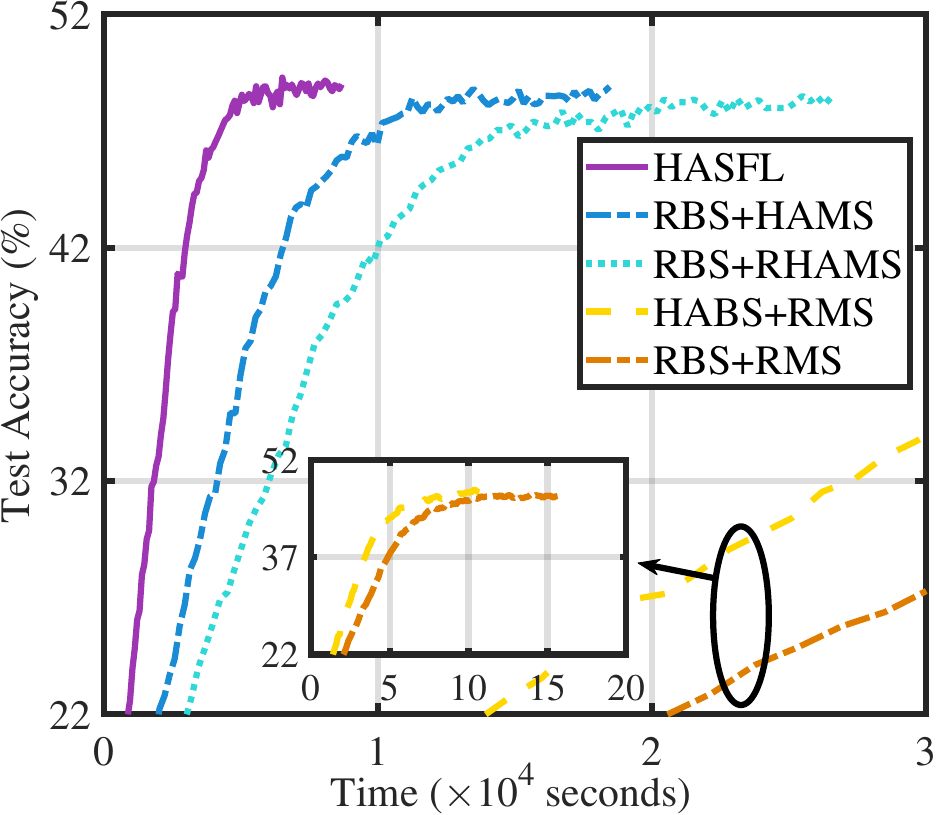}
    \label{sfig:mnist_non_iid_test_accuracy}
}
    \caption{The training performance for CIFAR-10 and CIFAR-100 datasets under IID and non-IID settings using VGG-16 and ResNet-18.}
    \label{fig:test_accuracy}
\end{figure}

{\bf{Dataset and model.}} To evaluate the learning performance of HASFL, we adopt two widely used image classification datasets, CIFAR-10 and CIFAR-100~\cite{krizhevsky2009learning}. The CIFAR-10 dataset consists of 10 distinct categories of object images, such as airplanes and trucks, and contains 50000 training samples and 10000 test samples. The CIFAR-100 dataset comprises object images from 100 categories, with each category containing 500 training samples and 100 test samples. The experiments are conducted under IID and non-IID data settings. In the IID setting, the dataset is randomly shuffled and evenly allocated across all edge devices.  In the non-IID setting~\cite{zhu2019broadband,yang2020energy,lin2024fedsn}, the dataset is first sorted based on class labels, and then partitioned into 40 shards, with each of 20 edge devices receiving two randomly distributed shards. Additionally, we employ the well-known ML models, VGG-16~\cite{simonyan2014very} and ResNet-18~\cite{he2016deep} for performance evaluation. VGG-16 is a classical deep convolutional neural network composed of 13 convolution layers and 3 fully connected layers, while ResNet-18 is a residual neural network consisting of 17 convolutional layers and 1 fully connected layer.

{\bf{Benchmarks.}} To comprehensively evaluate the performance of HASFL, we compare it against the following alternatives:
\begin{itemize}
    \item {\bf{RBS+HAMS:}} This benchmark utilizes a random BS strategy (i.e., randomly drawing BS from 1 to 64 during model training), and employs the tailored heterogeneity-aware MS scheme in Section~\ref{solu_appro}.
    \item {\bf{HABS+RMS:}} This benchmark adopts the heterogeneity-aware BS strategy in Section~\ref{solu_appro} and deploys a random MS scheme (i.e., randomly selecting model split points during model training).
    \item {\bf{RBS+RMS:}} This benchmark employs the random BS and MS strategy. 
    \item {{\bf{RBS+RHAMS:}} This benchmark utilizes the RBS scheme and adopts a resource-heterogeneity-aware MS strategy~\cite{wang2023coopfl}.}
\end{itemize}

% It leverages the stacking of multiple convolution layers to effectively extract features from images, while the fully connected layers are responsible for classifying and predicting the extracted features. 

\subsection{Superiority of HASFL}

\begin{figure}[t]
% \vspace{-.5ex}
\setlength\abovecaptionskip{3pt}
\centering
\subfigure[CIFAR-10 on VGG-16 under IID setting.]{
\includegraphics[width=0.448\linewidth]{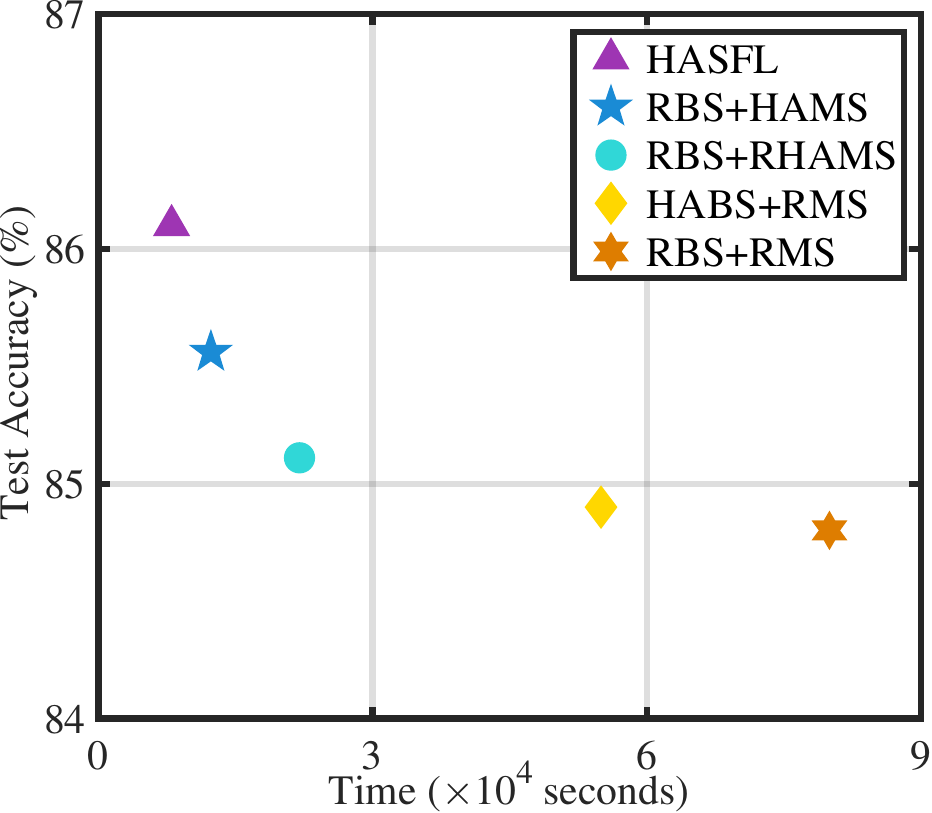}
    \label{sfig:cifar_iid_time_accuracy}
}
\subfigure[CIFAR-10 on VGG-16 under non-IID setting.]{
    \includegraphics[width=0.448\linewidth]{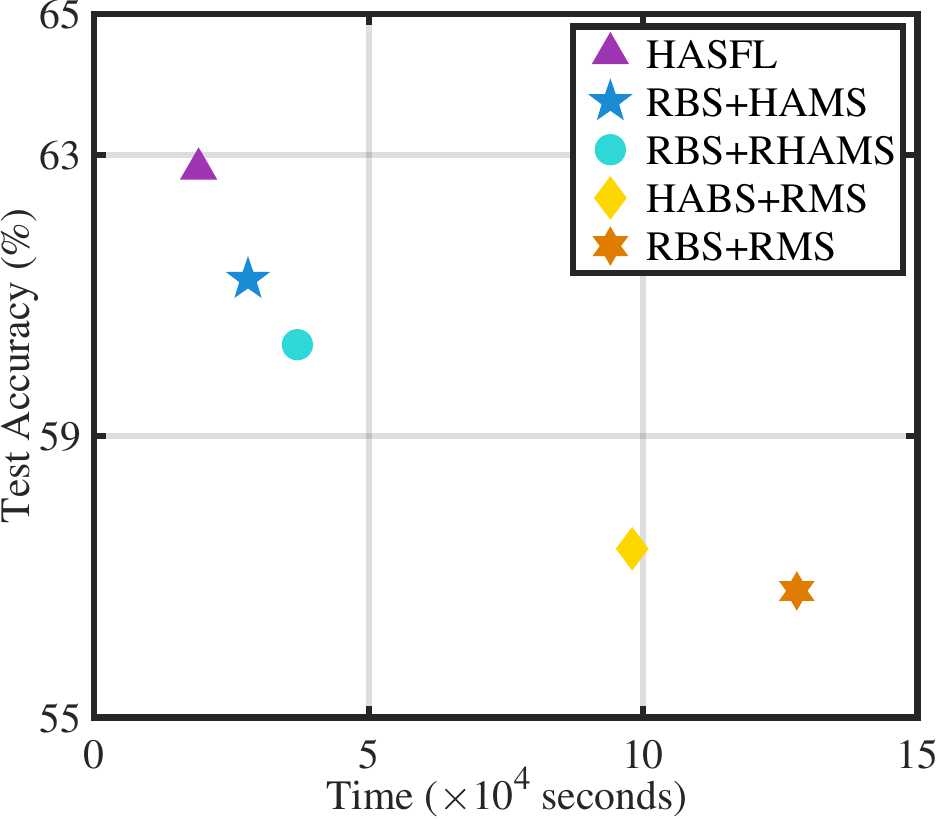}
    \label{sfig:cifar_non_iid_time_accuracy}
}
\subfigure[CIFAR-100 on ResNet-18 under IID setting.]{
\includegraphics[width=0.448\linewidth]{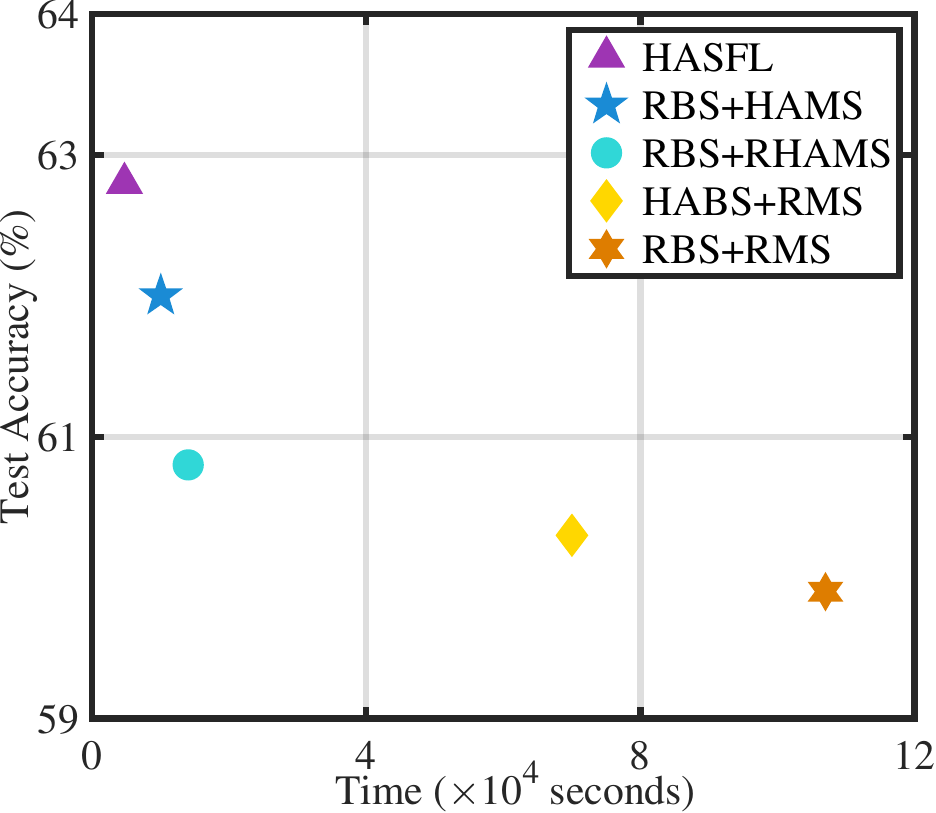}
    \label{sfig:mnist_iid_time_accuracy}
}
\subfigure[CIFAR-100 on ResNet-18 under non-IID setting.]{
    \includegraphics[width=0.448\linewidth]{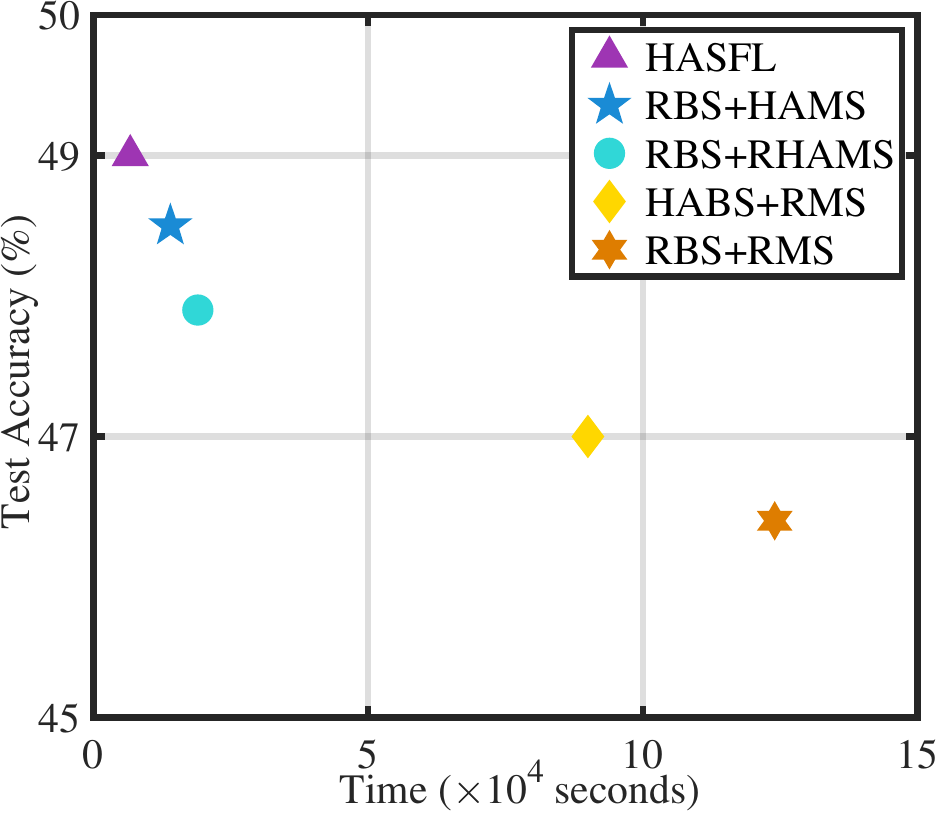}
    \label{sfig:mnist_non_iid_time_accuracy}
}
    \caption{The converged accuracy and time for CIFAR-10 and CIFAR-100 datasets under IID and non-IID settings using VGG-16 and ResNet-18.}
    \label{fig:time_accuracy}
\end{figure}

Fig.~\ref{fig:test_accuracy} presents the training performance of HASFL and four benchmarks on the CIFAR-10 and CIFAR-100 datasets. HASFL demonstrates superior convergence speed and improved test accuracy compared to other benchmarks. Notably, HASFL, RBS+HAMS, and RBS+RHAMS achieve significantly faster model convergence than HABS+RMS and RBS+RMS, primarily due to heterogeneity-aware MS, which strikes a balance between communication-computing overhead and training convergence. Furthermore, the performance comparison between HASFL and RBS+HAMS reveals the advantage of the tailored BS scheme, which can expedite the model training while retaining training accuracy. The performance discrepancy between RBS+HAMS and RBS+RHAMS is mainly attributed to the heuristic design of RBS+RHAMS, which lacks systematic optimization capturing the interplay between model convergence and MS. By comparing Fig.~\ref{sfig:cifar_iid_test_accuracy} with Fig.~\ref{sfig:cifar_non_iid_test_accuracy}, and Fig.~\ref{sfig:mnist_iid_test_accuracy} with Fig.~\ref{sfig:mnist_non_iid_test_accuracy}, we show that HASFL and the other four benchmarks consistently converge more slowly under the non-IID setting than IID setting.

\begin{figure}[t!]
% \vspace{-.5ex}
\setlength\abovecaptionskip{3pt}
\centering
\subfigure[Computing capabilities of edge devices.]{
\includegraphics[width=0.448\linewidth]{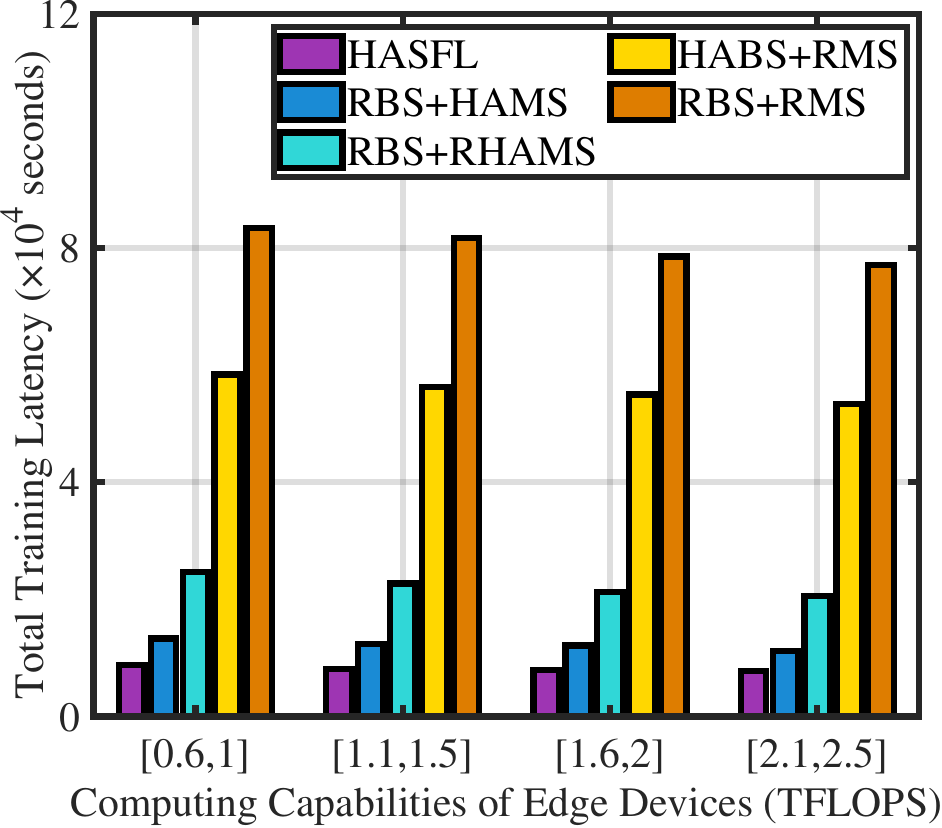}
    \label{sfig:device_comput_accuracy}
}
\subfigure[Computing capabilities of edge server.]{
    \includegraphics[width=0.448\linewidth]{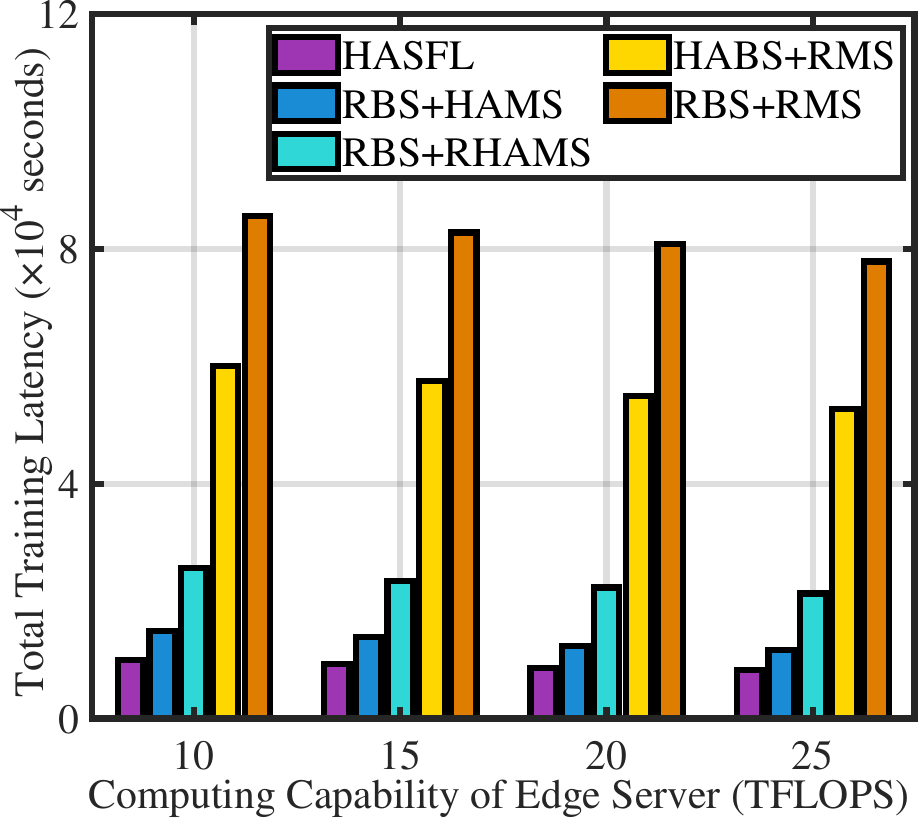}
    \label{sfig:server_comput_accuracy}
}
    \caption{ The converged time versus network computing resources on CIFAR-10 under IID setting using VGG-16.}
    \label{fig:comput_accuracy}
\end{figure}

\begin{figure}[t]
% \vspace{-.5ex}
\setlength\abovecaptionskip{3pt}
\centering
\subfigure[Uplink rates of edge devices.]{
\includegraphics[width=0.448\linewidth]{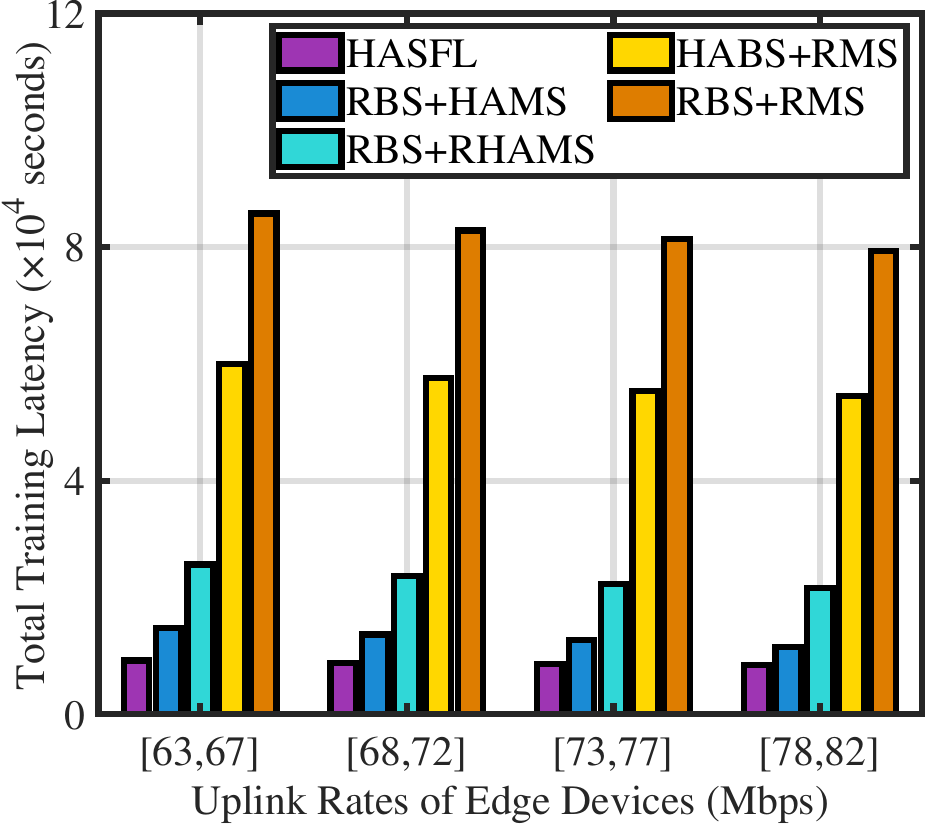}
    \label{sfig:device_commu_accuracy}
}
\subfigure[Inter-server communication rate.]{
    \includegraphics[width=0.448\linewidth]{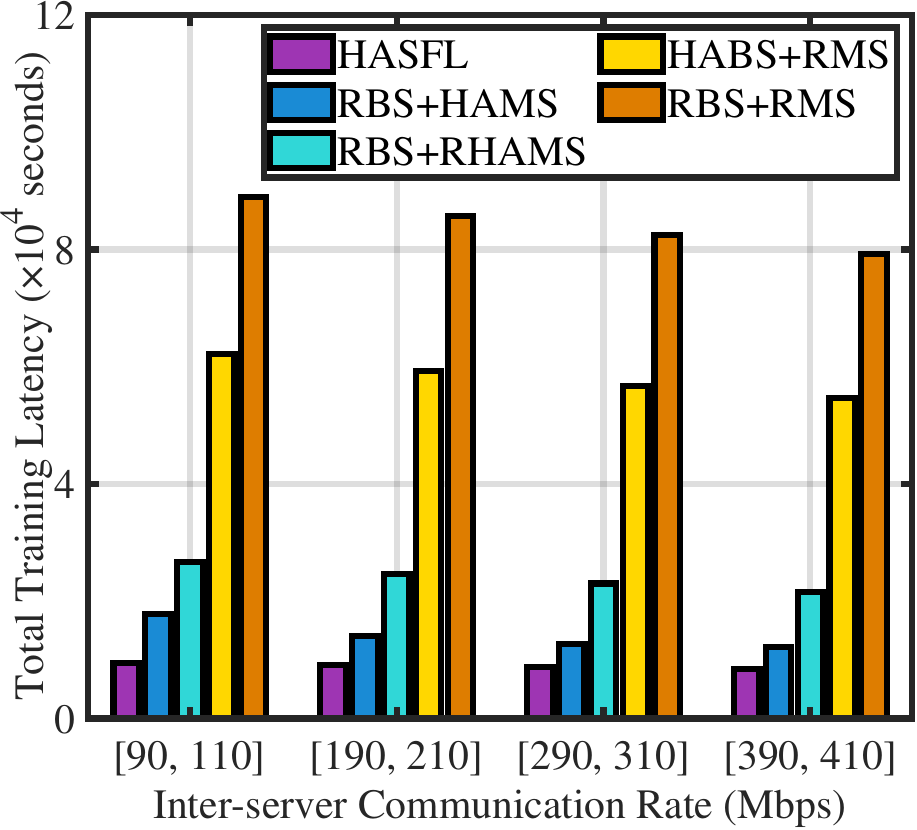}
    \label{sfig:server_commu_accuracy}
}
    \caption{  The converged time versus network communication resources on CIFAR-10 under IID setting using VGG-16.}
    \label{fig:commu_accuracy}
\end{figure}

Fig.~\ref{fig:time_accuracy} illustrates the converged accuracy and time (i.e., the test accuracy increases by less than 0.02\% across five consecutive training rounds) on the CIFAR-10 and CIFAR-100 datasets. Comparing HASFL with RBS+HAMS and HABS+RMS reveals a significant impact of BS and MS on both converged accuracy and time. Notably, MS plays a more decisive role in model training than BS. This is because the selected model split point directly affects the overall model aggregation frequency. For instance, a shallower model split point (i.e., fewer layers placed on the edge device) demonstrates that larger portions of the global model are aggregated at each training round. In the IID setting, the non-optimized MS scheme leads to an accuracy drop by 1.3$\%$ (resp. 2.4$\%$) and nearly a sixfold (resp. fifteenfold) increase in the converged time on CIFAR-10 (resp. CIFAR-100) dataset, which is 0.7$\%$ and 4.4 times (resp. 1.6$\%$ and 7 times) that of the non-optimized MS strategy. In the non-IID setting, HASFL still significantly outperforms the non-optimized MS scheme in both converged accuracy and time, achieving approximately 5.6$\%$ (resp. 2$\%$) accuracy increase and  4.8 (resp. 13.4) times faster model convergence acceleration on the CIFAR-10 (resp. CIFAR-100) dataset.  The reason behind this is the model bias introduced by the heterogeneous nature of local data distributions, consistent with FL. By comparing HASFL with RBS+RMS, HASFL achieves a faster model convergence of 9.4 (resp. 6.4) times over its counterpart without optimization on the CIFAR-10 (resp. CIFAR-100) dataset while guaranteeing training accuracy, showcasing the superior performance of HASFL. Though RBS+HAMS, HABS+RMS, and RBS+RHAMS exhibit better training performance over RBS+RMS, we observe that the performance gains remain relatively limited because they lack a unified optimization of BS and MS.

\subsection{Impact of Varying Network Resources}

Figs.~\ref{fig:comput_accuracy} and~\ref{fig:commu_accuracy} show the converged time versus network computing and communication resources on CIFAR-10 under the IID setting.  HASFL consistently achieves faster model convergence than the other benchmarks across varying network resources. The training performance of RBS+RMS deteriorates significantly as network resources decline. This degradation arises from the lack of optimization of BS and MS, leading to sub-optimal trade-offs between communication-computing overhead and model convergence. The comparison of HASFL with RBS+HAMS and HABS+RMS reveals that optimizing either BS or MS can compensate for the performance decline caused by the reduced network resources. Though RBS+RHAMS achieves faster model convergence than RBS+RMS, its performance gains remain limited by the lack of joint optimization of BS and MS. In contrast, HASFL demonstrates better robustness over other benchmarks, with the converged time of HASFL exhibiting only a marginal increase as network resources diminish, owing to its heterogeneity-aware BS and MS design. Specifically, HASFL dynamically adapts BS and MS based on network resource conditions to achieve the minimum rounds required for model convergence to expedite model training, demonstrating the superior adaptability to varying network resources.

\begin{figure}[t]
% \vspace{-.5ex}
\setlength\abovecaptionskip{3pt}
\centering
\subfigure[{CIFAR-10 under IID setting.}]{
\includegraphics[width=0.448\linewidth]{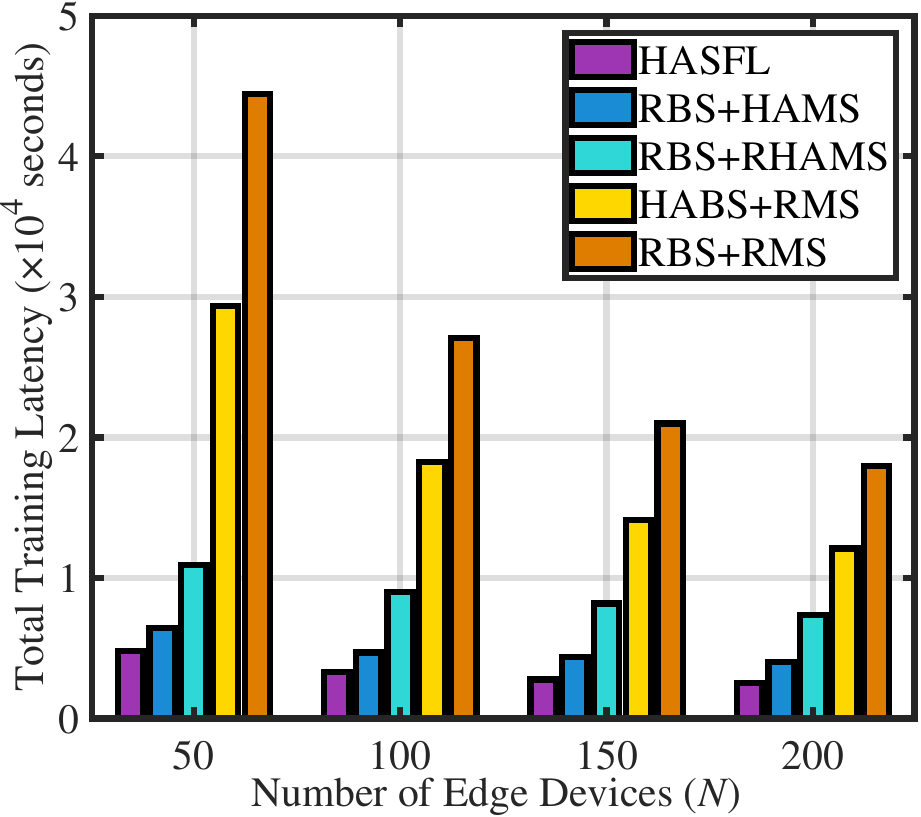}
    \label{sfig:cifar_iid_num}
}
\subfigure[{CIFAR-10 under non-IID setting.}]{
    \includegraphics[width=0.448\linewidth]{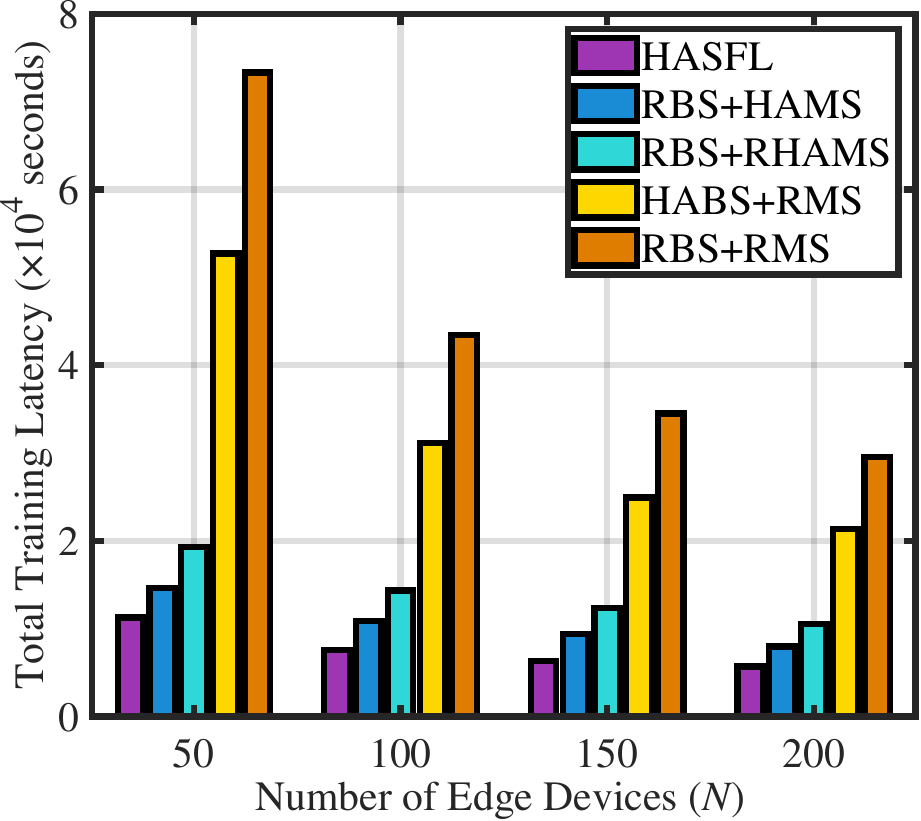}
    \label{sfig:cifar_non_iid_num}
}
    \caption{{The converged time versus number of edge devices on CIFAR-10 under IID and non-IID setting using VGG-16.}}
    \label{fig:cifar_deivce_num}
\end{figure}

\subsection{Impact of Number of Edge Devices}

% RBS+HAMS
% HABS+RMS
% RBS+RMS
%RBS+RHAMS

{Fig.~\ref{fig:cifar_deivce_num} presents the converged time versus the number of edge devices on CIFAR-10 under the IID and non-IID settings. HASFL consistently outperforms the other benchmarks by achieving significantly lower training latency across all configurations of $N$. Notably, the performance gap between HASFL and the other benchmarks becomes increasingly pronounced with the increasing number of edge devices, underscoring HASFL’s superior scalability and efficiency in large-scale edge networks. The converged time of HASFL and all benchmarks is noticeably longer 
under the non-IID setting than under IID setting. This degradation in performance is primarily attributed to the statistical heterogeneity of non-IID data, which often results in inconsistent local updates across edge devices and hinders model convergence. HASFL maintains its advantage over the other benchmarks, underscoring its robustness to diverse and imbalanced data distributions.
}

\begin{figure}[t]
% \vspace{-.5ex}
\setlength\abovecaptionskip{3pt}
\centering
\subfigure[CIFAR-10 under IID setting.]{
\includegraphics[width=0.445\linewidth]{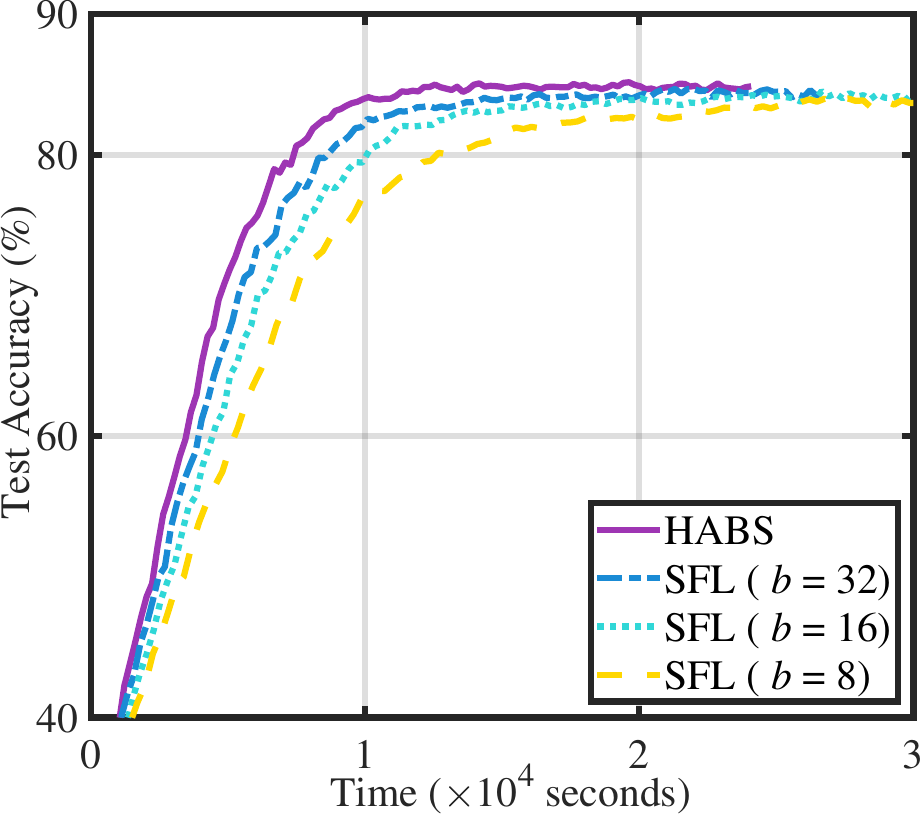}
    \label{sfig:cifar_iid_cut_aba}
}
\subfigure[CIFAR-10 under non-IID setting.]{
    \includegraphics[width=0.445\linewidth]{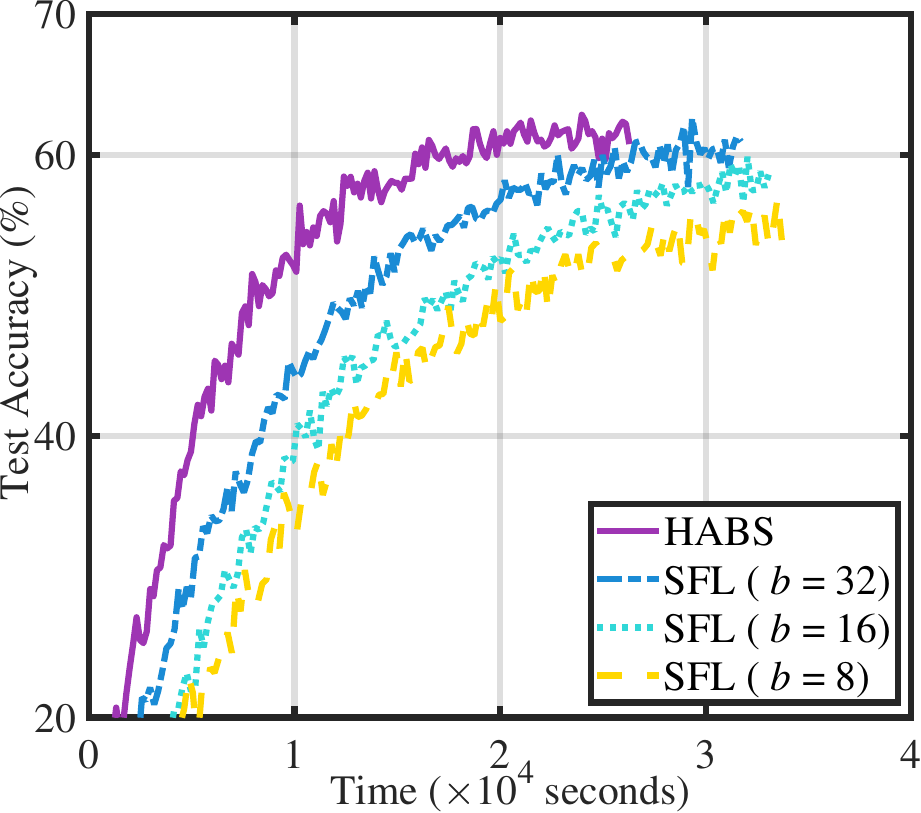}
    \label{sfig:cifar_non_iid_cut_aba}
}
    \caption{Ablation experiments for HABS strategy on the CIFAR-10 dataset under IID and non-IID setting using VGG-16 with $L_c=8$.}
    \label{fig:cifar_cut_aba}
\end{figure}

\begin{figure}[t]
% \vspace{-.5ex}
\setlength\abovecaptionskip{3pt}
\centering
\subfigure[CIFAR-10 under IID setting.]{
\includegraphics[width=0.445\linewidth]{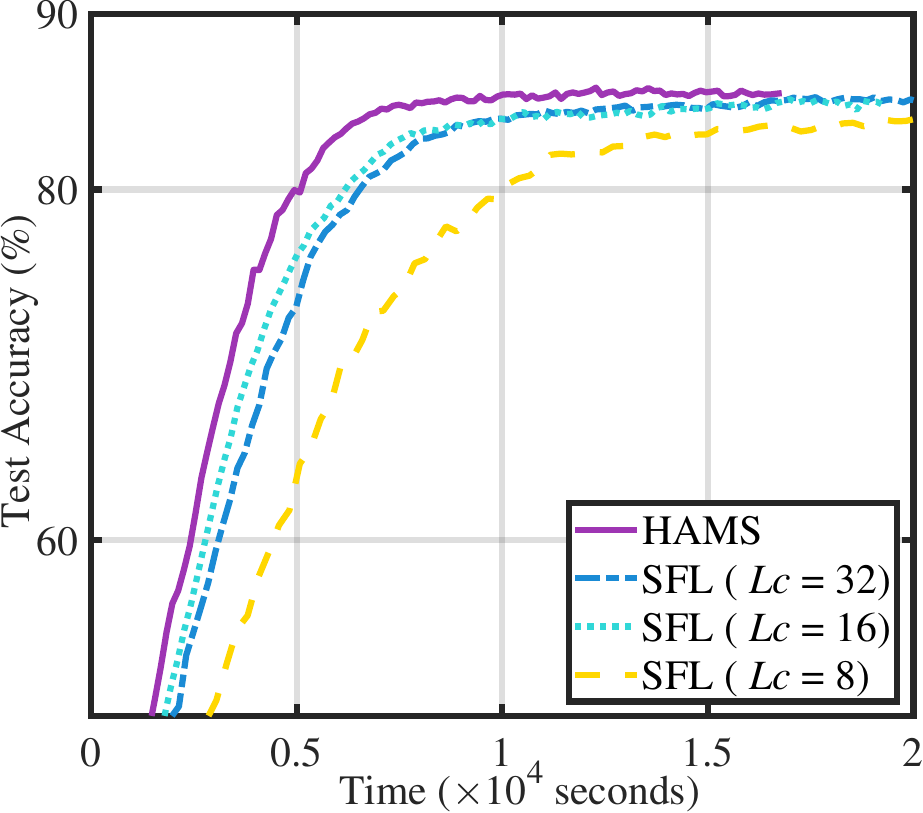}
    \label{sfig:cifar_iid_I_15_different_cut}
}
\subfigure[CIFAR-10 under non-IID setting.]{
    \includegraphics[width=0.454\linewidth]{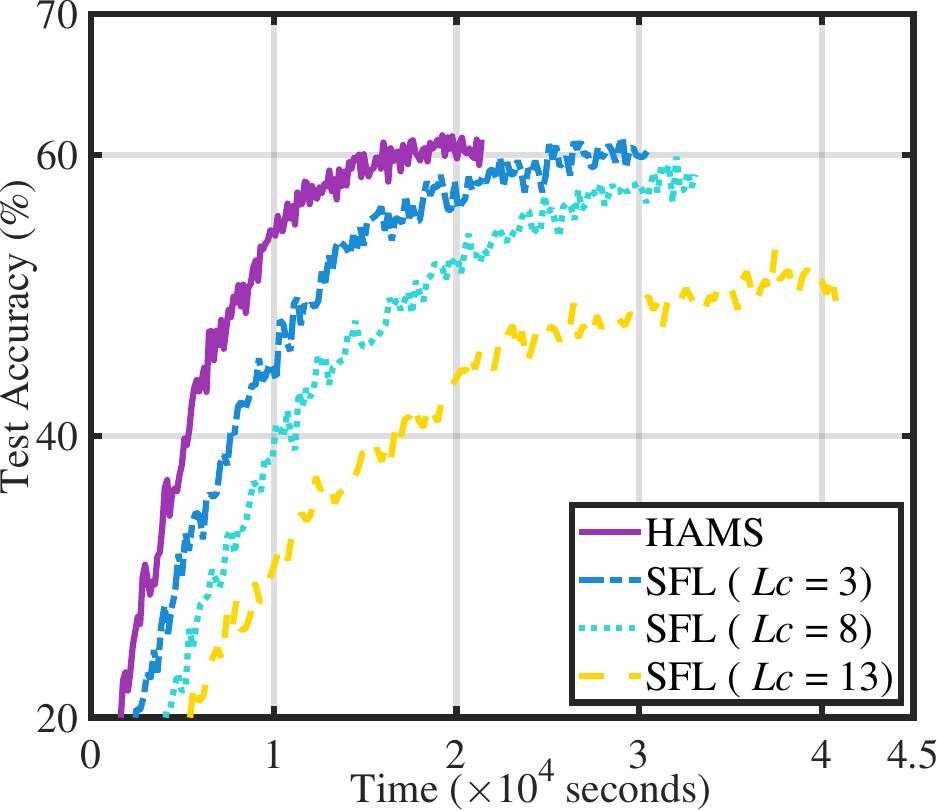}
    \label{sfig:cifar_non_iid_I_15_different_cut}
}
    \caption{Ablation experiments for HAMS scheme on the CIFAR-10 dataset under IID and non-IID setting using VGG-16 with $b=16$.}
    \label{fig:I_15_different_cut}
\end{figure}

\subsection{Ablation Study of The HASFL Framework}\label{simu_setup}

Fig.~\ref{fig:cifar_cut_aba} illustrates the impact of BS on training performance for the CIFAR-10 dataset. The proposed HABS scheme achieves faster model convergence and higher accuracy than fixed BS benchmarks, where all edge devices utilize the same BS ($b = $8, 16, and 32). Specifically, HABS achieves at least 0.9$\%$ and 1$\%$ accuracy improvements and reduces the convergence time by a factor of 1.4 and 1.6 compared to fixed BS benchmarks under the IID and non-IID settings, respectively. This performance gain primarily stems from the principled design of HABS, which leverages a theoretical convergence bound to determine BS. By dynamically assigning the optimal BS to each edge device based on network resource conditions and training convergence, HABS effectively mitigates the straggler effect to expedite model training under heterogeneous edge systems. Hence, the effectiveness of  HABS is validated.

Fig.~\ref{fig:I_15_different_cut} presents the impact of MS on training performance for the CIFAR-10 dataset. Both converged time and accuracy degrade as the model split point $L_c$ becomes deeper, consistent with the derived convergence bound in Eqn.~\eqref{convergence_bound}. This performance decline occurs because a deeper model split point places fewer layers on the server-side sub-model, reducing the proportion of the global model that is aggregated at each round. Consequently, the overall update frequency of the global model decreases, slowing model convergence and impairing the model's generalization. Furthermore, the impact of MS is more significant under the non-IID than IID setting, as the heterogeneity of local datasets exacerbates the sensitivity of MS under the non-IID setting. The comparisons between HAMS and the other three benchmarks reveal that HAMS can expedite model convergence and improve training performance. This demonstrates the effectiveness of the HAMS design in adapting to heterogeneous network resources.

\section{Conclusions}\label{conclu}

In this paper, we have proposed a heterogeneity-aware SFL framework, named HASFL, to minimize the training latency of SFL under heterogeneous resource-constrained edge computing systems. We have observed that both BS and MS significantly affect training latency, control of which can mitigate the straggler issue to a great extent. To guide the system optimization, we have first derived the convergence bound of HASFL to quantify the impact of BS and MS on training convergence. Then, we have formulated a joint optimization problem for BS and MS based on the derived convergence bound, and developed efficient algorithms to solve it. Extensive experiments on various datasets have validated the effectiveness of HASFL, demonstrating at least 4$\times$ faster convergence and 1\% higher training and testing accuracy compared to state-of-the-art benchmarks.

\ifCLASSOPTIONcaptionsoff
  \newpage
\fi

% trigger a \newpage just before the given reference
% number - used to balance the columns on the last page
% adjust value as needed - may need to be readjusted if
% the document is modified later
%\IEEEtriggeratref{8}
% The "triggered" command can be changed if desired:
%\IEEEtriggercmd{\enlargethispage{-5in}}

% references section

% can use a bibliography generated by BibTeX as a .bbl file
% BibTeX documentation can be easily obtained at:
% http://mirror.ctan.org/biblio/bibtex/contrib/doc/
% The IEEEtran BibTeX style support page is at:
% http://www.michaelshell.org/tex/ieeetran/bibtex/
%\bibliographystyle{IEEEtran}
% argument is your BibTeX string definitions and bibliography database(s)
%\bibliography{IEEEabrv,../bib/paper}
%
% <OR> manually copy in the resultant .bbl file
% set second argument of \begin to the number of references
% (used to reserve space for the reference number labels box)
% \begin{thebibliography}{1}

% \bibitem{IEEEhowto:kopka}
% H.~Kopka and P.~W. Daly, \emph{A Guide to \LaTeX}, 3rd~ed.\hskip 1em plus
%   0.5em minus 0.4em\relax Harlow, England: Addison-Wesley, 1999.

% \end{thebibliography}

\bibliographystyle{IEEEtran}
\bibliography{reference}

% Generated by IEEEtran.bst, version: 1.14 (2015/08/26)
\begin{thebibliography}{10}
\providecommand{\url}[1]{#1}
\csname url@samestyle\endcsname
\providecommand{\newblock}{\relax}
\providecommand{\bibinfo}[2]{#2}
\providecommand{\BIBentrySTDinterwordspacing}{\spaceskip=0pt\relax}
\providecommand{\BIBentryALTinterwordstretchfactor}{4}
\providecommand{\BIBentryALTinterwordspacing}{\spaceskip=\fontdimen2\font plus
\BIBentryALTinterwordstretchfactor\fontdimen3\font minus \fontdimen4\font\relax}
\providecommand{\BIBforeignlanguage}[2]{{%
\expandafter\ifx\csname l@#1\endcsname\relax
\typeout{** WARNING: IEEEtran.bst: No hyphenation pattern has been}%
\typeout{** loaded for the language `#1'. Using the pattern for}%
\typeout{** the default language instead.}%
\else
\language=\csname l@#1\endcsname
\fi
#2}}
\providecommand{\BIBdecl}{\relax}
\BIBdecl

\bibitem{liu2023optimizing}
X.~Liu, Z.~Yan, Y.~Zhou, D.~Wu, X.~Chen, and J.~H. Wang, ``{Optimizing Parameter Mixing Under Constrained Communications in Parallel Federated Learning},'' \emph{{IEEE/ACM} Trans. Networking}, vol.~31, no.~6, pp. 2640--2652, Dec. 2023.

\bibitem{lin2025hsplitlora}
Z.~Lin, Y.~Zhang, Z.~Chen, Z.~Fang, X.~Chen, P.~Vepakomma, W.~Ni, J.~Luo, and Y.~Gao, ``{HSplitLoRA: A Heterogeneous Split Parameter-Efficient Fine-Tuning Framework for Large Language Models},'' \emph{arXiv preprint arXiv:2505.02795}, 2025.

\bibitem{deng2022actions}
Y.~Deng, X.~Chen, G.~Zhu, Y.~Fang, Z.~Chen, and X.~Deng, ``{Actions at the Edge: Jointly Optimizing the Resources in Multi-access Edge Computing},'' \emph{{IEEE} Wireless Commun.}, vol.~29, no.~2, pp. 192--198, Apr. 2022.

\bibitem{chen2022federated}
X.~Chen, G.~Zhu, Y.~Deng, and Y.~Fang, ``{Federated Learning over Multihop Wireless Networks with In-network Aggregation},'' \emph{IEEE Trans. Wireless Commun.}, vol.~21, no.~6, pp. 4622--4634, Apr. 2022.

\bibitem{mcmahan2017communication}
B.~McMahan, E.~Moore, D.~Ramage, S.~Hampson, and B.~A. Arcas, ``{Communication-efficient Learning of Deep Networks From Decentralized Data},'' in \emph{Proc. AISTATS}, Apr. 2017.

\bibitem{konevcny2016federated}
J.~Kone{\v{c}}n{\`y}, H.~B. McMahan, F.~X. Yu, P.~Richt{\'a}rik, A.~T. Suresh, and D.~Bacon, ``{Federated Learning: Strategies for Improving Communication Efficiency},'' \emph{arXiv preprint arXiv:1610.05492}, Oct. 2016.

\bibitem{lin2024fedsn}
Z.~Lin, Z.~Chen, Z.~Fang, X.~Chen, X.~Wang, and Y.~Gao, ``{FedSN: A Federated Learning Framework over Heterogeneous LEO Satellite Networks},'' \emph{{IEEE Trans. Mobile Comput.}}, 2024.

\bibitem{hu2024accelerating}
M.~Hu, J.~Zhang, X.~Wang, S.~Liu, and Z.~Lin, ``{Accelerating Federated Learning with Model Segmentation for Edge Networks},'' \emph{{IEEE Trans. Green Commun. Netw.}}, 2024.

\bibitem{fang2024automated}
Z.~Fang, Z.~Lin, Z.~Chen, X.~Chen, Y.~Gao, and Y.~Fang, ``{Automated Federated Pipeline for Parameter-efficient Fine-tuning of Large Language Models},'' \emph{arXiv preprint arXiv:2404.06448}, 2024.

\bibitem{lin2025leo}
Z.~Lin, Y.~Zhang, Z.~Chen, Z.~Fang, C.~Wu, X.~Chen, Y.~Gao, and J.~Luo, ``{Leo-Split: A Semi-Supervised Split Learning Framework over LEO Satellite Networks},'' \emph{{arXiv preprint arXiv:2501.01293}}, 2025.

\bibitem{team2023gemini}
G.~Team, R.~Anil, S.~Borgeaud, J.-B. Alayrac, J.~Yu, R.~Soricut, J.~Schalkwyk, A.~M. Dai, A.~Hauth, K.~Millican \emph{et~al.}, ``{Gemini: A Family of Highly Capable Multimodal Models},'' \emph{arXiv preprint arXiv:2312.11805}, Dec. 2023.

\bibitem{vepakomma2018split}
P.~Vepakomma, O.~Gupta, T.~Swedish, and R.~Raskar, ``{Split Learning for Health: Distributed Deep Learning without Sharing Raw Patient Data},'' \emph{arXiv preprint arXiv:1812.00564}, Dec. 2018.

\bibitem{wei2025pipelining}
W.~Wei, Z.~Lin, T.~Li, X.~Li, and X.~Chen, ``{Pipelining Split Learning in Multi-hop Edge Networks},'' \emph{arXiv preprint arXiv:2505.04368}, 2025.

\bibitem{lin2023pushing}
Z.~Lin, G.~Qu, Q.~Chen, X.~Chen, Z.~Chen, and K.~Huang, ``{Pushing Large Language Models to the 6G Edge: Vision, Challenges, and Opportunities},'' \emph{{IEEE Commun. Mag.}}, 2023.

\bibitem{lyu2023optimal}
S.~Lyu, Z.~Lin, G.~Qu, X.~Chen, X.~Huang, and P.~Li, ``{Optimal Resource Allocation for U-Shaped Parallel Split Learning},'' in \emph{{Proc. IEEE Globecom Wkshps}}, 2023, pp. 197--202.

\bibitem{thapa2022splitfed}
C.~Thapa, P.~C.~M. Arachchige, S.~Camtepe, and L.~Sun, ``{Splitfed: When Federated Learning Meets Split Learning},'' in \emph{Proc. AAAI}, Feb. 2022.

\bibitem{chen2021distributed}
M.~Chen, D.~G{\"u}nd{\"u}z, K.~Huang, W.~Saad, M.~Bennis, A.~V. Feljan, and H.~V. Poor, ``{Distributed Learning in Wireless Networks: Recent Progress and Future Challenges},'' \emph{{IEEE} J. Sel. Areas Commun.}, Dec. 2021.

\bibitem{xiao2023time}
Y.~Xiao, X.~Zhang, Y.~Li, G.~Shi, M.~Krunz, D.~N. Nguyen, and D.~T. Hoang, ``{Time-sensitive Learning For Heterogeneous Federated Edge Intelligence},'' \emph{{IEEE} Trans. Mobile Comput.}, vol.~23, no.~2, pp. 1382--1400, 2023.

\bibitem{lee2017speeding}
K.~Lee, M.~Lam, R.~Pedarsani, D.~Papailiopoulos, and K.~Ramchandran, ``{Speeding Up Distributed Machine Learning Using Codes},'' \emph{{IEEE} Trans. Inf. Theory}, vol.~64, no.~3, pp. 1514--1529, Mar. 2017.

\bibitem{wu2023split}
W.~Wu, M.~Li, K.~Qu, C.~Zhou, X.~Shen, W.~Zhuang, X.~Li, and W.~Shi, ``{Split learning over Wireless Networks: Parallel Design and Resource Management},'' \emph{{IEEE} J. Sel. Areas Commun.}, vol.~41, no.~4, pp. 1051--1066, Feb. 2023.

\bibitem{lin2024adaptsfl}
Z.~Lin, G.~Qu, W.~Wei, X.~Chen, and K.~K. Leung, ``{AdaptsFL: Adaptive Split Federated Learning in Resource-Constrained Edge Networks},'' \emph{{arXiv preprint arXiv:2403.13101}}, 2024.

\bibitem{lee2024game}
J.~Lee, J.~Cho, W.~Lee, M.~Seif, and H.~V. Poor, ``{Game-Theoretic Joint Incentive and Cut Layer Selection Mechanism in Split Federated Learning},'' \emph{arXiv preprint arXiv:2412.07813}, 2024.

\bibitem{lin2024hierarchical}
Z.~Lin, W.~Wei, Z.~Chen, C.-T. Lam, X.~Chen, Y.~Gao, and J.~Luo, ``{Hierarchical Split Federated Learning: Convergence Analysis and System Optimization},'' \emph{{IEEE} Trans. Mobile Comput.}, Apr. 2025.

\bibitem{wang2019adaptive}
S.~Wang, T.~Tuor, T.~Salonidis, K.~K. Leung, C.~Makaya, T.~He, and K.~Chan, ``{Adaptive Federated Learning in Resource Constrained Edge Computing Systems},'' \emph{{IEEE} J. Sel. Areas Commun.}, vol.~37, no.~6, pp. 1205--1221, Mar. 2019.

\bibitem{liu2020adaptive}
S.~Liu, G.~Yu, R.~Yin, J.~Yuan, and F.~Qu, ``{Adaptive Batchsize Selection and Gradient Compression For Wireless Federated Learning},'' in \emph{Proc. GLOBECOM}, 2020.

\bibitem{ma2021adaptive}
Z.~Ma, Y.~Xu, H.~Xu, Z.~Meng, L.~Huang, and Y.~Xue, ``{Adaptive Batch Size For Federated Learning in Resource-constrained Edge Computing},'' \emph{{IEEE} Trans. Mobile Comput.}, vol.~22, no.~1, pp. 37--53, Jan. 2021.

\bibitem{liu2023dynamite}
W.~Liu, X.~Zhang, J.~Duan, C.~Joe-Wong, Z.~Zhou, and X.~Chen, ``{DYNAMITE: Dynamic Interplay of Mini-batch Size and Aggregation Frequency For Federated Learning with Static and Streaming Datasets},'' \emph{{IEEE} Trans. Mobile Comput.}, vol.~23, no.~7, pp. 7664--7679, 2023.

\bibitem{shi2022talk}
D.~Shi, L.~Li, M.~Wu, M.~Shu, R.~Yu, M.~Pan, and Z.~Han, ``{To Talk or to Work: Dynamic Batch Sizes Assisted Time Efficient Federated Learning over Future Mobile Edge Devices},'' \emph{IEEE Trans. Wireless Commun.}, vol.~21, no.~12, pp. 11\,038--11\,050, Dec. 2022.

\bibitem{2018dont}
S.~L. Smith, P.-J. Kindermans, and Q.~V. Le, ``{Don't Decay the Learning Rate, Increase the Batch Size},'' in \emph{{Proc. ICLR}}, Feb. 2018.

\bibitem{yu2019computation}
H.~Yu and R.~Jin, ``{On the Computation and Communication Complexity of Parallel SGD with Dynamic Batch Sizes for Stochastic Non-convex Optimization},'' in \emph{Proc. ICML}, Jun. 2019.

\bibitem{lin2023efficient}
Z.~Lin, G.~Zhu, Y.~Deng, X.~Chen, Y.~Gao, K.~Huang, and Y.~Fang, ``{Efficient Parallel Split Learning over Resource-constrained Wireless Edge Networks},'' \emph{{IEEE} Trans. Mobile Comput.}, 2024.

\bibitem{han2024convergence}
P.~Han, C.~Huang, G.~Tian, M.~Tang, and X.~Liu, ``{Convergence Analysis of Split Federated Learning on Heterogeneous Data},'' in \emph{Proc. NIPS}, 2025.

\bibitem{pasquini2021unleashing}
D.~Pasquini, G.~Ateniese, and M.~Bernaschi, ``{Unleashing the Tiger: Inference Attacks on Split Learning},'' in \emph{Proc. CCS}, Nov. 2021.

\bibitem{Karimired2018}
{Karimireddy, Sai Praneeth and Kale, Satyen and Mohri, Mehryar and Reddi, Sashank and Stich, Sebastian and Suresh, Ananda Theertha}, ``{On the Convergence Properties of A K-step Averaging Stochastic Gradient Descent Algorithm for Nonconvex Optimization},'' in \emph{Proc. IJCAI}, Jul. 2018.

\bibitem{yu2019linear}
H.~Yu, R.~Jin, and S.~Yang, ``{On the Linear Speedup Analysis of Communication Efficient Momentum SGD For Distributed Non-convex Optimization},'' in \emph{Proc. ICML}, Jun. 2019, pp. 7184--7193.

\bibitem{karimireddy2020scaffold}
S.~P. Karimireddy, S.~Kale, M.~Mohri, S.~Reddi, S.~Stich, and A.~T. Suresh, ``{Scaffold: Stochastic Controlled Averaging for Federated Learning},'' in \emph{Proc. ICLR}, Apr. 2020.

\bibitem{zhang2012communication}
Y.~Zhang, M.~J. Wainwright, and J.~C. Duchi, ``{Communication-efficient Algorithms for Statistical Optimization},'' in \emph{Proc. NIPS}, Jun. 2012.

\bibitem{lian2017can}
X.~Lian, C.~Zhang, H.~Zhang, C.-J. Hsieh, W.~Zhang, and J.~Liu, ``{Can Decentralized Algorithms Outperform Centralized Algorithms? A Case Study for Decentralized Parallel Stochastic Gradient Descent},'' in \emph{Proc. NIPS}, Jun. 2017.

\bibitem{mania2017perturbed}
H.~Mania, X.~Pan, D.~Papailiopoulos, B.~Recht, K.~Ramchandran, and M.~I. Jordan, ``{Perturbed Iterate Analysis for Asynchronous Stochastic Optimization},'' \emph{SIAM J. Optim.}, vol.~27, no.~4, pp. 2202--2229, Jan. 2017.

\bibitem{lin2018don}
T.~Lin, S.~U. Stich, K.~K. Patel, and M.~Jaggi, ``{Don't Use Large Mini-batches, Use Local SGD},'' in \emph{Proc. ICLR}, Dec. 2019.

\bibitem{yi2024qsfl}
L.~Yi, G.~Wang, X.~Wang, and X.~Liu, ``{QSFL: Two-Level Communication-Efficient Federated Learning on Mobile Edge Devices},'' \emph{IEEE Trans. Serv. Comput.}, 2024.

\bibitem{shi2020joint}
W.~Shi, S.~Zhou, Z.~Niu, M.~Jiang, and L.~Geng, ``{Joint Device Scheduling and Resource Allocation for Latency Constrained Wireless Federated Learning},'' \emph{{IEEE} Trans. Wireless Commun.}, vol.~20, no.~1, pp. 453--467, Sep. 2020.

\bibitem{xia2021federated}
W.~Xia, W.~Wen, K.-K. Wong, T.~Q. Quek, J.~Zhang, and H.~Zhu, ``{Federated-learning-based Client Scheduling for Low-latency Wireless Communications},'' \emph{{IEEE} Wireless Commun.}, vol.~28, no.~2, pp. 32--38, Apr. 2021.

\bibitem{yeung2021horus}
G.~Yeung, D.~Borowiec, R.~Yang, A.~Friday, R.~Harper, and P.~Garraghan, ``{Horus: Interference-aware and Prediction-based Scheduling in Deep Learning Systems},'' \emph{{IEEE} Trans. Parallel Distrib. Syst.}, vol.~33, no.~1, pp. 88--100, May. 2021.

\bibitem{geng2019pipeline}
Z.~Geng, C.~Wei, Y.~Han, Q.~Wei, and Z.~Ouyang, ``{Pipeline Network Simulation Calculation based on Improved Newton Jacobian Iterative Method},'' in \emph{Proc. AICS}, Jul. 2019.

\bibitem{dinkelbach1967nonlinear}
W.~Dinkelbach, ``{On Nonlinear Fractional Programming},'' \emph{Manage. Sci.}, vol.~13, no.~7, pp. 492--498, Mar. 1967.

\bibitem{yue2013reformulation}
D.~Yue and F.~You, ``{A Reformulation-linearization Method for the Global Optimization of Large-scale Mixed-Integer Linear Fractional Programming Problems and Cyclic Scheduling aApplication},'' in \emph{Proc. ACC}, Jun. 2013.

\bibitem{rodenas1999extensions}
R.~G. R{\'o}denas, M.~L. L{\'o}pez, and D.~Verastegui, ``{Extensions of Dinkelbach's Algorithm for Solving Non-linear Fractional Programming Problems},'' \emph{Top}, vol.~7, pp. 33--70, Jun. 1999.

\bibitem{tseng2001convergence}
P.~Tseng, ``{Convergence of A Block Coordinate Descent Method for Nondifferentiable Minimization},'' \emph{J Optim Theory Appl}, vol. 109, pp. 475--494, Jun. 2001.

\bibitem{krizhevsky2009learning}
A.~Krizhevsky, G.~Hinton \emph{et~al.}, ``{Learning Multiple Layers of Features From Tiny Images},'' \emph{Tech. Rep.}, Apr. 2009.

\bibitem{zhu2019broadband}
G.~Zhu, Y.~Wang, and K.~Huang, ``Broadband analog aggregation for low-latency federated edge learning,'' \emph{{IEEE} Trans. Wireless Commun.}, vol.~19, no.~1, pp. 491--506, Oct. 2019.

\bibitem{yang2020energy}
Z.~Yang, M.~Chen, W.~Saad, C.~S. Hong, and M.~Shikh-Bahaei, ``{Energy Efficient Federated Learning over Wireless Communication Networks},'' \emph{{IEEE} Trans. Wireless Commun.}, vol.~20, no.~3, pp. 1935--1949, Nov. 2020.

\bibitem{simonyan2014very}
K.~Simonyan and A.~Zisserman, ``{Very Deep Convolutional Networks for Large-scale Image Recognition},'' in \emph{Proc. ICLR}, 2015.

\bibitem{he2016deep}
K.~He, X.~Zhang, S.~Ren, and J.~Sun, ``{Deep Residual Learning for Image Recognition},'' in \emph{Proc. CVPR}, Jun. 2016, pp. 770--778.

\bibitem{wang2023coopfl}
Z.~Wang, H.~Xu, Y.~Xu, Z.~Jiang, and J.~Liu, ``{CoopFL: Accelerating Federated Learning with DNN Partitioning and Offloading in Heterogeneous Edge Computing},'' \emph{Comput. Netw.}, vol. 220, p. 109490, Jan. 2023.

\end{thebibliography}
\end{document}